\icmltitlerunning{On the Convergence of Nesterov's Accelerated Gradient Method}
\newcommand{\itr}[2]{ {#1}_{#2} }
\newcommand{\noise}{\zeta}
\newcommand{\R}{\mathbb{R}}
\newcommand{\E}{\mathbb{E}}
\newcommand{\NAG}{\textsc{ag}\xspace}
\newcommand{\NASG}{\textsc{asg}\xspace}
\newcommand{\QHM}{\textsc{qhm}\xspace}
\newcommand{\GD}{\textsc{gd}\xspace}
\newcommand{\SGD}{\textsc{sgd}\xspace}
\newcommand{\cond}{Q}
\newcommand{\eg}{e.g.}
\newcommand{\ie}{i.e.}
\newcommand{\norm}[1]{\left\lVert#1\right\rVert}
\newcommand{\abs}[1]{\left\lvert#1\right\rvert}
\newcommand{\xstar}{x^\star}
\newcommand{\argmax}{\text{argmax}}
\newcommand{\std}{\sigma}
\newcommand{\defeq}{\coloneqq}
\renewcommand{\Pr}{\mathbb{P}}
\newcommand{\T}{\top}
\newcommand{\order}[1]{\mathcal{O}\left( #1 \right)}
\newcommand{\diag}[1]{\operatorname{diag}( #1 )}
\newcommand{\radius}[1]{\rho\left( #1 \right)}
\newtheorem{theorem}{Theorem}
\newtheorem{corollary}{Corollary}[theorem]
\newtheorem{lemma}{Lemma}
\begin{document}

\twocolumn[
\icmltitle{On the Convergence of Nesterov's Accelerated Gradient Method \\in Stochastic Settings}

\begin{icmlauthorlist}
\icmlauthor{Mahmoud Assran}{mcgill,fb,mila}
\icmlauthor{Michael Rabbat}{fb,mila}
\end{icmlauthorlist}

\icmlaffiliation{mcgill}{Department of Electrical \& Computer Engineering, McGill University, Montreal, QC, Canada}
\icmlaffiliation{fb}{Facebook AI Research, Montreal, QC, Canada}
\icmlaffiliation{mila}{Mila -- Quebec Artificial Intelligence Institute, Montreal, QC, Canada}

\icmlcorrespondingauthor{Mahmoud Assran}{massran@fb.com}
\icmlcorrespondingauthor{Michael Rabbat}{mikerabbat@fb.com}

\icmlsetsymbol{equal}{*}

\icmlkeywords{Nesterov, Momentum, Accelerated Gradient Method, Stochastic, Stochastic Approximation, Finite-Sum, Optimization, Machine Learning, ICML}
\vskip 0.3in
]
\printAffiliationsAndNotice{}  

\begin{abstract}
We study Nesterov's accelerated gradient method with constant step-size and momentum parameters in the stochastic approximation setting (unbiased gradients with bounded variance) and the finite-sum setting (where randomness is due to sampling mini-batches). To build better insight into the behavior of Nesterov's method in stochastic settings, we focus throughout on objectives that are smooth, strongly-convex, and twice continuously differentiable. In the stochastic approximation setting, Nesterov's method converges to a neighborhood of the optimal point at the same accelerated rate as in the deterministic setting. Perhaps surprisingly, in the finite-sum setting, we prove that Nesterov's method may diverge with the usual choice of step-size and momentum, unless additional conditions on the problem related to conditioning and data coherence are satisfied. Our results shed light as to why Nesterov's method may fail to converge or achieve acceleration in the finite-sum setting. 
\end{abstract}

\section{Introduction}

First-order stochastic methods have become the workhorse of machine learning, where many tasks can be cast as optimization problems,
\begin{equation} \label{eq:obj}
    \mathop{\operatorname{minimize}}_{x \in \R^d} f(x).
\end{equation}
Methods incorporating momentum and acceleration play an important role in the current practice of machine learning~\cite{sutskever2013importance,bottou2018optimization}, where they are commonly used in conjunction with stochastic gradients. However, the theoretical understanding of accelerated methods remains limited when used with stochastic gradients.

This paper studies the \emph{accelerated gradient} (\NAG) method of \citet{nesterov1983method} with constant step-size and momentum parameters. Given an initial point $x_0$, and with $x_{-1} = x_0$, the \NAG method repeats, for $k \ge 0$,
\begin{align}
    \itr{y}{k+1} &= \itr{x}{k} + \beta (\itr{x}{k} - \itr{x}{k-1}) \label{eq:nasg-y} \\
    \itr{x}{k+1} &= \itr{y}{k+1} - \alpha \itr{g}{k+1}, \label{eq:nasg-x}
\end{align}
where $\alpha$ and $\beta$ are the step-size and momentum parameters, respectively, and in the deterministic setting, $\itr{g}{k+1} = \nabla f(\itr{y}{k+1})$. When the momentum parameter $\beta$ is $0$, \NAG simplifies to standard \emph{gradient descent} (\GD).
When $\beta > 0$ it is possible to achieve accelerated rates of convergence for certain combinations of $\alpha$ and $\beta$ in the deterministic setting.

\subsection{Previous Work with Deterministic Gradients}

Suppose that the objective function in~\eqref{eq:obj} is $L$-smooth and $\mu$-strongly-convex. Then $f$ is minimized at a unique point $x^\star$, and we denote its minimum by $f^\star = f(x^\star)$. Let $\cond \defeq \nicefrac{L}{\mu}$ denote the condition number of $f$. In the deterministic setting, where $\itr{g}{k} = \nabla f(\itr{y}{k})$ for all $k$, \GD with constant step-size $\alpha = \nicefrac{2}{(L + \mu)}$ converges at the rate \citep{polyakIntro}
\begin{equation}
    \label{eq:gd-og}
    f(\itr{x}{k}) - f^\star \leq \frac{L}{2} \left(\frac{\cond - 1}{\cond + 1}\right)^{2k} \norm{\itr{x}{0} - \xstar}^2.
\end{equation}
The \NAG method with constant step-size $\alpha=\nicefrac{1}{L}$ and momentum parameter $\beta=\frac{\sqrt{\cond} - 1}{\sqrt{\cond} + 1}$ converges at the rate \citep{nesterov2004introductory}
\begin{equation}
    \label{eq:nag-og}
    f(\itr{x}{k}) - f^\star \leq L \left(\frac{\sqrt{\cond} - 1}{\sqrt{\cond}}\right)^k \norm{\itr{x}{0} - \xstar}^2.
\end{equation}
The rate in \eqref{eq:nag-og} matches (up to constants) the tightest-known worst-case lower bound achievable by any first-order black-box method for $\mu$-strongly-convex and $L$-smooth objectives:
\begin{equation}
    \label{eq:lower-bound}
    f(\itr{x}{k}) - f^\star
    \geq
    \frac{\mu}{2} \left( \frac{\sqrt{\cond} - 1}{\sqrt{\cond}+1} \right)^{2k}
    \norm{\itr{x}{0} - \xstar}^2.
\end{equation}
The lower bound~\eqref{eq:lower-bound} is proved in~\citet{nesterov2004introductory} in the infinite dimensional setting under the assumption $\cond > 1$.
Accordingly, Nesterov's Accelerated Gradient method is considered optimal in the sense that the convergence rate in~\eqref{eq:nag-og} depends on $\sqrt{\cond}$ rather than $Q$.

The proof of \eqref{eq:nag-og} presented in \citet{nesterov2004introductory} uses the method of estimate sequences. Several works have set out to develop better intuition for how the \NAG method achieves acceleration though other analysis techniques.

One line of work considers the limit of infinitessimally small step-sizes, obtaining ordinary differential equations (ODEs) that model the trajectory of the \NAG method \cite{su2014differential,defazio2019curved,laborde2019lyapunov}. \citet{allen2014linear} view the \NAG method as an alternating iteration between mirror descent and gradient descent and show sublinear convergence of the \NAG method for smooth convex objectives. 

\citet{lessard2016analysis} and \citet{hu2017dissipativity} frame the \NAG method and other popular first-order optimization methods as linear dynamical systems with feedback and characterize their convergence rate using a control-theoretic stability framework. The framework leads to closed-form rates of convergence for strongly-convex quadratic functions with deterministic gradients. For more general (non-quadratic) deterministic problems, the framework provides a means to numerically certify rates of convergence.

\subsection{Previous Work with Stochastic Gradients}

When Nesterov's method is run with stochastic gradients $\itr{g}{k+1}$, typically satisfying $\E[\itr{g}{k+1}] = \nabla f(\itr{y}{k+1})$, we refer to it as the \emph{accelerated stochastic gradient} (\NASG) method. In this setting, if $\beta = 0$ then \NASG is equivalent to \emph{stochastic gradient descent} (\SGD).

Despite the widespread interest in, and use of, the \NASG method, there are no definitive theoretical convergence guarantees. \citet{wiegerinck1994stochastic} study the \NASG method in an online learning setting and show that optimization can be modelled as a Markov process but do not provide convergence rates.
\citet{yang2016unified} study the \NASG method in the smooth strongly-convex setting, and show an $\mathcal{O}(1/\sqrt{k})$ convergence rate when employed with a diminishing step-size and bounded gradient assumption, but the rates obtained are slower than those for \SGD.

Recent work establishes convergence guarantees for the \NASG method in certain restricted settings. \citet{aybat2019robust} and \citet{kulunchakov2019estimate} consider smooth strongly-convex functions in a stochastic approximation model with gradients that are unbiased and have bounded variance, and they show convergence to a neighborhood when running the method with constant step size and momentum. \citet{can2019accelerated} further establish convergence in Wasserstein distribution under a stochastic approximation model. \citet{laborde2019lyapunov} study a perturbed ODE and show convergence for diminishing step-size. \citet{vaswani2019fast} study the \NASG method with constant step-size and diminishing momentum, and show linear convergence under a strong-growth condition, where the gradient variance vanishes at a stationary point.

Some results are available for other momentum schemes. \citet{loizou2017momentum} study Polyak's heavy-ball momentum method with stochastic gradients for randomized linear problems and show that it converges linearly under an exactness assumption. \citet{gitman2019understanding} characterize the stationary distribution of the Quasi-Hyperbolic Momentum (\QHM) method~\citep{ma2018quasihyperbolic} around the minimizer for strongly-convex quadratic functions with bounded gradients and bounded gradient noise variance.

The lack of general convergence guarantees for existing momentum schemes, such as Polyak's and Nesterov's, have led many authors to develop alternative accelerated methods specifically for use with stochastic gradients~\cite{lan2012optimal,ghadimi2012optimal,ghadimi2013optimal,allen2017katyusha,kidambi2018insufficiency,cohen2018acceleration,kulunchakov2019generic,liu2020accelerating}.

Accelerated first-order methods are also known to be sensitive to inexact gradients when the gradient errors are deterministic (possibly adversarial) and bounded \citep{daspremont2008smooth,devolder2014first}.

\subsection{Contributions}

We provide additional insights into the behavior of Nesterov's accelerated gradient method when run with stochastic gradients by considering two different settings. We first consider the stochastic approximation setting, where the gradients used by the method are unbiased, conditionally independent from iteration to iteration, and have bounded variance. We show that Nesterov's method converges at an accelerated linear rate to a region of the optimal solution for smooth strongly-convex quadratic problems.

Next, we consider the finite-sum setting, where $f(x) = \frac{1}{n}\sum_{i=1}^n f_i(x)$, under the assumption that each term $f_i$ is smooth and strongly-convex, and the only randomness is due to sampling one or a mini-batch of terms at each iteration.
In this setting we prove that, even when all functions $f_i$ are quadratic, Nesterov's \NASG method with the usual choice of step-size and momentum cannot be guaranteed to converge without making additional assumptions on the condition number and data distribution.
When coupled with convergence guarantees in the stochastic approximation setting, this impossibility result illuminates the dichotomy between our understanding of momentum-based methods in the stochastic approximation setting, and practical implementations of these methods in a finite-sum framework.

Our results also shed light as to why Nesterov's method may fail to converge or achieve acceleration in the finite-sum setting, providing further insight into what has previously been reported based on empirical observations. In particular, the bounded-variance assumption does not apply in the finite-sum setting with quadratic objectives.

We also suggest choices of the step-size and momentum parameters under which the \NASG method is guaranteed to converge for any smooth strongly-convex finite-sum, but where accelerated rates of convergence are no longer guaranteed.
Our analysis approach leads to new bounds on the convergence rate of \SGD in the finite-sum setting, under the assumption that each term $f_i$ is smooth, strongly-convex, and twice continuously differentiable.

\section{Preliminaries and Analysis Framework}
\label{sec:preliminaries}

In this section we establish a basic framework for analyzing the \NAG method. Then we specialize it to the stochastic approximation and finite-sum setting settings, respectively, in Sections~\ref{sec:stochastic-approximation} and~\ref{sec:finite-sum}.

Throughout this paper we assume that $f$ is twice-continuously differentiable, $L$-smooth, and $\mu$-strongly convex, with $0 < \mu \le L$; see \eg, \citet{nesterov2004introductory,bubeck2015convex}. 
Examples of typical tasks satisfying these assumptions are $\ell_2$-regularized logistic regression and $\ell_2$-regularized least-squares regression (\ie, ridge regression).
Taken together, these properties imply that the Hessian $\nabla^2 f(x)$ exists, and for all $x \in \R^d$ the eigenvalues of $\nabla^2 f(x)$ lie in the interval $[\mu, L]$. Also, recall that $x^\star$ denotes the unique minimizer of $f$ and $f^\star = f(x^\star)$.

In contrast to all previous work we are aware of, our analysis focuses on the sequence $(y_k)_{k \ge 0}$ generated by the method \eqref{eq:nasg-y}--\eqref{eq:nasg-x}. Let $r_k \defeq y_k - x^\star$ denote the suboptimality of the current iterate, and let $v_k \defeq x_k - x_{k-1}$ denote the velocity.

Substituting the definition of $y_{k+1}$ from \eqref{eq:nasg-y} into \eqref{eq:nasg-x} and rearranging, we obtain
\begin{equation} \label{eq:nasg-v}
v_{k+1} = \beta v_k - \alpha g_{k+1}.
\end{equation}
By using the definition of $v_k$, substituting \eqref{eq:nasg-v} and \eqref{eq:nasg-x} into \eqref{eq:nasg-y}, and rearranging, we also obtain that
\begin{equation} \label{eq:nasg-r}
r_{k+1} = r_k + \beta^2 v_{k-1} - \alpha (1 + \beta) g_k.
\end{equation}
Combining \eqref{eq:nasg-v} and \eqref{eq:nasg-r}, we get the recursion
\begin{equation} \label{eq:recursion-with-g}
\begin{bmatrix} r_{k+1} \\ v_k \end{bmatrix} = \begin{bmatrix} I & \beta^2 I \\ 0 & \beta I \end{bmatrix} \begin{bmatrix} r_k \\ v_{k-1} \end{bmatrix} - \alpha \begin{bmatrix} (1 + \beta) I \\ I \end{bmatrix} g_k.
\end{equation}
Note that $r_1 = x_0 - x^\star$ and $v_0 = 0$ based on the common convention that $x_{-1} = x_0$.

Our analysis below will build on the recursion \eqref{eq:recursion-with-g} and will also make use of the basic fact that if $f : \R^d \rightarrow R$ is twice continuously differentiable then for all $x, y \in \R^d$
\begin{equation} \label{eq:taylor}
    \nabla f(y) = \nabla f(x) + \int_0^1 \nabla^2 f(x + t(y - x)) \mathrm{d}t \; (y - x).
\end{equation}

\section{The Stochastic Approximation Setting}
\label{sec:stochastic-approximation}

Now consider the stochastic approximation setting. We assume, for all $k$, that $g_k$ is a random vector satisfying
\[
\E[g_k] = \nabla f(y_k)
\]
and that there is a finite constant $\sigma^2$ such that
\[
\E\left[ \norm{g_k - \nabla f(y_k)}^2 \right] \le \sigma^2.
\]
Let $\zeta_k = g_k - \nabla f(y_k)$ denote the gradient noise at iteration $k$, and suppose that these gradient noise terms are mutually independent.
Applying \eqref{eq:taylor} with $y = y_k$ and $x = x^\star$, we get that
\[
g_k = H_k r_k + \zeta_k, \quad \text{where} \quad 
H_k = \int_0^1 \nabla^2 f(x^\star + t r_k) \mathrm{d}t.
\]
Using this in \eqref{eq:recursion-with-g}, we find that $r_k$ and $v_k$ evolve according to
\begin{equation} \label{eq:stochapprox-recursion}
    \begin{bmatrix} r_{k+1} \\ v_k \end{bmatrix} = A_k \begin{bmatrix} r_k \\ v_{k-1} \end{bmatrix} - \alpha \begin{bmatrix} (1 + \beta) I \\ I \end{bmatrix} \zeta_k,
\end{equation}
where
\begin{equation} \label{eq:stochapprox-Ak}
    A_k = \begin{bmatrix}
        I - \alpha (1 + \beta) H_k & \beta^2 I \\
        - \alpha H_k & \beta I
    \end{bmatrix}.
\end{equation}
Unrolling the recursion \eqref{eq:stochapprox-recursion}, we get that
\begin{align} 
    \begin{bmatrix} r_{k+1} \\ v_k \end{bmatrix} &= (A_k \cdots A_1) \begin{bmatrix} x_0 - x^\star \\ 0\end{bmatrix} - \alpha \begin{bmatrix}
        (1 + \beta) I \\ I
    \end{bmatrix} \zeta_k \notag \\
    &\quad - \alpha \sum_{j=1}^{k-1} (A_k \cdots A_{j+1}) \begin{bmatrix}
        (1 + \beta)I \\ I
    \end{bmatrix} \zeta_j, \label{eq:stochapprox-unrolled}
\end{align}
from which it is clear that we may expect convergence properties to depend on the matrix products $A_k \cdots A_j$.

\subsection{The quadratic case}

We can explicitly bound the matrix product in the specific case where $f(x) = \frac{1}{2} x^\T H x - b^\T x + c$, for a symmetric matrix $H \in \R^{d \times d}$, and with $b \in \R^d$ and $c \in \R$. In this case, \eqref{eq:stochapprox-unrolled} simplifies to
\begin{equation} \label{eq:stochapprox-quadratic-unrolled}
\begin{bmatrix} r_{k+1} \\ v_k \end{bmatrix} = A^k \begin{bmatrix} x_0 - x^\star \\ 0 \end{bmatrix} - \alpha \sum_{j=1}^k A^{k - j} \begin{bmatrix} (1 + \beta)I \\ I\end{bmatrix} \zeta_j,
\end{equation}
where
\begin{equation} \label{eq:stochapprox-quadratic-A}
    A = \begin{bmatrix}
        I - \alpha (1 + \beta) H & \beta^2 I \\
        - \alpha H & \beta I
    \end{bmatrix}.
\end{equation}
We obtain an error bound by ensuring that the spectral radius $\rho(A)$ of $A$ is less than $1$. In this case we recover the well-known rate for \NAG in the deterministic setting. Let $\Delta_\lambda = (1 + \beta)^2 (1 - \alpha \lambda)^2 - 4 \beta (1 - \alpha \lambda)$ and define
\[
\rho_\lambda(\alpha, \beta) = \begin{cases} \tfrac{1}{2} \abs{(1 + \beta)(1 - \alpha \lambda)} + \tfrac{1}{2} \sqrt{\Delta_\lambda} & \text{ if } \Delta_\lambda \ge 0, \\
\sqrt{\beta(1 - \alpha \lambda)}  & \text{ otherwise.} \end{cases}
\]

\begin{theorem} \label{thm:stochapprox-quadratic}
Let $\rho(\alpha, \beta) = \max \{ \rho_{\mu}(\alpha, \beta), \rho_{L}(\alpha, \beta) \}$. If $\alpha$ and $\beta$ are chosen so that $\rho(\alpha, \beta) < 1$, then for any $\epsilon > 0$, there exists a constant $C_\epsilon$ such that, for all $k$,
\begin{align*}
    \mathbb{E}\left[ \norm{y_{k+1} - x^\star}^2 \right] \le C_\epsilon \bigg(& (\rho(\alpha, \beta) + \epsilon)^{2k} \norm{x_0 - x^\star}^2 \\
    &+ \frac{\alpha^2 ((1 + \beta)^2 + 1)}{1 - \rho(\alpha, \beta)^2} \sigma^2 \bigg) .
\end{align*}
\end{theorem}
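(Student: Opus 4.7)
The plan is to reduce the $2d \times 2d$ matrix $A$ from \eqref{eq:stochapprox-quadratic-A} to $d$ decoupled $2 \times 2$ blocks by diagonalizing $H$. Since $H$ is symmetric with spectrum in $[\mu, L]$, write $H = U \Lambda U^\T$ with $\Lambda = \diag{\lambda_1, \ldots, \lambda_d}$. The orthogonal change of coordinates $r \mapsto U^\T r$, $v \mapsto U^\T v$, followed by a permutation of entries, conjugates $A$ into a block-diagonal matrix whose blocks are
\[
A_\lambda = \begin{bmatrix} 1 - \alpha(1+\beta)\lambda & \beta^2 \\ -\alpha \lambda & \beta \end{bmatrix}, \quad \lambda \in \{\lambda_1, \ldots, \lambda_d\}.
\]
A direct computation yields the characteristic polynomial $t^2 - (1 + \beta)(1 - \alpha\lambda) t + \beta(1 - \alpha\lambda)$, whose two roots have modulus exactly $\rho_\lambda(\alpha, \beta)$ as defined in the statement, in both the real-root and complex-conjugate regimes.

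Next I would show $\radius{A} = \max_{i} \rho_{\lambda_i}(\alpha, \beta) \le \rho(\alpha, \beta)$, which reduces to checking that $\lambda \mapsto \rho_\lambda(\alpha, \beta)$ is maximized on $[\mu, L]$ at one of the endpoints. This is a monotonicity analysis on each of the regions $\{\Delta_\lambda \ge 0\}$ and $\{\Delta_\lambda < 0\}$, combined with continuity at their boundary. Given $\radius{A} < 1$, Gelfand's formula (equivalently, a Jordan normal form argument applied blockwise) furnishes, for every $\epsilon > 0$, a constant $C_\epsilon$ such that $\norm{A^k} \le C_\epsilon (\rho(\alpha, \beta) + \epsilon)^k$ for all $k \ge 0$.

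With this uniform matrix bound, I would extract $r_{k+1}$ from the top block of \eqref{eq:stochapprox-quadratic-unrolled}, writing it as a deterministic term plus a sum of $k$ mutually independent zero-mean noise contributions. Taking expectations, using independence of the $\zeta_j$ and the bound $\E[\norm{\zeta_j}^2] \le \std^2$, the cross terms vanish and one obtains
\[
\E\bigl[\norm{r_{k+1}}^2\bigr] \le \norm{A^k}^2 \norm{x_0 - \xstar}^2 + \alpha^2 \bigl((1+\beta)^2 + 1\bigr) \std^2 \sum_{j=1}^k \norm{A^{k-j}}^2,
\]
after noting that the matrix multiplying $\zeta_j$ in \eqref{eq:stochapprox-quadratic-unrolled} has operator norm $\sqrt{(1+\beta)^2 + 1}$. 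Substituting the Gelfand bound and summing the resulting geometric series yields the stated inequality, after absorbing the ratio $(1 - \rho(\alpha, \beta)^2)/(1 - (\rho(\alpha, \beta) + \epsilon)^2)$ into $C_\epsilon$.

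The main obstacle I expect is the monotonicity step: showing that $\rho_\lambda(\alpha, \beta)$ on $[\mu, L]$ is always maximized at an endpoint requires carefully handling the transition between the real and complex-conjugate regimes of the eigenvalues of $A_\lambda$. A secondary subtlety is that without the $\epsilon$ slack, a polynomial prefactor in $k$ could appear when some $A_{\lambda_i}$ has a repeated eigenvalue (a non-trivial Jordan block); the $\epsilon > 0$ in the statement is precisely what trades this polynomial factor for a slightly inflated geometric base while keeping $C_\epsilon$ finite.
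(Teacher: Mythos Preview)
Your proposal is correct and follows essentially the same route as the paper: block-diagonalize $A$ via the eigendecomposition of $H$ into $2\times 2$ blocks, identify $\rho_\lambda(\alpha,\beta)$ from their characteristic polynomial, invoke Gelfand's formula for the $\norm{A^k}$ bound, and use independence of the $\zeta_j$ to collapse the squared norm into a geometric sum. For the endpoint step you flagged as the main obstacle, the paper sidesteps a direct monotonicity analysis by citing \citet{lessard2016analysis} for the fact that $\lambda \mapsto \rho_\lambda(\alpha,\beta)$ is quasi-convex on $[\mu,L]$, which immediately forces the maximum to occur at $\mu$ or $L$.
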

Theorem~\ref{thm:stochapprox-quadratic} holds with respect to all norms; the constant $C_\epsilon$ depends on $\epsilon$ and the choice of norm.
Theorem~\ref{thm:stochapprox-quadratic} shows that \NASG converges at a linear rate to a neighborhood of the minimizer of $f$ that is proportional to $\sigma^2$. The proof is given in Appendix~\ref{sec:stochapprox-quadratic-proof} of the supplementary material, and we provide numerical experiments in Section~\ref{sub-sec:stochastic-approximation-numerical} to analyze the tightness of the convergence rate and coefficient multiplying $\sigma^2$ in Theorem~\ref{thm:stochapprox-quadratic}. Comparing to \citet{aybat2019robust}, we recover the same rate, despite taking a different approach, and the coefficient multiplying $\sigma^2$ in Theorem~\ref{thm:stochapprox-quadratic} is smaller.
\begin{corollary} \label{cor:stochapprox-nasg}
Suppose that $\alpha = \nicefrac{1}{L}$ and $\beta = \frac{\sqrt{\cond} - 1}{\sqrt{\cond} + 1}$.
Then for and all $k$,
\begin{align*}
\E[f(\itr{y}{k+1})] - f^\star \leq& \frac{L}{2} \left( \frac{\sqrt{\cond} - 1}{\sqrt{\cond}} + \itr{\epsilon}{k} \right)^{2k} \norm{x_0 - x^\star}^2 \\
&+ C_\epsilon \frac{5 \cond^2 + 2 \cond^{3/2} + Q}{2 L (2 \sqrt{\cond} - 1)(\sqrt{\cond} + 1)^2} \std^2,
\end{align*}
where $\itr{\epsilon}{k} \sim (\sqrt[k]{k} - 1)$.
\end{corollary}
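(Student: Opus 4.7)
My plan is to deduce Corollary~\ref{cor:stochapprox-nasg} as a direct instantiation of Theorem~\ref{thm:stochapprox-quadratic} at the specific parameter values $\alpha = \nicefrac{1}{L}$ and $\beta = \tfrac{\sqrt{\cond}-1}{\sqrt{\cond}+1}$, together with a conversion from iterate-error to function-value-error via $L$-smoothness. Since $f$ is $L$-smooth with minimizer $\xstar$ satisfying $\nabla f(\xstar) = 0$, we have $f(y_{k+1}) - f^\star \le \tfrac{L}{2}\norm{y_{k+1}-\xstar}^2$; taking expectations and applying Theorem~\ref{thm:stochapprox-quadratic} reduces the corollary to a computation of $\rho(\alpha,\beta)$ and of the coefficient multiplying $\std^2$.

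The core calculation is evaluating $\rho(\alpha,\beta) = \max\{\rho_\mu(\alpha,\beta),\rho_L(\alpha,\beta)\}$. For $\lambda = L$, note $1-\alpha L = 0$, so $\Delta_L = 0$ and both branches of the definition give $\rho_L(\alpha,\beta) = 0$. For $\lambda = \mu$, we have $1 - \alpha\mu = \tfrac{\cond-1}{\cond}$ and $1+\beta = \tfrac{2\sqrt{\cond}}{\sqrt{\cond}+1}$, so a short algebraic manipulation yields
\[
(1+\beta)^2(1-\alpha\mu) \;=\; \frac{4(\sqrt{\cond}-1)}{\sqrt{\cond}+1} \;=\; 4\beta,
\]
which forces $\Delta_\mu = 0$. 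Hence $\rho_\mu(\alpha,\beta) = \tfrac{1}{2}(1+\beta)(1-\alpha\mu) = \tfrac{\sqrt{\cond}-1}{\sqrt{\cond}}$, and this is $\rho(\alpha,\beta)$.

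Next I would simplify the noise coefficient. Using $\alpha^2 = 1/L^2$, $(1+\beta)^2 + 1 = \tfrac{5\cond + 2\sqrt{\cond}+1}{(\sqrt{\cond}+1)^2}$, and $1-\rho(\alpha,\beta)^2 = \tfrac{2\sqrt{\cond}-1}{\cond}$, the factor $\tfrac{\alpha^2((1+\beta)^2+1)}{1-\rho(\alpha,\beta)^2}$ simplifies to $\tfrac{\cond(5\cond+2\sqrt{\cond}+1)}{L^2(2\sqrt{\cond}-1)(\sqrt{\cond}+1)^2}$. After multiplying by the $L/2$ from smoothness, this matches the stated coefficient $\tfrac{5\cond^2+2\cond^{3/2}+\cond}{2L(2\sqrt{\cond}-1)(\sqrt{\cond}+1)^2}$ (with the additional $C_\epsilon$ carried over from Theorem~\ref{thm:stochapprox-quadratic}).

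Finally, I would handle the discrepancy that Theorem~\ref{thm:stochapprox-quadratic} produces a constant $C_\epsilon$ multiplying the leading term $(\rho+\epsilon)^{2k}$, whereas the corollary displays only $L/2$ multiplying $(\rho+\epsilon_k)^{2k}$. The reconciliation is to absorb the multiplicative constant into a $k$-dependent perturbation: for any fixed $C_\epsilon$, one can write $C_\epsilon\,\rho^{2k} = (\rho + \delta_k)^{2k}$ with $\delta_k = \rho(C_\epsilon^{1/(2k)} - 1) = \Theta(1/k)$, which is dominated by $\sqrt[k]{k} - 1 = \Theta(\log k / k)$ for all sufficiently large $k$. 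This gives the notation $\epsilon_k \sim (\sqrt[k]{k}-1)$ in the corollary. The main obstacle is purely bookkeeping: the $\rho_\mu$ computation and the simplification of the noise coefficient involve several factorizations that must be carried out carefully to match the exact form in the statement, but each step is elementary.
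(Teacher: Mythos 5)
Your instantiation of Theorem~\ref{thm:stochapprox-quadratic}, the computation $\rho_L(\alpha,\beta)=0$ and $\rho_\mu(\alpha,\beta)=\tfrac{\sqrt{\cond}-1}{\sqrt{\cond}}$ via $\Delta_\mu=0$, the simplification of the variance coefficient, and the $L$-smoothness conversion all match the paper's proof and are correct. The gap is in your final step, where you justify the form $\left(\rho+\epsilon_k\right)^{2k}$ of the leading term by ``absorbing'' $C_\epsilon$ into a perturbation $\delta_k=\rho(C_\epsilon^{1/(2k)}-1)$. That argument presupposes a bound of the form $\norm{A^k}^2\le C\,\rho^{2k}$ with $C$ independent of $k$, which does not exist here: Theorem~\ref{thm:stochapprox-quadratic} only provides $C_\epsilon(\rho+\epsilon)^{2k}$ for a \emph{fixed} $\epsilon>0$ (and $C_\epsilon$ may blow up as $\epsilon\to 0$), so your derivation at best yields a base $\rho+\epsilon+o(1)$ that does not converge to $\rho$, whereas the corollary asserts $\epsilon_k\to 0$ at rate $\sim\log k/k$. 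Moreover, the very fact you verified --- $\Delta_\mu=0$ --- means $B(\mu)$ has a repeated eigenvalue with a nontrivial Jordan block, so $\norm{A^k}$ genuinely grows like $k\,\rho^{k-1}$ and a bound $C\rho^k$ is impossible. The paper closes this gap by computing the Jordan form of $B(\mu)$ explicitly, writing out $B(\mu)^k$, and bounding $\norm{B(\mu)^k}_\infty\lesssim k\,\rho^{k-1}$; the linear-in-$k$ factor is precisely the source of $\epsilon_k\sim(\sqrt[k]{k}-1)$, not the constant $C_\epsilon$ (which is why $C_\epsilon$ survives only on the variance term, where the generic bound from the theorem is still used for the sum $\sum_j\norm{A^{k-j}}^2$). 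To repair your proof you would need to add this explicit computation of $B(\mu)^k$ (or an equivalent estimate of $\norm{A^k}$ for a defective matrix) rather than relying on the theorem's generic $C_\epsilon(\rho+\epsilon)^{2k}$ bound.
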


\begin{theorem} \label{thm:stochapprox-sgd}
Let $f$ be $L$-smooth, $\mu$-strongly-convex, and twice continuously-differentiable (not necessarily quadratic). 
Suppose that $\alpha = \nicefrac{2}{\mu + L}$ and $\beta = 0$. Then for all $k$,
\begin{align*}
\E[f(\itr{y}{k+1})] - f^\star \le \frac{L}{2} \left(\frac{Q - 1}{Q + 1}\right)^{2k} \norm{x_0 - x^\star}^2 + \frac{Q \sigma^2}{2 L}.
\end{align*}
\end{theorem}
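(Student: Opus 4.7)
The plan is to specialize the recursion~\eqref{eq:recursion-with-g} with $\beta = 0$: the second component decouples and the iteration collapses to the single equation $\itr{r}{k+1} = \itr{r}{k} - \alpha \itr{g}{k}$, where as before $\itr{r}{k} = \itr{y}{k} - \xstar$. Applying the Taylor identity~\eqref{eq:taylor} at $\itr{y}{k}$ with $\xstar$ and using $\grad{\xstar} = 0$, I can write $\grad{\itr{y}{k}} = H_k \itr{r}{k}$ where $H_k = \int_0^1 \hess{\xstar + t \itr{r}{k}} \mathrm{d}t$ is symmetric with eigenvalues in $[\mu, L]$. Splitting the stochastic gradient as $\itr{g}{k} = H_k \itr{r}{k} + \itr{\noise}{k}$ then gives the single recursion
\begin{equation*}
\itr{r}{k+1} = (I - \alpha H_k)\itr{r}{k} - \alpha \itr{\noise}{k}.
\end{equation*}

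Next, I would expand $\norm{\itr{r}{k+1}}^2$ and take full expectation. The cross term $-2\alpha\, \E[\langle (I - \alpha H_k)\itr{r}{k}, \itr{\noise}{k} \rangle]$ vanishes by the tower property: conditioning on $(\itr{\noise}{1}, \ldots, \itr{\noise}{k-1})$ renders both $\itr{r}{k}$ and $H_k$ deterministic, while mutual independence of the noise sequence forces $\E[\itr{\noise}{k}\mid \itr{\noise}{1},\ldots,\itr{\noise}{k-1}] = 0$. With $\alpha = \nicefrac{2}{(\mu+L)}$, the standard spectral bound on $I - \alpha H_k$ yields operator norm at most $(\cond-1)/(\cond+1)$ pointwise, and the bounded-variance hypothesis gives $\E\norm{\itr{\noise}{k}}^2 \le \std^2$. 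Setting $\rho = ((\cond-1)/(\cond+1))^2$, this produces the scalar contraction
\begin{equation*}
\E\norm{\itr{r}{k+1}}^2 \le \rho\, \E\norm{\itr{r}{k}}^2 + \alpha^2 \std^2.
\end{equation*}

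Unrolling from $\itr{r}{1} = \itr{x}{0} - \xstar$ and summing the geometric series, $\E\norm{\itr{r}{k+1}}^2 \le \rho^k \norm{\itr{x}{0} - \xstar}^2 + \alpha^2 \std^2/(1-\rho)$. A direct substitution of $\alpha^2 = 4/(L+\mu)^2$ and $1-\rho = 4L\mu/(L+\mu)^2$ simplifies the variance coefficient to $1/(L\mu)$. Finally, $L$-smoothness gives $f(\itr{y}{k+1}) - f^\star \le (L/2)\norm{\itr{r}{k+1}}^2$; taking expectation and using $1/\mu = \cond/L$ converts the variance term into $\cond \std^2/(2L)$, matching the stated bound.

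There is no real obstacle here: with $\beta = 0$ the momentum-velocity coupling that complicates the analysis of Nesterov's method disappears, and the argument reduces to a one-line contraction in $\norm{\itr{r}{k}}^2$. The only subtlety is the vanishing of the cross term in the quadratic expansion, which is why the mutual-independence assumption on the noise is explicitly invoked. The result is essentially the classical \SGD rate, but framed in the $y$-iterate so that the analysis composes cleanly with the more delicate $\beta > 0$ arguments used elsewhere in the paper.
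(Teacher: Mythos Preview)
Your proposal is correct and follows essentially the same approach as the paper: both reduce the $\beta=0$ recursion to $\itr{r}{k+1}=(I-\alpha H_k)\itr{r}{k}-\alpha\itr{\noise}{k}$, use the spectral bound $\norm{I-\alpha H_k}\le(\cond-1)/(\cond+1)$, exploit mutual independence of the noise to kill cross terms, sum the resulting geometric series, and finish with $L$-smoothness. The only cosmetic difference is that the paper unrolls the recursion fully before squaring and taking expectations, whereas you derive a one-step scalar contraction $\E\norm{\itr{r}{k+1}}^2\le\rho\,\E\norm{\itr{r}{k}}^2+\alpha^2\std^2$ and then iterate; your ordering is arguably cleaner since it sidesteps having to track the dependence of the random matrices $H_l$ on earlier noise when expanding the square of the unrolled sum.
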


Corollary~\ref{cor:stochapprox-nasg} confirms that, with the standard choice of parameters, \NASG converges at an accelerated rate to a region of the optimizer. Comparing with Theorem~\ref{thm:stochapprox-sgd}, which is proved in Appendix~\ref{sec:proofs-stochapprox-cors}, we see that in the stochastic approximation setting, with bounded variance, \NASG not only converges at a faster rate than \SGD, the factor multiplying $\sigma^2$ also scales more favorably, $\order{\sqrt{Q} \sigma^2}$ for \NASG vs.~$\order{Q \sigma^2}$ for \SGD.

\begin{figure*}[t]
\centering
\subfloat[.5\textwidth][$(\cond = 2)$: Coefficient multiplying $\sigma^2$]{
\includegraphics[width=.25\textwidth]{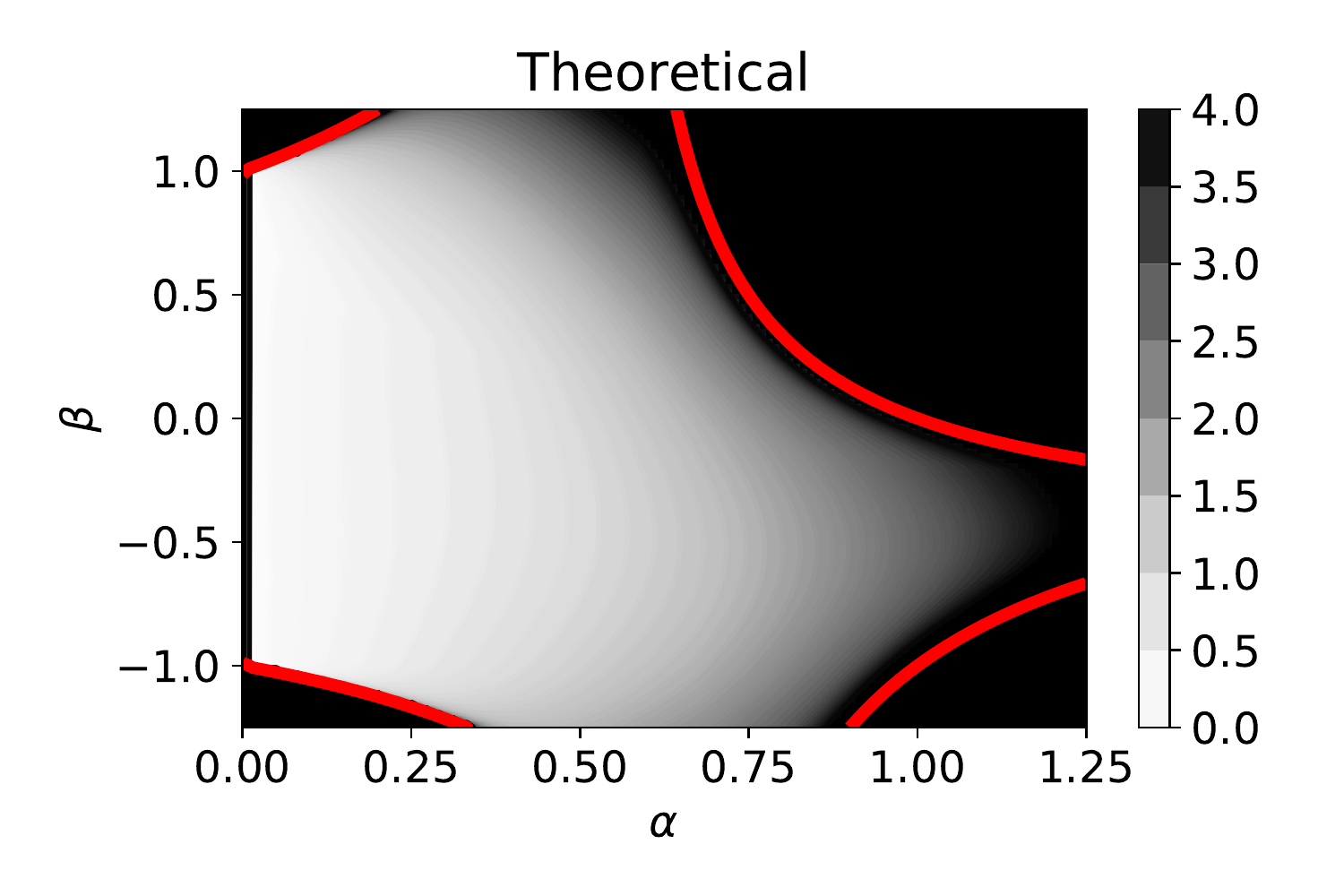}
\includegraphics[width=.25\textwidth]{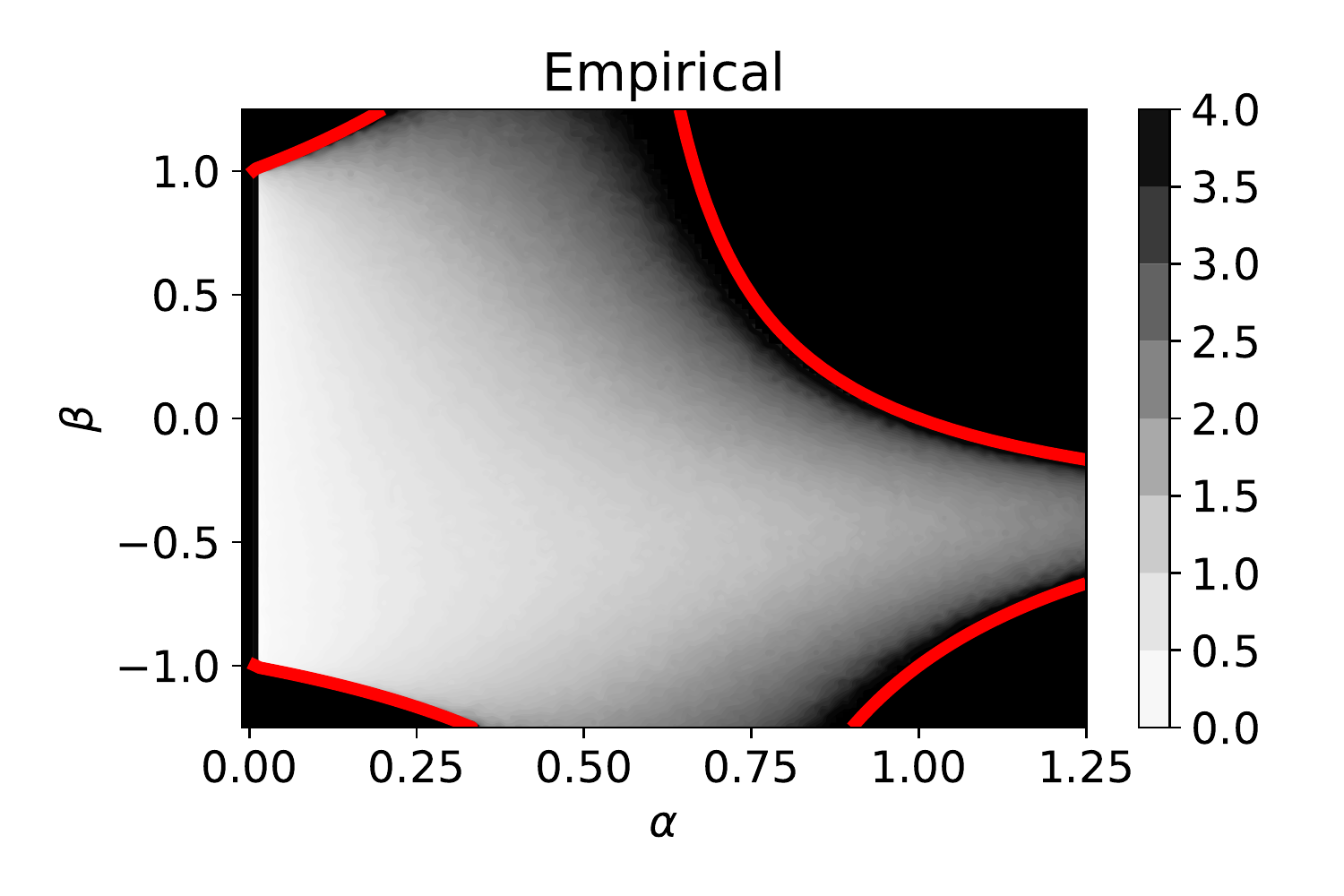}
\label{sub-fig:sa-1}
}
\subfloat[.5\textwidth][$(\cond = 2)$: Convergence rate $\rho(\alpha, \beta)$]{
\includegraphics[width=.25\textwidth]{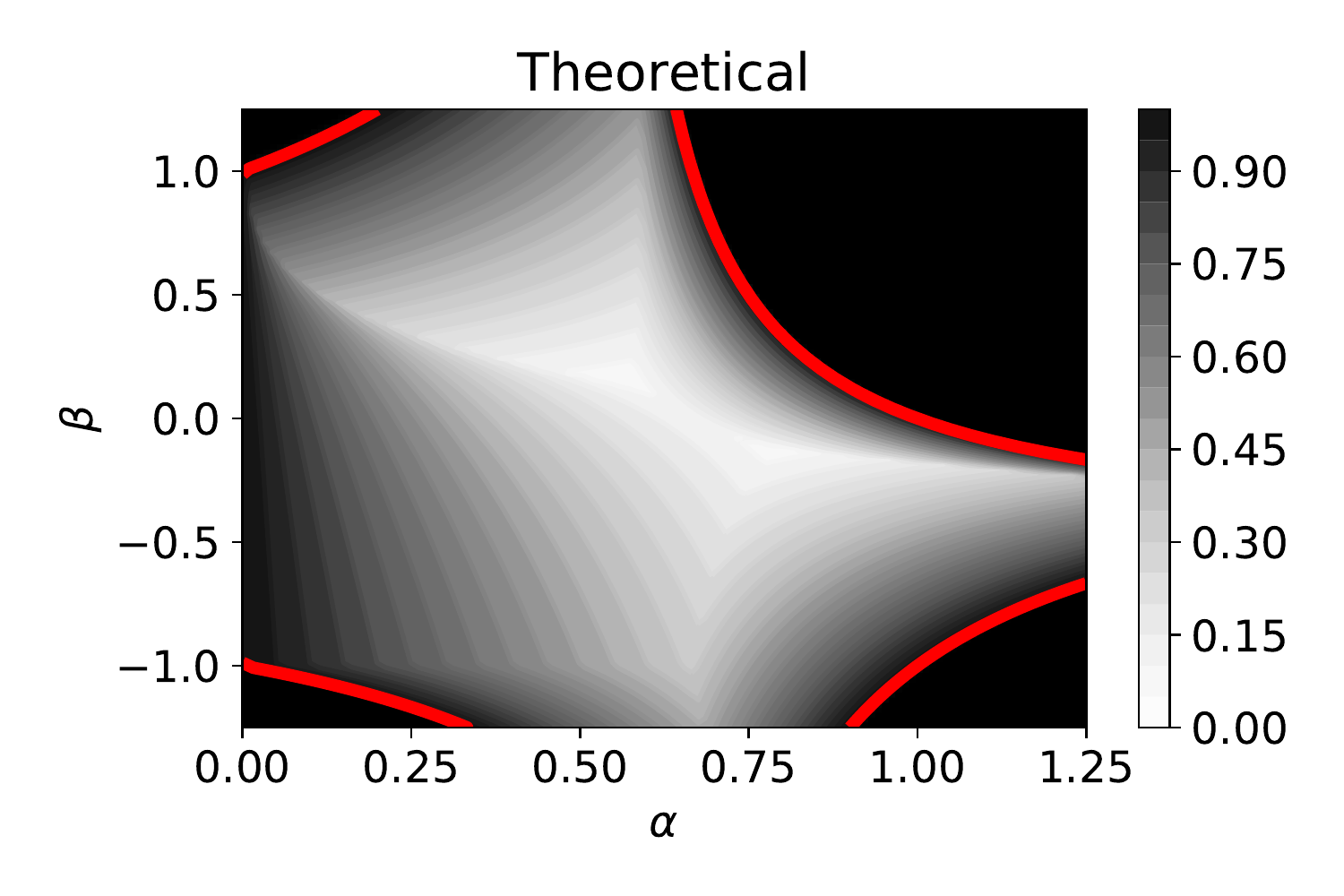}
\includegraphics[width=.25\textwidth]{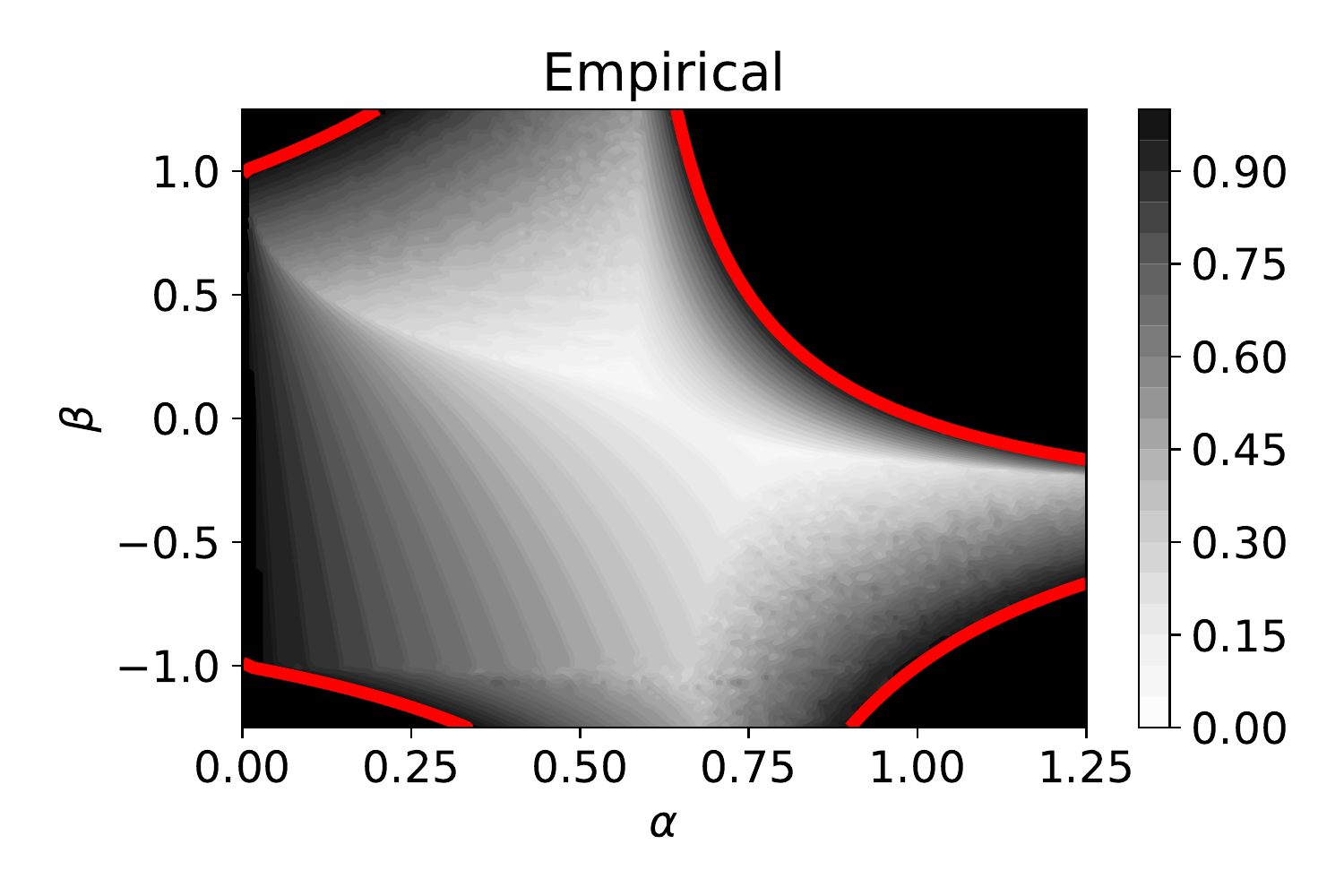}
\label{sub-fig:sa-2}
} \\
\subfloat[.5\textwidth][$(\cond = 8)$: Coefficient multiplying $\sigma^2$]{
\includegraphics[width=.25\textwidth]{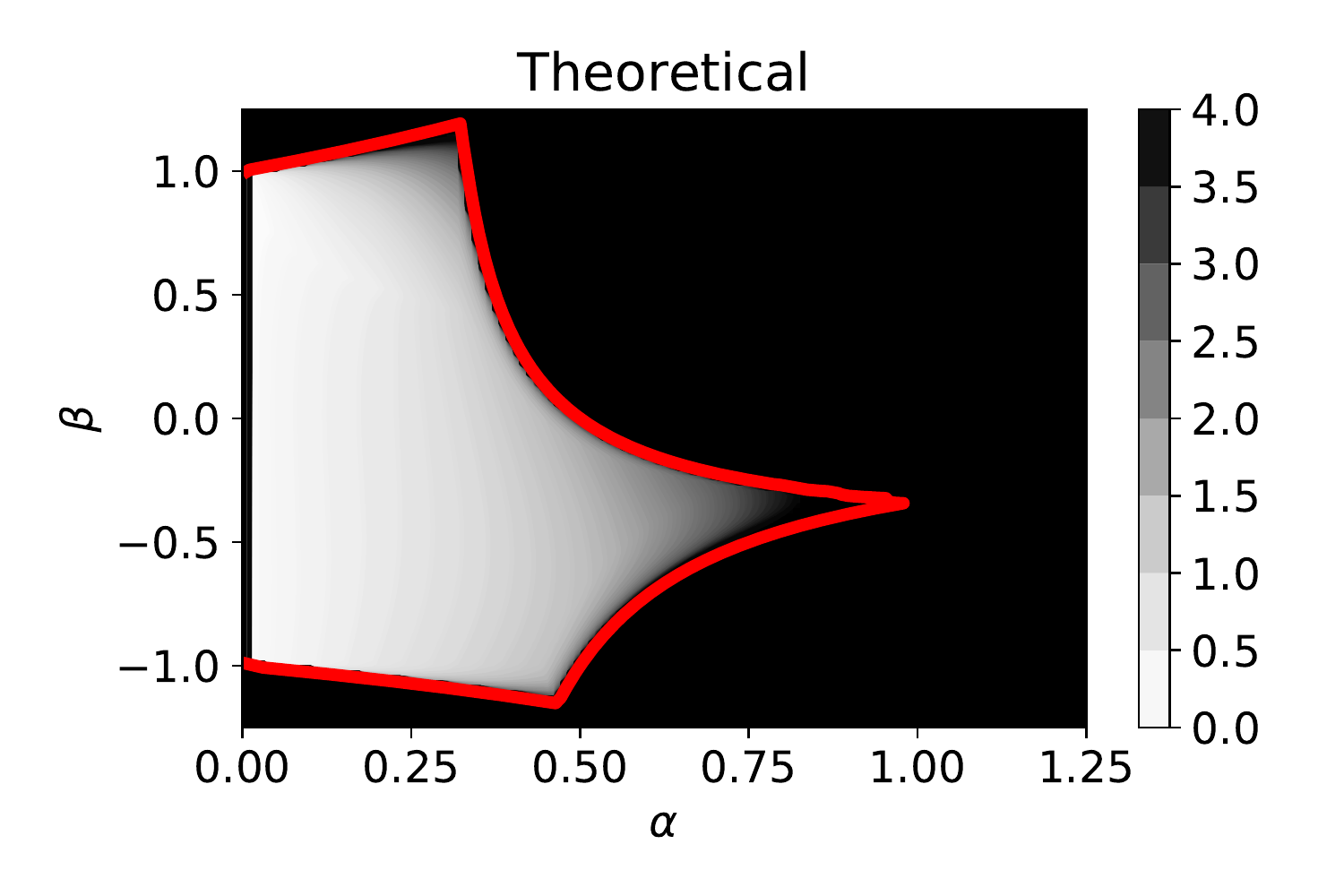}
\includegraphics[width=.25\textwidth]{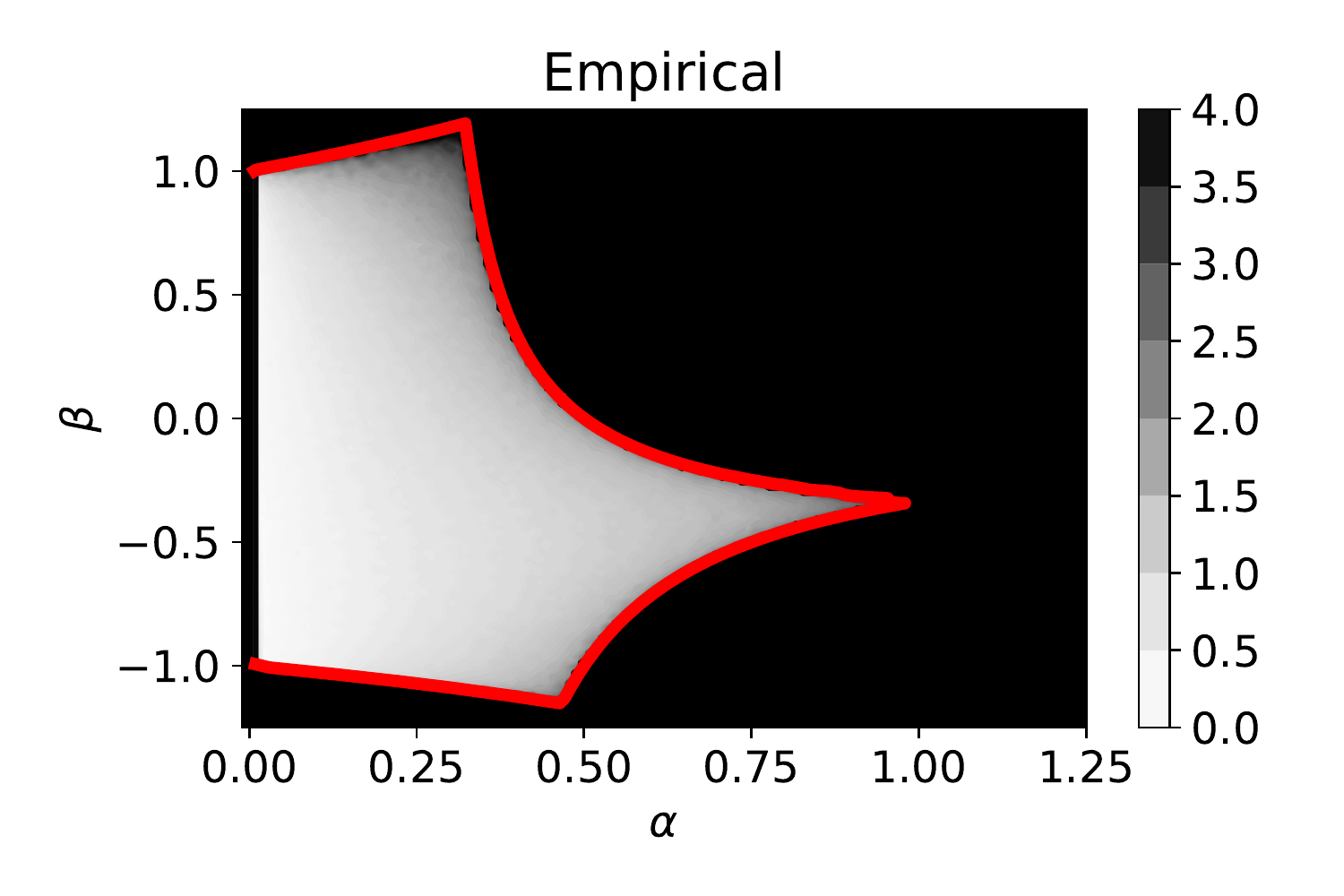}
\label{sub-fig:sa-3}
}
\subfloat[.5\textwidth][$(\cond = 8)$: Convergence rate $\rho(\alpha, \beta)$]{
\includegraphics[width=.25\textwidth]{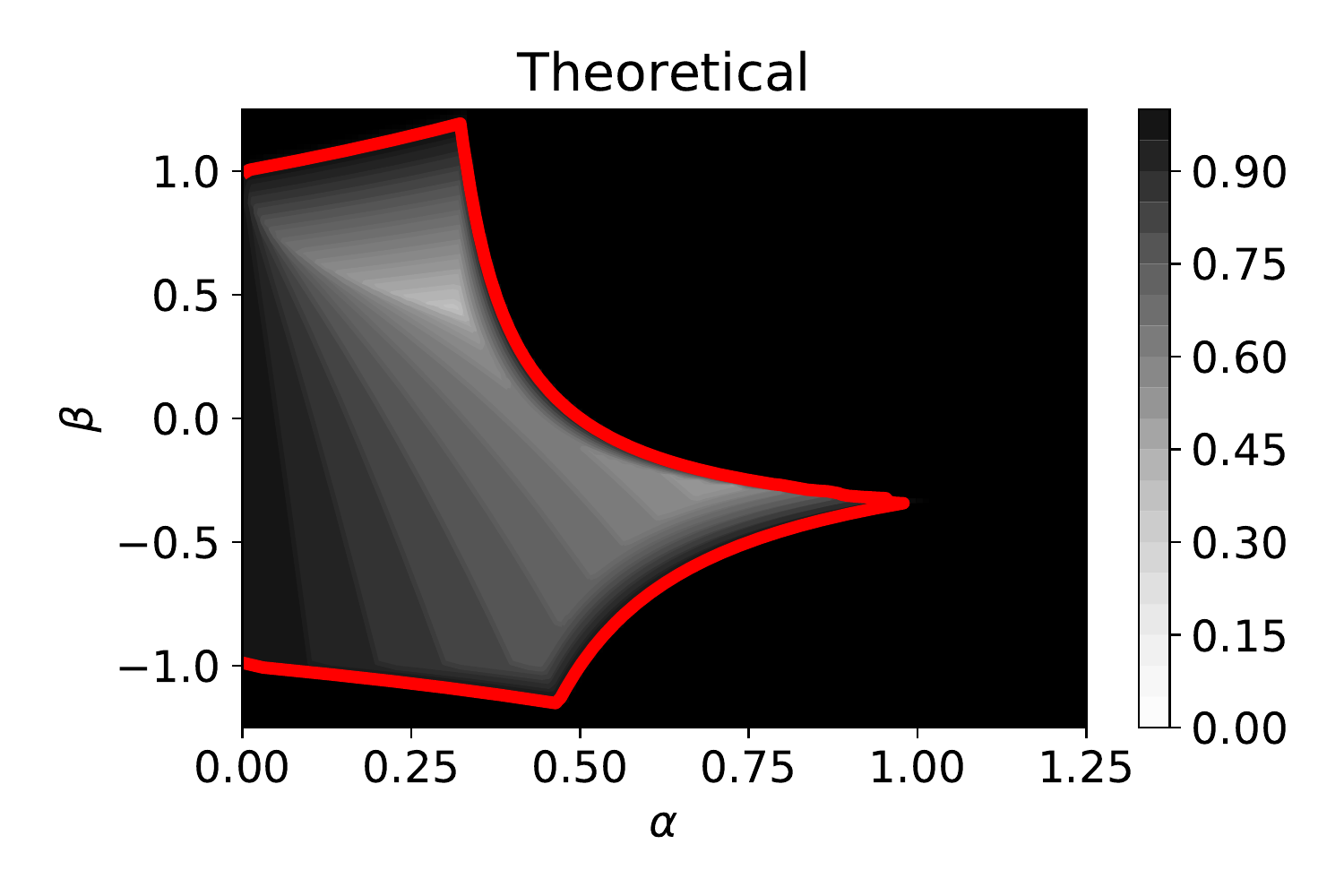}
\includegraphics[width=.25\textwidth]{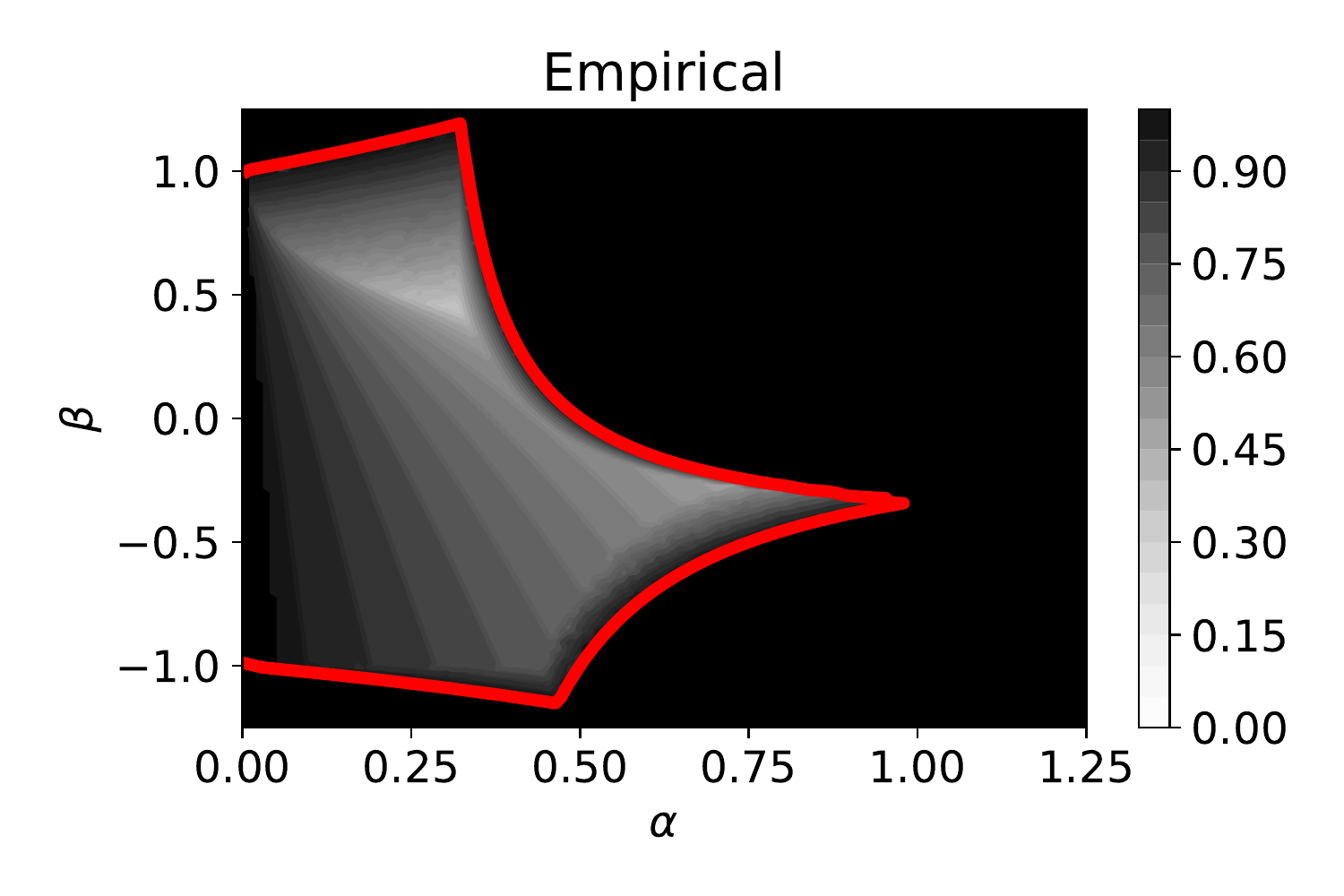}
\label{sub-fig:sa-4}
} \\
\subfloat[.5\textwidth][$(\cond = 32)$: Coefficient multiplying $\sigma^2$]{
\includegraphics[width=.25\textwidth]{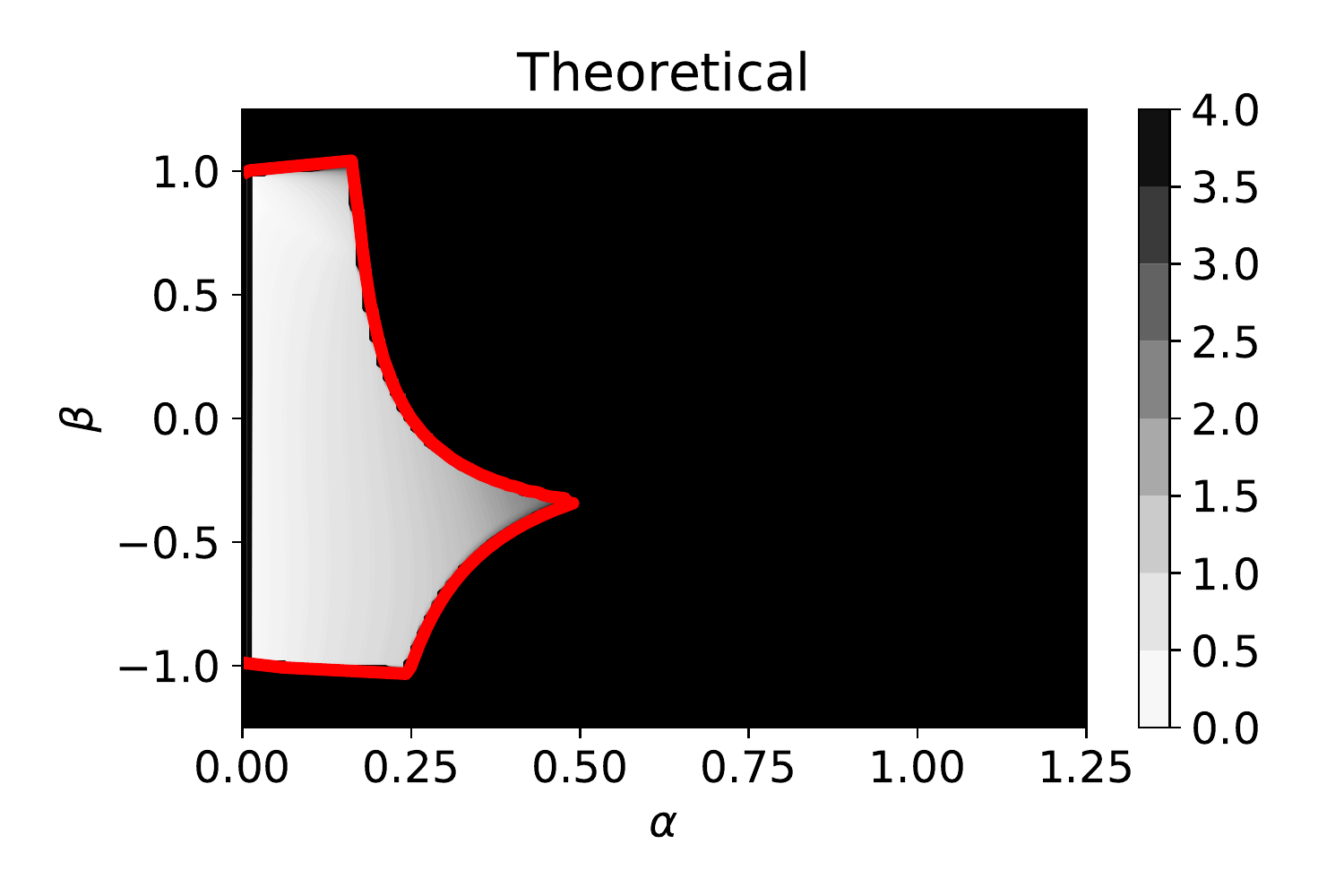}
\includegraphics[width=.25\textwidth]{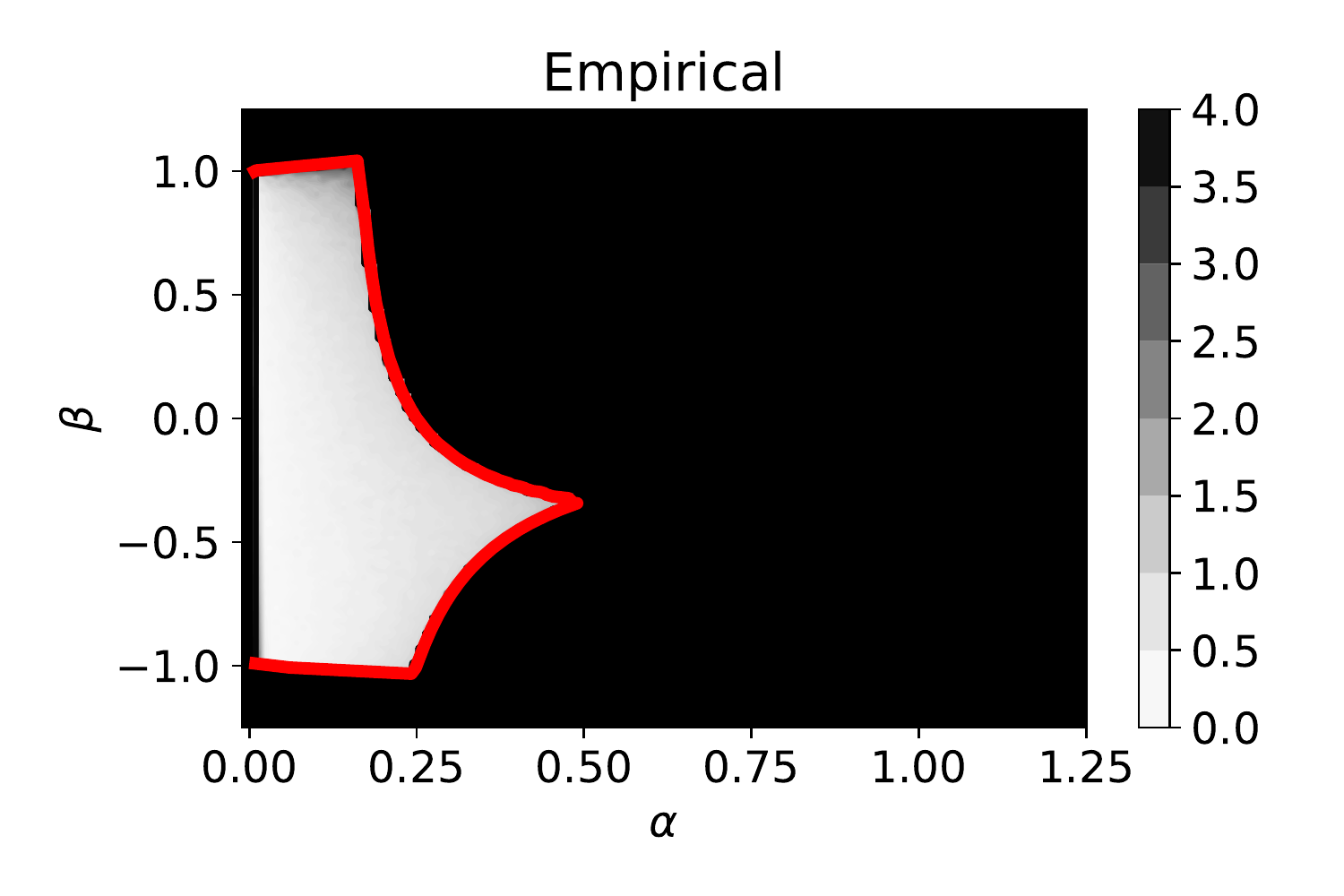}
\label{sub-fig:sa-5}
}
\subfloat[.5\textwidth][$(\cond = 32)$ Convergence rate $\rho(\alpha, \beta)$]{
\includegraphics[width=.25\textwidth]{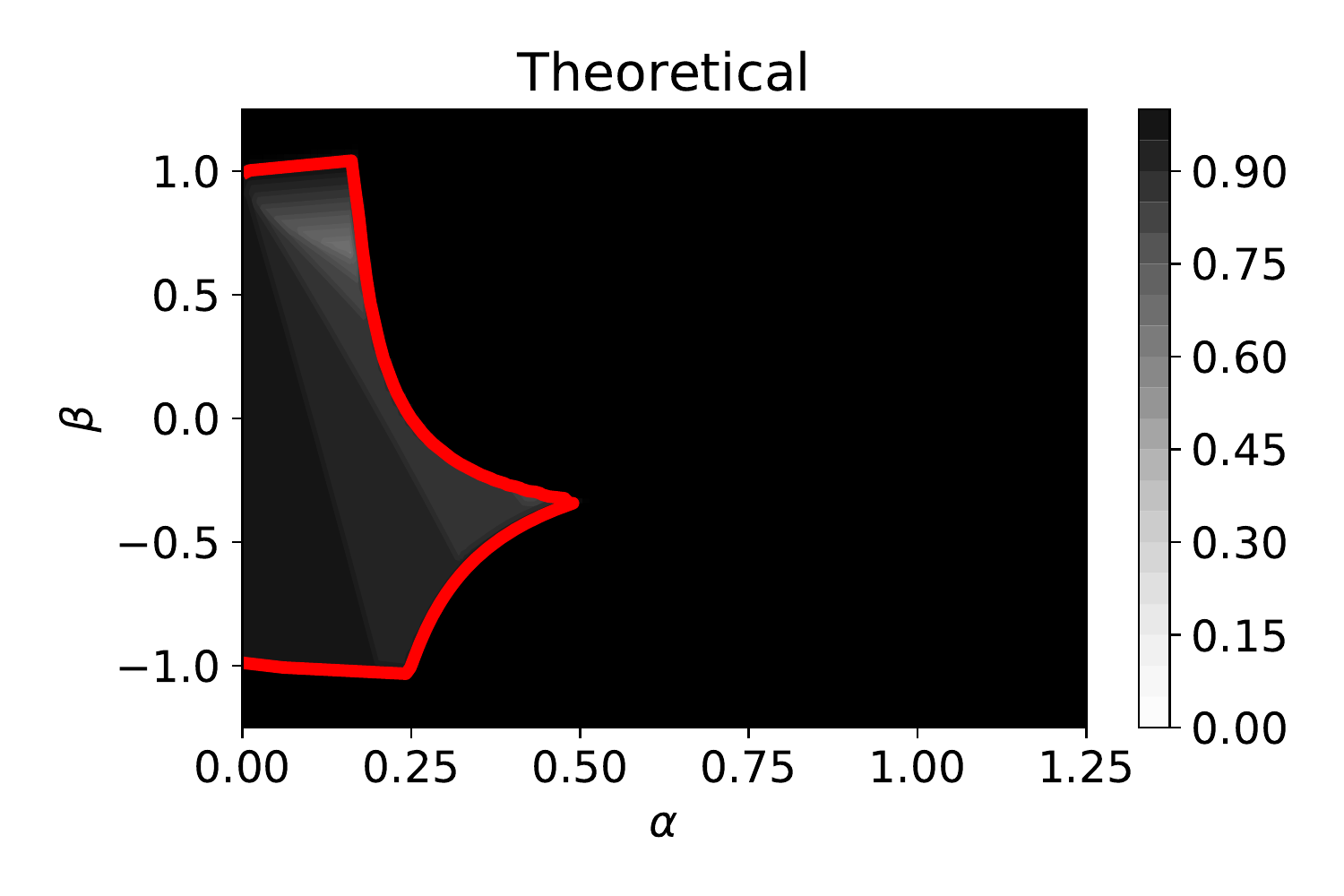}
\includegraphics[width=.25\textwidth]{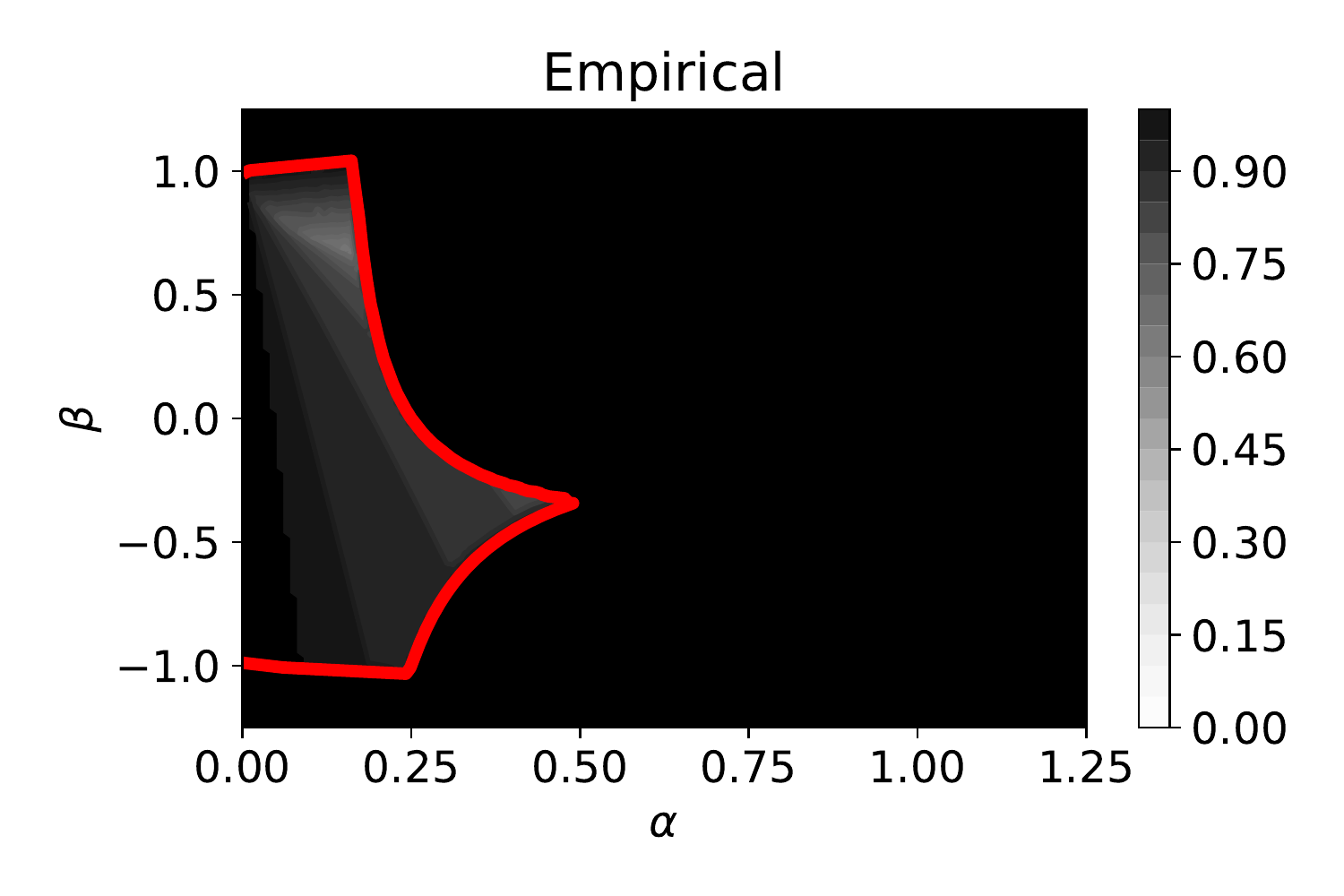}
\label{sub-fig:sa-6}
}
\caption{
Visualizing the accuracy with which the theory predicts the coefficient of the variance term and the convergence rate for different choices of constant step-size and momentum parameters, and various objective condition numbers $\cond$.
Plots labeled ``Theoretical'' depict theoretical results from Theorem~\ref{thm:stochapprox-quadratic}.
Plots labeled ``Empirical'' depict empirical results when using the \NASG method to solve a least-squares regression problem with additive Gaussian noise; each pixel corresponds to an independent run of the \NASG method for a specific choice of constant step-size and momentum parameters.
In all figures, the area enclosed by the red contour depicts the theoretical stability region from Theorem~\ref{thm:stochapprox-quadratic} for which $\rho(\alpha, \beta) < 1$.
Fig.~\ref{sub-fig:sa-1}/\ref{sub-fig:sa-3}/\ref{sub-fig:sa-5}:
Pixel intensities correspond to the coefficient of the variance term in Theorem~\ref{thm:stochapprox-quadratic} ($\lim_{k \to \infty} \frac{1}{\std} \E \norm{\itr{y}{k}-\xstar}_\infty$), which provides a good characterization of the magnitude of the neighbourhood of convergence, even without explicit knowledge of the constant $C_{\epsilon}$.
Brighter regions correspond to smaller coefficients, while darker regions correspond to larger coefficients.
Fig.~\ref{sub-fig:sa-2}/\ref{sub-fig:sa-4}/\ref{sub-fig:sa-6}: Pixel intensities correspond to the theoretical convergence rates in Theorem~\ref{thm:stochapprox-quadratic}, which provides a good characterization of the empirical convergence rates.
Brighter regions correspond to faster rates, and darker regions correspond to slower rates.
The theoretical conditions for convergence in Theorem~\ref{thm:stochapprox-quadratic} depicted by the red-contour are tight.
}
\label{fig:stochastic-approximation}
\end{figure*}

\subsection{Numerical Experiments}
\label{sub-sec:stochastic-approximation-numerical}
In Figure~\ref{fig:stochastic-approximation} we visualize runs of the \NASG method on a least-squares regression problem for different problem condition numbers $\cond$. The objective $f$ corresponds to the worst-case quadratic function used to construct the lower bound \eqref{eq:lower-bound} \cite{nesterov2004introductory}, for dimension $d = 100$.
Stochastic gradients are sampled by adding zero-mean Gaussian noise with variance $\sigma^2 = 0.0025$ to the true gradient.
The left plots in each sub-figure depict theoretical predictions from Theorem~\ref{thm:stochapprox-quadratic}, while the right plots in each sub-figure depict empirical results.
Each pixel corresponds to an independent run of the \NASG method for a specific choice of constant step-size and momentum parameters.
In all figures, the area enclosed by the red contour depicts the theoretical stability region from Theorem~\ref{thm:stochapprox-quadratic} for which $\rho(\alpha, \beta) < 1$.

Figures~\ref{sub-fig:sa-1}/\ref{sub-fig:sa-3}/\ref{sub-fig:sa-5} showcase the coefficient multiplying the variance term, which is taken to be $\frac{\alpha^2((1 + \beta)^2 + 1)}{1 - \rho(\alpha, \beta)^2}$ in theory.
Brighter regions correspond to smaller coefficients, while darker regions correspond to larger coefficients.
All sets of figures (theoretical and empirical) use the same color scale.
We can see that the coefficient of the variance term in Theorem~\ref{thm:stochapprox-quadratic} provides a good characterization of the magnitude of the neighbourhood of convergence.
The constant $C_\epsilon$ is approximated as $1 + (1-\rho(\alpha, \beta)^2)(\norm{A}^2 - \rho(\alpha, \beta)^2)$, where $\norm{A}$ denotes the largest singular value of $A$ in~\eqref{eq:stochapprox-quadratic-A}, and $\rho(\alpha,\beta)$ is the largest eigenvalue of $A$.
More detail on this simple approximation is provided in Appendix~\ref{sub-sec:estimating-cepsilon} of the supplementary material.

Figures.~\ref{sub-fig:sa-2}/\ref{sub-fig:sa-4}/\ref{sub-fig:sa-6} showcase the linear convergence rate in theory and in practice.
Brighter regions correspond to faster rates, and darker regions correspond to slower rates.
Again, all figures (theoretical and empirical) use the same color scale.
We can see that the theoretical linear convergence rates in Theorem~\ref{thm:stochapprox-quadratic} provide a good characterization of the empirical convergence rates.
Moreover, the theoretical conditions for convergence in Theorem~\ref{thm:stochapprox-quadratic} depicted by the red-contour appear to be tight.

In short, the theory developed in this section appears to provide an accurate characterization of the \NASG method in the stochastic-approximation setting. As we will see in the subsequent section, this theoretical characterization does not reflect its behavior in the finite-sum setting, which is typically closer to practical machine-learning setups, where randomness is due to mini-batching.

\section{The Finite-Sum Setting}
\label{sec:finite-sum}

Now consider the finite-sum setting, with
\begin{equation}
    \label{eq:finite-sum}
    f(x) = \frac{1}{n} \sum^n_{i=1} f_i(x),
\end{equation}
where each function $f_i$ is $\mu$-strongly convex, $L$-smooth, and twice continuously differentiable.
In this setting, stochastic gradients $\itr{g}{k}$ are obtained by sampling a subset of terms.
This can be seen as approximating the gradient $\nabla f(\itr{y}{k})$ with a mini-batch gradient
\begin{equation}
    \label{eq:finite-sum-gradient}
    \itr{g}{k} = \sum_{i=1}^n \nu_{k,i} \nabla f_{i}(\itr{y}{k}),
\end{equation}
where $\nu_{k} \in \R^n$ is a sampling vector with components $\nu_{k,i}$ satisfying $\E[\nu_{k,i}] = \frac{1}{n}$ \cite{gower2019sgd}. To simplify the discussion, let us assume that the mini-batch sampled at every iteration $k$ has the same size, and all elements are given the same weight, so $\sum_{i=1}^n \nu_{k,i} = 1$, those indices $i$ which are sampled have $\nu_{k,i} = \frac{1}{m}$ where $m$ is the mini-batch size ($1 \le m \le n$), and $\nu_{k,i} = 0$ for all other indices.

\subsection{An Impossibility Result}

Next we show that even when each function $f_i$ is well-behaved, the \NASG method may diverge when using the standard choice of step-size and momentum.
Instability of Nesterov's method for convex (but not strongly convex) functions with unbounded eigenvalues is shown in~\citet{liu2020accelerating}.
This section employs a different proof technique to strengthen this result to the case where each function $f_i$ is $\mu$-strongly-convex and $L$-smooth (all eigenvalues bounded between $\mu$ and $L$).

Let us assume that we do not see the same mini-batch twice consecutively; \ie,
\begin{equation} \label{eq:shuffle}
    \Pr(\norm{\nu_{k+1} - \nu_k} > 0) = 1 \quad \text{ for all } k.
\end{equation}
It is typical in practice to perform training in epochs over the data set, and to randomly permute the data set at the beginning of each epoch, so it is unlikely to see the same mini-batch twice in a row. Note we have not assumed that the sample vectors $\nu_k$ are independent. We do assume that $\E_k[\nu_{k,i}] = \frac{1}{n}$, where $\E_k$ denotes expectation with respect to the marginal distribution of $\nu_k$.

The \emph{interpolation condition} is said to hold if the minimizer $x^\star$ of $f$ also minimizes each $f_i$; \ie, if $\nabla f_i(x^\star) = 0$ for all $i=1,\dots,n$. It has been observed in some settings that stronger convergence guarantees can also be obtained when interpolation or a related assumption holds; \eg, \cite{schmidt2013fast,loizou2017momentum,ma2018power,vaswani2019fast}.

\begin{theorem} \label{thm:counter-example}
Suppose we run the \NASG method \eqref{eq:nasg-y}--\eqref{eq:nasg-x} in a finite-sum setting where $n \ge 3$ and the sampling vectors $\nu_k$ satisfy the condition \eqref{eq:shuffle}. For any initial point $x_0 \in \R^d$, there exist $L$-smooth, $\mu$-strongly convex quadratic functions $f_1, \dots, f_n$ such that $f$ is also $L$-smooth and $\mu$-strongly convex, and if we run the \NASG method with $\alpha = 1/L$ and $\beta = \frac{\sqrt{\cond} - 1}{\sqrt{\cond} + 1}$, then
\[
\lim_{k \rightarrow \infty} \E[\norm{y_k - x^\star}] = \infty.
\]
This is true even if the functions $f_1,\dots, f_n$ are required to satisfy the interpolation condition.
\end{theorem}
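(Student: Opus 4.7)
First I would specialise to the interpolation case (allowed by the theorem statement) and take each $f_i$ quadratic, so $\nabla f_i(y) = H_i(y - \xstar)$ for $H_i \succ 0$ with spectrum in $[\mu, L]$. Substituting into \eqref{eq:recursion-with-g} turns the \NASG iteration into a purely multiplicative random linear system $\xi_{k+1} = T_{\nu_k} \xi_k$, where $\xi_k = (r_{k+1}^\T, v_k^\T)^\T$ and $T_{\nu_k}$ is the $2d \times 2d$ matrix obtained from \eqref{eq:stochapprox-Ak} by replacing $H_k$ with the mini-batch Hessian $\sum_i \nu_{k,i} H_i$. The theorem is then equivalent to choosing Hessians $H_i$ for which $\E\|\xi_k\| \to \infty$.

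The construction I would use is two-dimensional: in $\R^2$ pick three unit vectors $u_1, u_2, u_3$ at the equispaced angles $0, \pi/3, 2\pi/3$ and set $H_i = \mu I + (L - \mu) u_i u_i^\T$ for $i \in \{1,2,3\}$, replicating these when $n > 3$. Each $H_i$ has spectrum $\{\mu, L\}$ and a direct computation gives $\bar H = \tfrac{L+\mu}{2} I$, so $f$ is both $\mu$-strongly convex and $L$-smooth as required. The reason this construction is dangerous for \NASG is structural: because $\alpha = 1/L$, the gradient step $I - \alpha H_{I_k}$ annihilates any component along $u_{I_k}$ entirely and only scales the orthogonal component by $\gamma := 1 - 1/\cond$, while the momentum weight $\beta = (\sqrt{\cond} - 1)/(\sqrt{\cond} + 1)$ is close to $1$ and carries energy into the next iteration along a direction that is in general badly misaligned with $u_{I_{k+1}}$.

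After one step $x_k$ lies in the line orthogonal to $u_{I_k}$, so writing $x_k = c_k w_{I_k}$ with $w_i := u_i^\perp$ collapses the $2d$-dimensional dynamics to the scalar two-term recurrence $c_{k+1} = \gamma (1+\beta) \cos(\phi_{k+1,k}) c_k - \gamma \beta \cos(\phi_{k+1, k-1}) c_{k-1}$, where $\phi_{i,j}$ denotes the angle between $w_i$ and $w_j$. For the three-cycle of indices $1, 2, 3, 1, 2, 3, \dots$ all the relevant cosines take values in $\{\pm \tfrac{1}{2}\}$, so the associated length-three transition matrix $M^{(3)}$ can be written in closed form. In the limit $\cond \to \infty$ its trace tends to $-\tfrac{5}{2}$ and its determinant to $-\tfrac{1}{8}$, so its characteristic polynomial is $\lambda^2 + \tfrac{5}{2} \lambda - \tfrac{1}{8}$, which has a root of modulus $(5 + 3\sqrt{3})/4 \approx 2.55 > 1$. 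This establishes exponential expansion of $\xi_k$ along this subpattern, and the analogous calculation for the reverse cycle $1, 3, 2, 1, 3, 2, \dots$ yields the same spectral radius.

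To finish, I would invoke a Furstenberg--Kesten-type argument to turn pointwise expansion along these cycles into strict positivity of the top Lyapunov exponent of the random matrix product $T_{\nu_k} \cdots T_{\nu_1}$, so $\|y_k - \xstar\| \to \infty$ almost surely for all initial conditions outside a measure-zero exceptional set, and then apply Fatou's lemma to deduce divergence of the expectation. The main obstacle, and the delicate step, is precisely upgrading the cyclic calculation to the Lyapunov exponent: the cycles $(1,2,3)$ and $(1,3,2)$ may have small conditional probability under an adversarial sampling distribution, and one needs an irreducibility / positivity argument on the semigroup generated by the matrices $T_\nu$ to rule out cancellations that could drive the averaged growth rate below zero. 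A more explicit alternative, which avoids abstract Lyapunov-exponent machinery, is to form the augmented second-moment state $\E\bigl[\xi_k \xi_k^\T \otimes \mathbf{1}_{\nu_k}\bigr]$ together with the current Markov state of $\nu_k$, write its exact linear recursion on a space of dimension $\order{n d^2}$, and check directly that its transition matrix has spectral radius strictly greater than $1$ for this construction.
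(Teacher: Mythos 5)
Your construction is genuinely different from the paper's, and its core computation checks out. The paper makes all Hessians $H_i$ simultaneously diagonalizable, so the $2d\times 2d$ products block-diagonalize into products of the $2\times2$ matrices $B(\mu)$ and $B(L)$; the engine of divergence is an exact formula (the paper's Lemma~2) showing $\rho\bigl(B(L)B(\mu)^{k_1}\cdots B(L)B(\mu)^{k_s}\bigr) = \bigl(\tfrac{\sqrt{\cond}-1}{\sqrt{\cond}}\bigr)^k k_1\cdots k_s$, i.e.\ a polynomial factor that beats the geometric decay once $\cond$ is large. You instead use non-commuting rank-one-perturbed Hessians and exploit that $\alpha=1/L$ annihilates the top eigendirection of the sampled Hessian while momentum re-injects a misaligned component; I verified that your three-cycle companion product has trace $-5/2$ and determinant $-1/8$ in the limit $\cond\to\infty$, giving spectral radius $(5+3\sqrt{3})/4\approx2.55>1$ over three steps. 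Your mechanism is more geometric and arguably more illustrative of why momentum misalignment hurts, but note two things the paper's route buys: (i) its counterexample lives in the commuting (jointly diagonalizable) class, which is a strictly stronger statement, and (ii) the closed form holds for \emph{every} admissible sampling sequence, not just two periodic orbits, which is exactly what allows the expectation over the random sampling to be taken.

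That second point is where your proposal has a genuine gap, which you partly acknowledge. Exhibiting two expanding period-$3$ orbits does not establish positivity of the top Lyapunov exponent of the random product: under a sampling process satisfying only \eqref{eq:shuffle} and $\E_k[\nu_{k,i}]=1/n$, the trajectory also realizes back-and-forth patterns such as $1,2,1,2,\dots$ and, for $n>3$ with replicated functions, long stretches on a single effective index; the averaged log-growth is a probability-weighted combination over all such words, and expanding cycles of possibly small measure can be outweighed by contracting ones. The Furstenberg--Kesten framework gives existence of the Lyapunov exponent but not its sign, and the irreducibility/positivity argument you would need is precisely the missing content. Your explicit fallback does not close this: tracking $\E[\xi_k\xi_k^\T\otimes\cdot]$ controls the \emph{second} moment, and $\E\norm{\xi_k}^2\to\infty$ does not imply $\E\norm{\xi_k}\to\infty$ (Jensen runs the wrong way), so it proves a weaker statement than the theorem. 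By contrast, the paper's exact spectral-radius formula reduces the expectation to estimating $\E[(k_1\cdots k_s)^{1/k}]\approx(n-1)^{1/n}$ directly from the marginal condition, for any admissible sampling. To repair your argument you would either need to compute the per-step growth for \emph{all} admissible transitions (all ordered pairs of distinct indices) and show the weighted geometric mean exceeds $1$, or restrict to a sampling law under which the expanding cycles dominate --- but the latter would prove a weaker theorem than the one stated. A shared caveat, not held against you: like the paper's proof, your construction only works for $\cond$ sufficiently large, whereas the theorem is stated for given $L$ and $\mu$.
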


\begin{proof}
We will prove this claim constructively. Given the initial vector $x_0$, choose $x^\star \in \R^d$ to be any vector $x^\star \ne x_0$. 

Let $U$ be an orthogonal matrix. Let the Hessian matrices $H_i$, $i=1,\dots,n$, be chosen so that they are all diagonalized by $U$, and let $\Lambda_i$ denote the diagonal matrix of eigenvalues of $H_i$; \ie, $H_i = U \Lambda_i U^\T$. Denote by $\Lambda_{\nu_k}$ the matrix
\begin{equation}
    \Lambda_{\nu_k} = \sum_{i=1}^n \nu_{k,i} \Lambda_i.
\end{equation}
It follows that $\Lambda_{\nu_k} \in \R^{d \times d}$ is also diagonal, and all of its diagonal entries are in $[\mu, L]$. 

Recall that we have assumed that the functions $f_i$ are quadratic: $f_i(x) = \frac{1}{2} x^\T H_i x - b_i^\T x + c_i$. Let us assume that $b_i \in \R^d$ and $c_i \in R$ are chosen so that all functions $f_i$ are minimized at the same point $x^\star$, satisfying the interpolation condition. Then from \eqref{eq:taylor}, we have
\begin{equation} \label{eq:finitesum-impossibility-g-interpolation}
g_k = U \Lambda_{\nu_k} U^\T r_k.
\end{equation}
Using this in \eqref{eq:recursion-with-g} and unrolling, we obtain that
\begin{equation} \label{eq:finitesum-recursion}
\begin{bmatrix}
    r_{k+1} \\ v_k
\end{bmatrix}
= A_k A_{k-1} \dots A_1 \begin{bmatrix}
    r_{1} \\ v_{0}
\end{bmatrix},
\end{equation}
where 
\begin{equation}
    A_j = \begin{bmatrix}
        I - \alpha (1 + \beta) U \Lambda_{\nu_j} U^\T & \beta^2 I \\
        - \alpha U \Lambda_{\nu_j} U^\T & \beta I
    \end{bmatrix}.
\end{equation}
For fixed $n$ and $m$, there are a finite number of sampling vectors $\nu_k$ (precisely $\binom{n}{m}$), and therefore the matrices $A_j$ belong to a bounded set $\mathcal{A}$. It follows that the trajectory $([r_{k+1}, v_k]^\T)_{k\ge0}$ is stable if the joint spectral radius of the set of matrices $\mathcal{A}$ is less than one \cite{rota1960note}. Conversely, if $\E[\rho(A_k \cdots A_1)^{1/k}] > 1$ for all $k$ sufficiently large, then $\lim_{k\rightarrow \infty} \norm{y_k - x^\star} = \infty$.

Based on the construction above, the norm of the matrix product $A_k \dots A_1$ in \eqref{eq:finitesum-recursion} can be characterized by studying products of smaller $2 \times 2$ matrices of the form
\begin{equation} \label{eq:finitesum-B}
    B(\lambda_{k,j}) = \begin{bmatrix}
        1 - \alpha (1 + \beta) \lambda_{k,j} & \beta^2 \\
        - \alpha \lambda_{k,j} & \beta
    \end{bmatrix},
\end{equation}
where $\lambda_{k,j}$ is a diagonal entry of $\Lambda_{v_k}$. To see this, observe that there is a permutation matrix $P \in \{0,1\}^{2d \times 2d}$ such that (see Appendix~\ref{sec:permutation})
\begin{align*}
P 
\begin{bmatrix}
    U^\T & 0 \\
    0 & U^\T
\end{bmatrix} &
A_j
\begin{bmatrix}
    U & 0 \\
    0 & U
\end{bmatrix}
P^\T \\
&= 
\begin{bmatrix}
    B(\lambda_{j,1}) & 0 & \cdots & 0 \\
    0 & B(\lambda_{j,2}) & \cdots & 0 \\
    \vdots & \vdots & \ddots & \vdots \\
    0 & 0 & \cdots & B(\lambda_{j,d})
\end{bmatrix},
\end{align*}
where $\lambda_{j,i}$ is the $i$th diagonal entry of $\Lambda_{v_j}$.
\begin{figure*}[!t]
\centering
\subfloat[.33\textwidth][$n=50$]{
    \includegraphics[width=.33\textwidth]{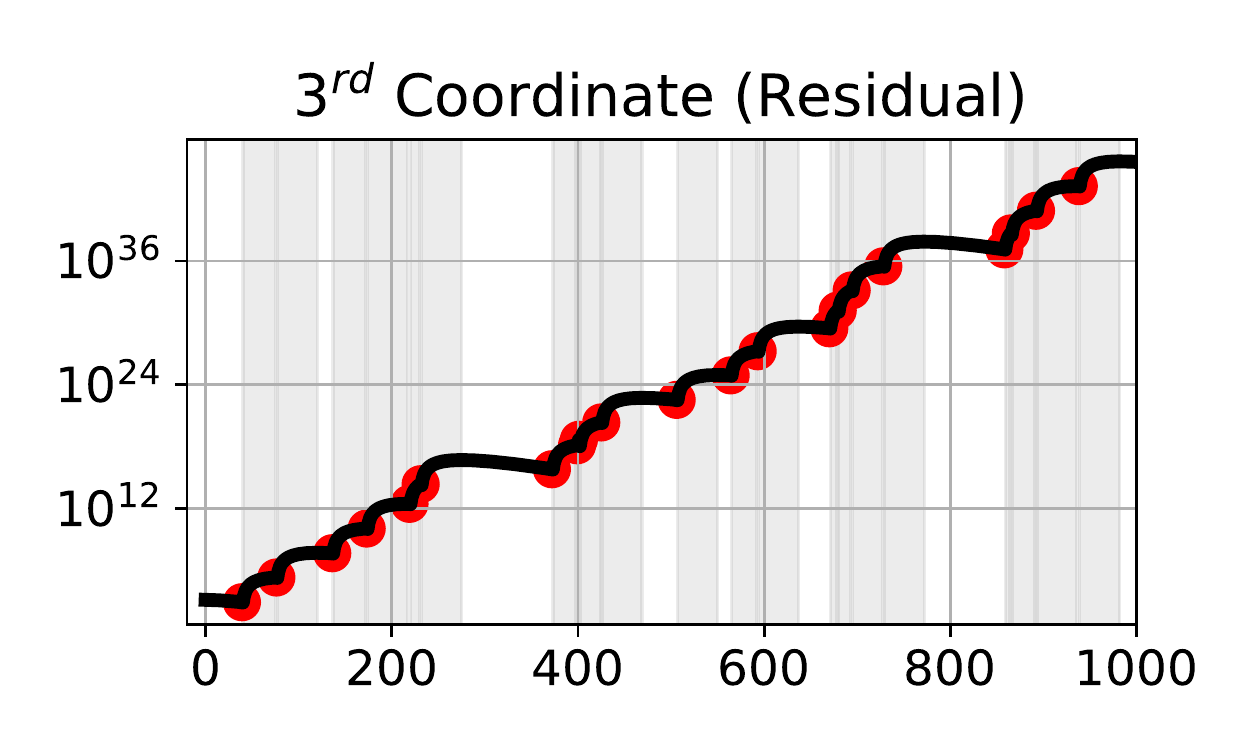}
    \label{sub-fig:finite-sum-1}}
\subfloat[.33\textwidth][$n=250$]{
    \includegraphics[width=.33\textwidth]{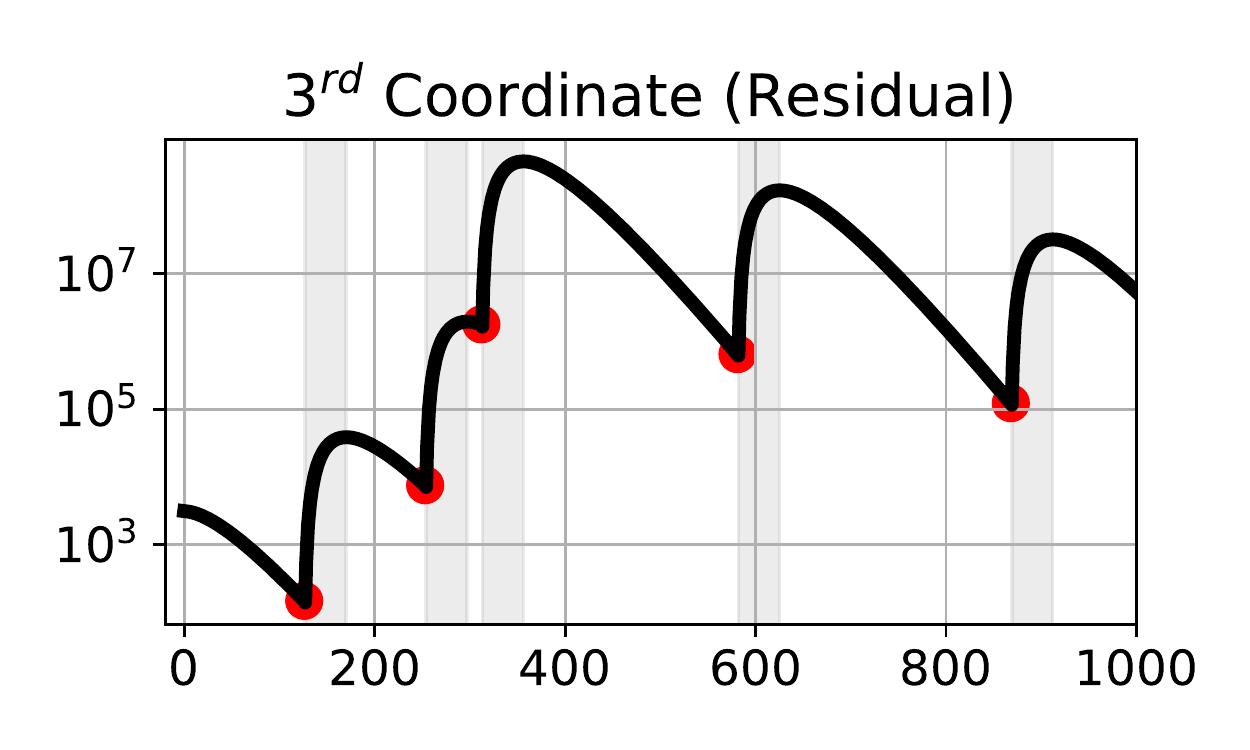}
    \label{sub-fig:finite-sum-2}}
\subfloat[.33\textwidth][$n=1000$]{
    \includegraphics[width=.33\textwidth]{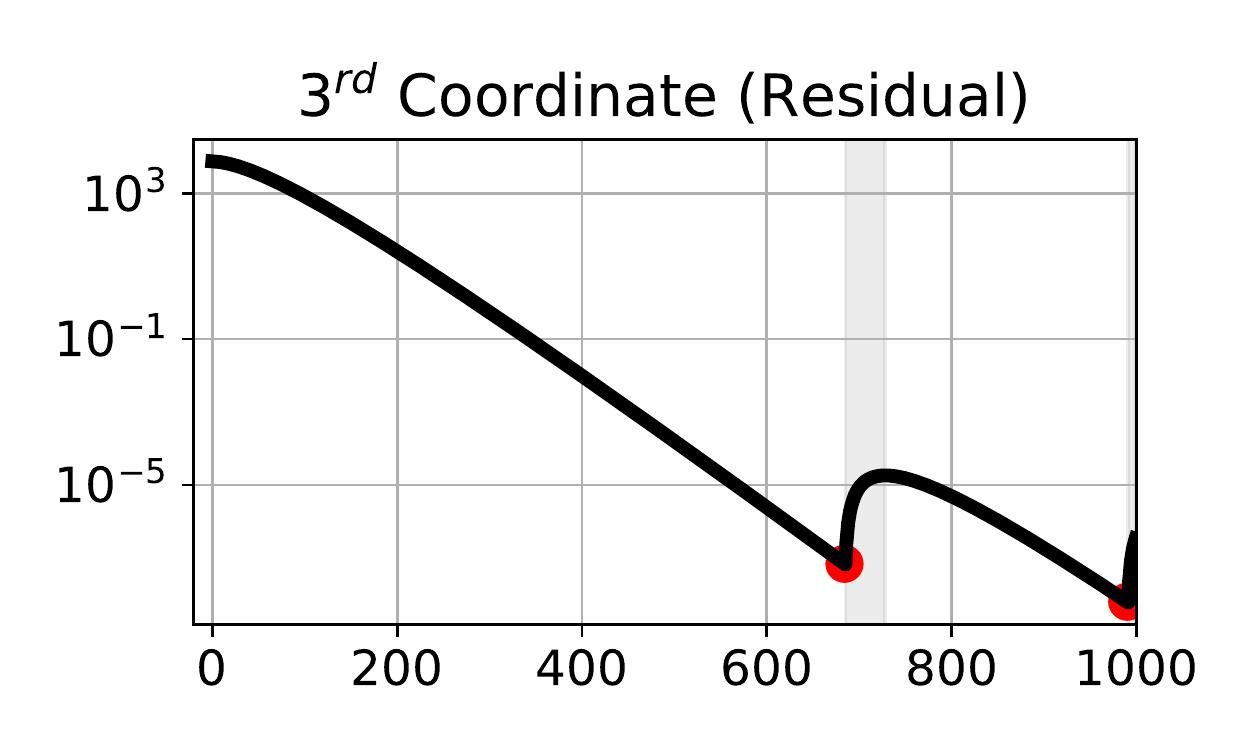}
    \label{sub-fig:finite-sum-3}}
\caption{
Visualizing the convergence of Nesterov's \NASG method ${\itr{y}{k} - \xstar}$ in $\R^3$ along a single coordinate direction in the $L$-smooth $\mu$-strongly-convex finite-sum setting with the usual choice of parameters ($\alpha=\nicefrac{1}{L}$ and $\beta=\nicefrac{(\sqrt{\cond}-1)}{(\sqrt{\cond}+1)}$).
The $L$-smoothness parameter is $100$ and the modulus of strong-convexity $\mu$ is $0.05$.
There are $n$ functions $f_1, \ldots, f_n$ in the finite-sum, each with the same minimizer $\xstar$.
All the functions have the eigenvalue $L$ along the first coordinate basis vector, and the eigenvalue $\mu$ along the second coordinate basis vector.
Along the third coordinate basis vector, the functions $f_1, \ldots, f_{n-1}$ have eigenvalue $\mu$, while only a single function, $f_n$, has eigenvalue $L$.
At each iteration, the \NASG method obtains a stochastic gradient by sampling one function from the finite-sum.
Red points indicate iterations at which the mini-batch corresponding to the function $f_n$ was sampled.
Gray shading indicates iterations at which the momentum and gradient vector point in opposite directions along the given coordinate axis. The inconsistent mini-batch leads to the divergence of the \NASG method, but becomes less destabilizing as the number of terms in the finite-sum $n$ grows.
}
\label{fig:finite-sum}
\end{figure*}

Furthermore, since all matrices $H_i$ have the same eigenvectors, we have that
\begin{align*}
P 
\begin{bmatrix}
    U^\T & 0 \\
    0 & U^\T
\end{bmatrix} &
A_k A_{k-1} \cdots A_1
\begin{bmatrix}
    U & 0 \\
    0 & U
\end{bmatrix}
P^\T \\
&= 
\begin{bmatrix}
    T_{k,1} & 0 & \cdots & 0 \\
    0 & T_{k,2} & \cdots & 0 \\
    \vdots & \vdots & \ddots & \vdots \\
    0 & 0 & \cdots & T_{k,d}
\end{bmatrix},
\end{align*}
where $T_{k,j} = B(\lambda_{k,j}) \cdots B(\lambda_{1,j})$. Hence, the spectral radius of the product $A_k \cdots A_1$ corresponds to the maximum spectral radius of any of the $2 \times 2$ matrices $T_{k,j}$, $j=1,\dots,d$. 

Let $j$ index a subspace such that $u_j^\T r_1 \ne 0$, where $u_j$ is the $j$th column of $U$. To simplify the discussion, suppose that all mini-batches are of size $m=1$, and assume $n > 1$. Since we can define the Hessians of the functions $f_i$ such that the eigenvalues pair together arbitrarily, consider matrix products of the form
\begin{equation}
    T_{k,j} = B(L) B(\mu)^{k_1} B(L) B(\mu)^{k_2} \cdots B(L) B(\mu)^{k_s},
\end{equation}
where $k = k_1 + \dots + k_s + s$. That is, all but one of the functions $f_i$ have the eigenvalue $\mu$ in this subspace, and the remaining one has eigenvalue $L$ in this subspace. Hence, most of the time we sample mini-batches corresponding to $B(\mu)$, and once in a while we sample mini-batches with $B(L)$. Moreover, since we do not sample the same mini-batch twice consecutively, we never see back-to-back $B(L)$'s. For this case, and with the standard choice of step-size and momentum parameters, we can precisely characterize the spectral radius of $T_{k,j}$.

\begin{lemma} \label{lem:spectral-radius-T_j}
If $\alpha = 1/L$ and $\beta = \frac{\sqrt{\cond}-1}{\sqrt{\cond}+1}$, then 
\[
\rho(T_j) = \left( \frac{\sqrt{\cond} - 1}{\sqrt{\cond}}\right)^k \times k_1 \cdots k_s.
\]
\end{lemma}

The proof of Lemma~\ref{lem:spectral-radius-T_j} is given in Appendix~\ref{sec:spectral-radius-T_j-proof}. Since we do not sample the same mini-batch twice in a row, it follows that $k_j \ge 1$ for all $j=1,\dots,s$.
Based on the assumption that $\E_k[\nu_{k,i}] = \frac{1}{n}$, we have $\E[s] = \frac{k}{n}$.
Moreover, since $\E_k[\nu_{k,i}] = \frac{1}{n}$, for large $k$ ($\gg n$) we have $\E[(k_1 \dots k_s)^{\frac{1}{k}}] \approx (n-1)^{1/n}$. Thus, for sufficiently large $Q$ and sufficiently large $k$,
\[
\E[\rho(A_k \cdots A_1)^{1/k}] > 1.
\]
Therefore, $\lim_{k \rightarrow \infty} \E \norm{y_k - x^\star} = \infty$.

Recall that we assumed the interpolation condition holds in order to get $g_k$ of the form \eqref{eq:finitesum-impossibility-g-interpolation}. If we relax this and do not require interpolation, then $g_k$ will have an additional term involving $\nabla f_i(x^\star)$, and the expression \eqref{eq:finitesum-recursion} will also have an additional terms, akin to the $\zeta_k$ terms in \eqref{eq:stochapprox-unrolled}. The same arguments still apply, leading to the same conclusion.
\end{proof}

\subsection{Example}
The divergence result in Theorem~\ref{thm:counter-example} stems from the fact that the algorithm acquires momentum along a low-curvature direction, and then, suddenly, a high-curvature mini-batch is sampled that overshoots along the current trajectory.
Momentum prevents the iterates from immediately adapting to the overshoot, and propels the iterates away from the minimizer for several consecutive iterations.

To illustrate this effect, consider the following example finite-sum problem with $d=3$, where each function $f_i$ is a strongly-convex quadratic with gradient
\[
    \nabla f_i(x) = U \Lambda_i U^T (x - \xstar).
\]
For simplicity, take $U = I$, and let
\[
    \Lambda_i =
    \begin{bmatrix}
    L & 0 & 0 \\
    0 & \mu & 0 \\
    0 & 0 & \lambda_i
    \end{bmatrix}.
\]
The scalar $\lambda_i$ is equal to $\mu$ for all $i \neq n$, and is equal to $L$ for $i = n$.
Therefore, each function $f_i$ is $\mu$-strongly convex, $L$-smooth, and minimized at $\xstar$, and the global objective $f$ is also $\mu$-strongly convex, $L$-smooth, and minimized at $\xstar$.
Moreover, the functions $f_i$ are nearly all identical, except for $f_n$, which we refer to as the inconsistent mini-batch.

From the proof of Theorem~\ref{thm:counter-example}, the growth rate of the iterates along the third coordinate direction, with the usual choice of parameters ($\alpha=\nicefrac{1}{L}$, $\beta=\frac{\sqrt{\cond}-1}{\sqrt{\cond}+1}$), is
\begin{align*}
    \E[\radius{\itr{A}{k}\cdots\itr{A}{1}}^{\nicefrac{1}{k}}]
    \sim 
    \left(\frac{\sqrt{\cond} - 1}{\sqrt{\cond}} \right) (n-1)^{\frac{1}{n}}.
\end{align*}
Notice that the term $(n-1)^{\frac{1}{n}}$ goes to $1$ as $n$ grows to infinity.
Hence, for a fixed condition number $\cond$, the \NASG method exhibits an increased probability of convergence as $n$ becomes large.
The intuition for this is that we sample the inconsistent mini-batch less frequently, and thereby decrease the likelihood of derailing convergence.

Figure~\ref{fig:finite-sum} illustrates the convergence of the \NASG method in this setting with the usual choice of parameters ($\alpha=\nicefrac{1}{L}$, $\beta=\frac{\sqrt{\cond}-1}{\sqrt{\cond}+1}$), for various $n$ (number of terms in the finite-sum).
At each iteration, the \NASG method obtains a stochastic gradient by sampling a mini-batch from the finite-sum.
Components of iterates along the first coordinate direction converge in a finite number of steps, and components of iterates along the second coordinate direction converge at Nesterov's rate $\nicefrac{(\sqrt{\cond}-1)}{\sqrt{\cond}}$.
Meanwhile, components of iterates along the third coordinate direction diverge.

Annotated red points indicate iterations at which the mini-batch corresponding to the function $f_n$ was sampled. The shaded windows illustrate that immediately after the inconsistent mini-batch is sampled, the gradient and momentum buffer have opposite signs for several consecutive iterations.

\subsection{Convergent Parameters}

Next we turn our attention to finding alternative settings for the parameters $\alpha$ and $\beta$ in the \NASG method which guarantee convergence in the finite-sum setting. \citet{vaswani2019fast} obtain linear convergence under a strong growth condition using an alternative formulation of \NASG which has multiple momentum parameters by keeping the step-size constant and having the momentum parameters vary. Here we focus on constant step-size and momentum and make no assumptions about growth.

Our approach is to bound the spectral norm of the products $\norm{A_k \cdots A_j}$ using submultiplicativity of matrix norms. This recovers linear convergence to a neighborhood of the minimizer, but the rate is no longer accelerated.

Define the quantities
\begin{align*}
    C_\lambda(\alpha, \beta) &= (1 - \alpha (1 + \beta) \lambda)^2 + \alpha^2 \lambda^2 + \beta^2 (\beta^2 + 1)\\
    \tilde{\Delta}_\lambda(\alpha, \beta) &= C_\lambda(\alpha, \beta)^2 - 4 \beta^2 (1 - \alpha \lambda)^2 \\
    R_\lambda(\alpha, \beta) &= \frac{1}{\sqrt{2}} \left( C_\lambda(\alpha, \beta) + \sqrt{ \tilde{\Delta}_{\lambda}(\alpha, \beta)} \right)^{1/2}
\end{align*}
and let $R(\alpha, \beta) = \max_{\lambda \in [\mu, L]} R_\lambda(\alpha, \beta)$.

\begin{theorem}
\label{th:finite-sum}
Let $\alpha$ and $\beta$ be chosen so that $R(\alpha, \beta) < 1$. Then for all $k \ge 0$,
\begin{align*}
    \E \norm{\itr{y}{k+1}-\xstar} \le& R(\alpha,\beta)^{k} \norm{\itr{y}{1}-\xstar} \\
    &+ \frac{\alpha \sqrt{(1+\beta)^2 + 1}}{1 - R(\alpha,\beta)} \std,
\end{align*}
where
\[
    \std = \frac{1}{n}\sum^n_{i=1} \norm{\nabla f_i(\xstar)}.
\]
\end{theorem}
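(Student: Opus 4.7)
The approach is to extend the recursion-based analysis from Section~\ref{sec:stochastic-approximation} to the finite-sum setting, bounding the spectral norm of the transition matrix uniformly in its random arguments and pushing the residual error through via submultiplicativity. First, apply the Taylor identity \eqref{eq:taylor} separately to each $f_i$, so the mini-batch gradient decomposes as
\[
\itr{g}{k} = \tilde H_k r_k + \xi_k, \quad \tilde H_k \defeq \sum_{i=1}^n \nu_{k,i} \int_0^1 \nabla^2 f_i(\xstar + t r_k)\, dt, \quad \xi_k \defeq \sum_{i=1}^n \nu_{k,i} \nabla f_i(\xstar).
\]
Since each $\nabla^2 f_i$ has eigenvalues in $[\mu,L]$ and the weights $\nu_{k,i}$ are nonnegative and sum to one, $\tilde H_k$ is symmetric positive-definite with eigenvalues in $[\mu,L]$. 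Substituting into \eqref{eq:recursion-with-g} puts the iterates in the form
\[
\begin{bmatrix} r_{k+1} \\ v_k \end{bmatrix} = A_k \begin{bmatrix} r_k \\ v_{k-1} \end{bmatrix} - \alpha \begin{bmatrix} (1+\beta)\eye \\ \eye \end{bmatrix} \xi_k,
\]
with $A_k$ structured as in \eqref{eq:stochapprox-Ak} but built from $\tilde H_k$.

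The crux of the argument is a pathwise bound $\norm{A_k} \le R(\alpha,\beta)$. Using the eigendecomposition $\tilde H_k = U_k \Lambda_k U_k^\T$ and the permutation identity from Appendix~\ref{sec:permutation} invoked in the proof of Theorem~\ref{thm:counter-example}, the matrix $A_k$ is orthogonally similar to a block-diagonal matrix whose $j$-th $2\times 2$ block is $B(\lambda_{k,j})$ as in \eqref{eq:finitesum-B}, with $\lambda_{k,j}$ running over the eigenvalues of $\tilde H_k$. Because the spectral norm is invariant under orthogonal conjugation, $\norm{A_k} = \max_j \norm{B(\lambda_{k,j})}$. A direct $2\times 2$ computation gives $\operatorname{tr}(B(\lambda)^\T B(\lambda)) = C_\lambda(\alpha,\beta)$ and $\det(B(\lambda))^2 = \beta^2 (1 - \alpha \lambda)^2$, so the larger eigenvalue of $B(\lambda)^\T B(\lambda)$ equals $\tfrac{1}{2}(C_\lambda + \sqrt{\tilde \Delta_\lambda}) = R_\lambda(\alpha,\beta)^2$. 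Maximizing over $\lambda \in [\mu, L]$ yields $\norm{A_k} \le R(\alpha,\beta)$ deterministically at every step.

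Unrolling the recursion (with $r_1 = y_1 - \xstar$ and $v_0 = 0$), the triangle inequality together with submultiplicativity gives pathwise
\[
\norm{\begin{bmatrix} r_{k+1} \\ v_k \end{bmatrix}} \le R(\alpha,\beta)^k \norm{y_1 - \xstar} + \alpha \sqrt{(1+\beta)^2 + 1} \sum_{j=1}^{k} R(\alpha,\beta)^{k-j} \norm{\xi_j},
\]
since $\norm{[(1+\beta)\eye;\eye]} = \sqrt{(1+\beta)^2+1}$. Taking expectations and using $\E \norm{\xi_j} \le \sum_i \E[\nu_{j,i}] \norm{\nabla f_i(\xstar)} = \std$ together with the geometric sum $\sum_{j=1}^k R^{k-j} \le (1-R)^{-1}$ produces the stated bound (noting $\norm{r_{k+1}} \le \norm{[r_{k+1};v_k]}$). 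The main subtlety is the spectral-norm computation in the previous step, which is precisely what produces the definition of $R_\lambda$; unlike the impossibility proof, we do not need the Hessians of the $f_i$ to share eigenvectors, because the bound only uses the eigendecomposition of the single matrix $\tilde H_k$ at each step. The price of invoking submultiplicativity on $A_k \cdots A_j$, rather than analyzing powers of a fixed $A$, is that $R(\alpha,\beta)$ generically exceeds the Nesterov rate $1 - 1/\sqrt{\cond}$, so acceleration is not recovered.
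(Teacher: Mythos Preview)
Your proposal is correct and follows essentially the same route as the paper: the same Taylor decomposition of the mini-batch gradient, the same unrolled recursion, and the same uniform bound $\norm{A_k}\le R(\alpha,\beta)$ via the $2\times2$ blocks $B(\lambda)$, followed by submultiplicativity, the triangle inequality, and $\E\norm{\xi_j}\le\sigma$. The only cosmetic difference is that you block-diagonalize $A_k$ explicitly through the eigendecomposition of $\tilde H_k$ and the permutation identity, whereas the paper invokes Polyak's Lemma~5 to reach the same conclusion.
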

Theorem~\ref{th:finite-sum} is proved in Appendix~\ref{sec:proof-thm-finite-sum} for general $L$-smooth $\mu$-strongly-convex functions. Note that if an interpolation condition holds (a weaker assumption than the strong growth condition), then $\std = 0$.

Theorem~\ref{th:finite-sum} shows that the \NASG method can be made to converge in the finite-sum setting for $L$-smooth $\mu$-strongly convex objective functions when run with constant step-sizes.
In particular, the algorithm converges at a linear rate to a neighborhood of the minimizer that is proportional to the variance of the noise terms.
Note that this theorem also allows for negative momentum parameters.
Using the spectral norm to guarantee stability is restrictive, in that it is sufficient but not necessary. There may be values of $\alpha$ and $\beta$ for which $R(\alpha, \beta) \ge 1$ and the algorithm still converges. Having $R(\alpha, \beta) < 1$ ensures that $\norm{r_k} + \norm{v_k}$ decreases at every iteration.

\begin{corollary}
\label{cor:finite-sum-general}
Suppose that $\alpha < \frac{2}{L}$ and $\beta = 0$. Then for all $k \ge 0$
\begin{align*}
    \E \norm{\itr{y}{k} - \xstar}
    \leq&\
        \varrho(\alpha)^{k}
        \norm{\itr{y}{0} - \xstar} + \frac{\alpha}{1-\varrho(\alpha)} \std,
\end{align*}
where $\varrho(\alpha) \defeq \max_{\lambda \in \{\mu, L\}} \abs{1-\alpha\lambda}$.
\end{corollary}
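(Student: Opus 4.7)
The strategy I would take is to redo the matrix-norm argument of Theorem~\ref{th:finite-sum} directly in the special case $\beta = 0$, since then the velocity variable vanishes and the coupled $2d$-dimensional recursion collapses to a one-dimensional scalar recursion on $\E\|r_k\|$. With $\beta = 0$, the \NAG update \eqref{eq:nasg-y}--\eqref{eq:nasg-x} reduces to plain stochastic gradient descent, $y_{k+1} = y_k - \alpha g_k$ (with $y_1 = y_0 = x_0$). Subtracting $\xstar$ and expanding the mini-batch gradient $g_k = \sum_{i=1}^n \nu_{k,i} \nabla f_i(y_k)$ by applying the Taylor identity \eqref{eq:taylor} term-by-term to each $f_i$ around $\xstar$, I would obtain
\[
r_{k+1} = (I - \alpha \tilde{H}_k)\, r_k - \alpha \sum_{i=1}^n \nu_{k,i} \nabla f_i(\xstar),
\]
where $\tilde{H}_k = \sum_{i=1}^n \nu_{k,i} \int_0^1 \nabla^2 f_i(\xstar + t r_k)\,dt$ is a symmetric matrix whose spectrum lies in $[\mu, L]$, being a convex combination (over both $i$ and $t$) of symmetric matrices with that property.

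The core inequality to establish is $\|I - \alpha \tilde{H}_k\|_2 \le \varrho(\alpha)$. Since $\tilde{H}_k$ is symmetric, its spectral norm equals $\max_j |1 - \alpha \lambda_j|$ over its eigenvalues $\lambda_j \in [\mu, L]$, and since the scalar map $\lambda \mapsto |1 - \alpha \lambda|$ is convex, its maximum over the interval $[\mu, L]$ is attained at an endpoint, giving exactly $\max_{\lambda \in \{\mu, L\}} |1 - \alpha \lambda| = \varrho(\alpha)$. The hypothesis $\alpha < \nicefrac{2}{L}$ combined with $\alpha > 0$ then forces both $|1 - \alpha L| < 1$ and $|1 - \alpha \mu| < 1$, so that $\varrho(\alpha) < 1$, and the geometric sum in the statement is finite.

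Putting these together, I would apply the triangle inequality to the one-step recursion to obtain $\|r_{k+1}\| \le \varrho(\alpha)\,\|r_k\| + \alpha \sum_i \nu_{k,i} \|\nabla f_i(\xstar)\|$ almost surely, and then take expectations (using $\E[\nu_{k,i}] = \nicefrac{1}{n}$) to arrive at the scalar recursion $\E\|r_{k+1}\| \le \varrho(\alpha)\,\E\|r_k\| + \alpha \std$. Unrolling this recursion and upper-bounding the resulting geometric sum by $1/(1-\varrho(\alpha))$ produces the stated bound. There is no serious obstacle in this argument; the only point that warrants care is verifying that $\tilde{H}_k$ inherits the $[\mu, L]$ eigenvalue interval from the individual $\nabla^2 f_i$, which is immediate because both averaging over $i$ with nonnegative weights summing to one and integrating in $t$ over $[0,1]$ preserve the property of being a convex combination of symmetric matrices with spectrum in $[\mu, L]$.
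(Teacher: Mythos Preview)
Your proposal is correct and follows essentially the same approach as the paper's proof: both collapse the $\beta=0$ case to the one-step recursion $r_{k+1}=(I-\alpha\tilde H_k)r_k-\alpha z_k$, bound $\|I-\alpha\tilde H_k\|\le\varrho(\alpha)$ via the spectral argument (symmetry of $\tilde H_k$ plus convexity of $\lambda\mapsto|1-\alpha\lambda|$), and then unroll and sum the geometric series. The only cosmetic difference is ordering: the paper unrolls the matrix product first and then bounds each factor, while you bound one step at a time to obtain a scalar recursion on $\E\|r_k\|$ and then unroll; since the bound $\|I-\alpha\tilde H_k\|\le\varrho(\alpha)$ is deterministic, the two orderings are equivalent.
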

Corollary~\ref{cor:finite-sum-general} is proved in Appendix~\ref{sec:proof-of-cor-finite-sum} for smooth strongly-convex functions, and shows the convergence of \SGD in the finite-sum setting without making any assumptions on the noise distribution.
\begin{corollary}
\label{cor:finite-sum}
Suppose that $\alpha = \frac{2}{\mu + L}$ and $\beta = 0$. Then for all $k \ge 0$
\begin{align*}
    \E \norm{\itr{y}{k} - \xstar}
    \leq&\
        \left(\frac{\cond - 1}{\cond + 1} \right)^{k}
        \norm{\itr{y}{0} - \xstar} + \frac{1}{\mu} \std.
\end{align*}
\end{corollary}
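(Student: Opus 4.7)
The plan is to obtain Corollary~\ref{cor:finite-sum} as a direct specialization of Corollary~\ref{cor:finite-sum-general} at the step-size $\alpha = 2/(\mu+L)$; once that earlier corollary is in hand, all that remains is to evaluate the two constants $\varrho(\alpha)$ and $\alpha/(1-\varrho(\alpha))$ at this specific choice.

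First I would check admissibility: since $\mu > 0$, we have $2/(\mu+L) < 2/L$, so the step-size lies in the admissible range required by Corollary~\ref{cor:finite-sum-general}. Next I would compute $\varrho(\alpha) = \max_{\lambda \in \{\mu, L\}} \abs{1 - \alpha\lambda}$. The value $\alpha = 2/(\mu+L)$ is precisely the balancing choice that equates $\abs{1 - \alpha\mu}$ and $\abs{1 - \alpha L}$, both being equal to $(L-\mu)/(L+\mu)$, so $\varrho(\alpha) = (\cond - 1)/(\cond + 1)$. Finally, the noise coefficient simplifies as $\alpha / (1 - \varrho(\alpha)) = (2/(\mu+L)) / (2\mu/(\mu+L)) = 1/\mu$. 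Substituting these two identities into the bound from Corollary~\ref{cor:finite-sum-general} yields the stated inequality.

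There is no real obstacle here: the result is an algebraic specialization of an already established corollary. The only minor subtlety worth flagging is the recognition that $\alpha = 2/(\mu+L)$ is the classical optimal-step-size choice that equalizes the endpoint contraction factors $\abs{1 - \alpha\mu}$ and $\abs{1 - \alpha L}$, mirroring the derivation that underpins the deterministic rate in~\eqref{eq:gd-og}; this is what makes the final noise factor collapse to the clean form $\std/\mu$.
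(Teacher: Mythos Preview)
Your proposal is correct and mirrors the paper's own argument: in Appendix~\ref{sec:proof-of-cor-finite-sum} the authors first establish the general bound of Corollary~\ref{cor:finite-sum-general} and then specialize to $\alpha = \tfrac{2}{\mu+L}$, performing exactly the two algebraic evaluations you describe to obtain $\varrho(\alpha) = \tfrac{\cond-1}{\cond+1}$ and $\alpha/(1-\varrho(\alpha)) = 1/\mu$.
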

Corollary~\ref{cor:finite-sum}, is proved in Appendix~\ref{sec:proof-of-cor-finite-sum} for smooth strongly-convex functions, and shows that \SGD converges to a neighborhood of $x^\star$ at the same linear rate as \GD, viz.~\eqref{eq:gd-og}, in the finite-sum setting, without making any assumptions on the noise distribution, such as the strong-growth condition; a novel result to the best of our knowledge. Moreover, when the interpolation condition holds, we have that $\std = 0$.

\begin{figure*}[!t]
\centering
\subfloat[.33\textwidth][$Q=16$]{
    \includegraphics[width=.33\textwidth]{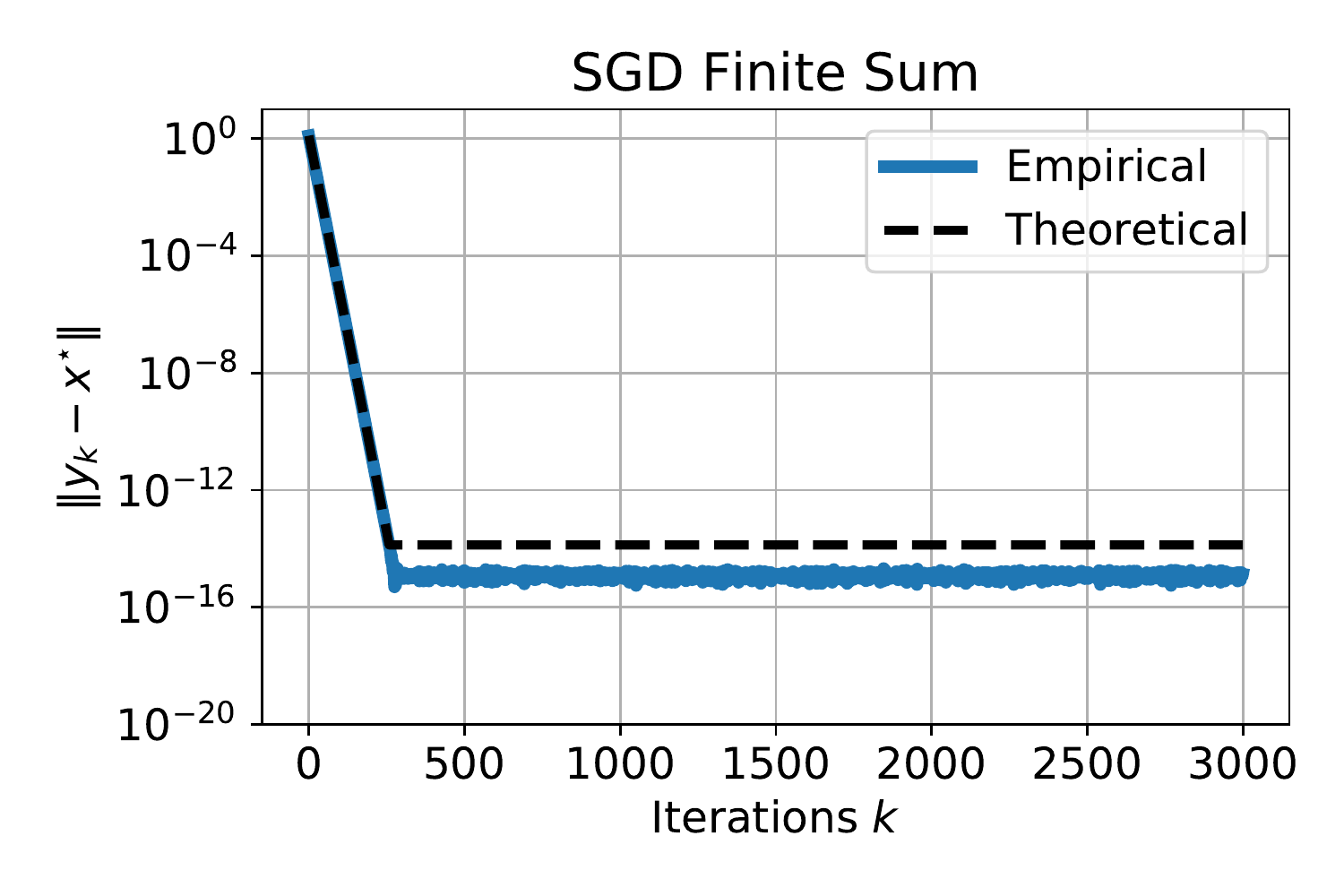}
    \label{sub-fig:sgd-finite-sum-1}}
\subfloat[.33\textwidth][$Q=32$]{
    \includegraphics[width=.33\textwidth]{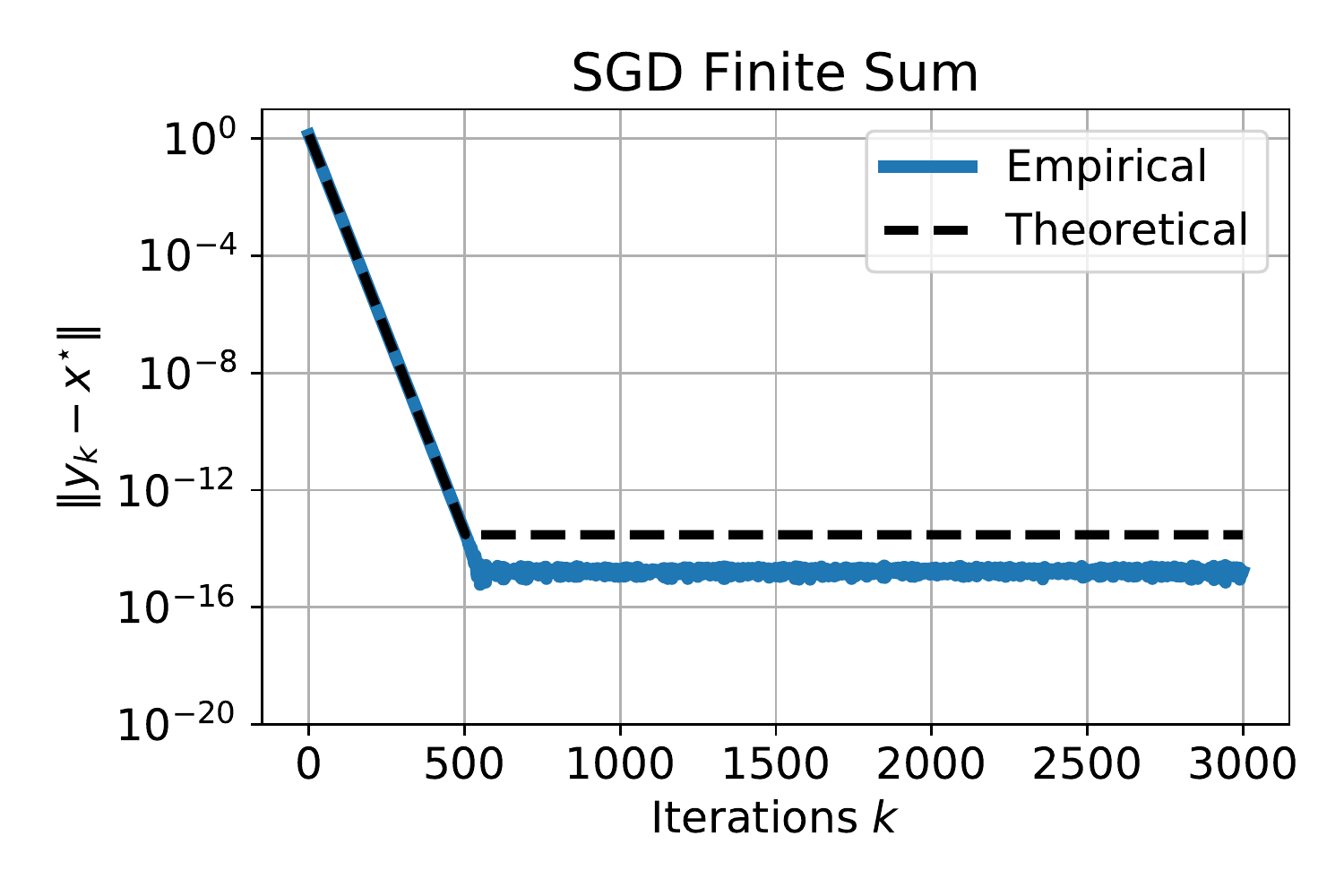}
    \label{sub-fig:sgd-finite-sum-2}}
\subfloat[.33\textwidth][$Q=64$]{
    \includegraphics[width=.33\textwidth]{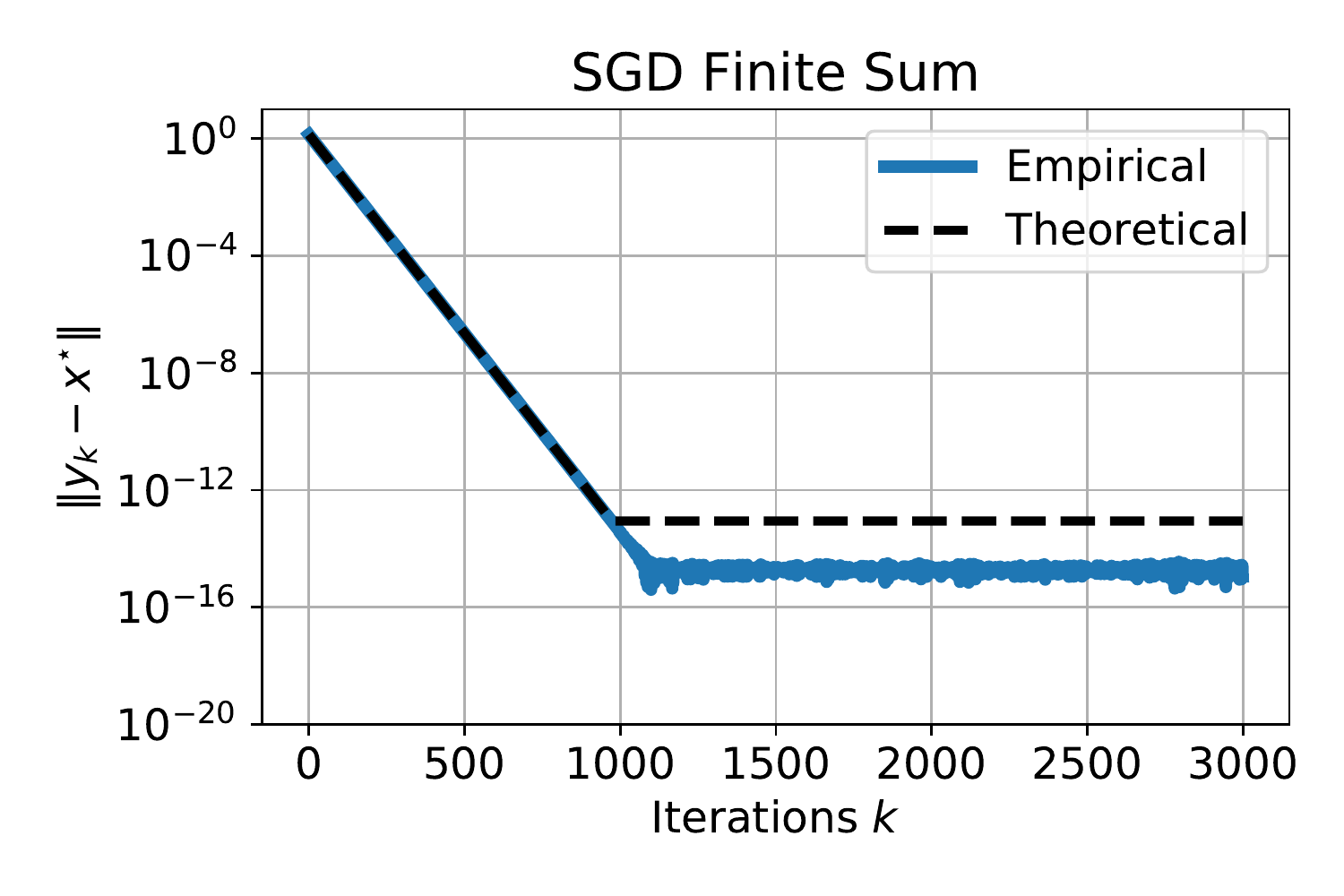}
    \label{sub-fig:sgd-finite-sum-3}}
\caption{Visualizing the accuracy with which Corollary~\ref{cor:finite-sum} predicts the theoretical convergence of \SGD with step-size $\alpha=\nicefrac{2}{(\mu+L)}$ in the finite-sum setting, when minimizing randomly generated least-squares problems with various condition numbers $Q$. The finite-sum problem consists of 25000 data samples, with 2 features each, partitioned into 50 mini-batches, each with condition number $Q$. At each iteration one of the 50 mini-batches is sampled to compute a stochastic gradient step. Dashed lines indicate the theoretical convergence rate and variance bound from Corollary~\ref{cor:finite-sum}. Solid lines indicate the empirical convergence observed in practice. The convergence rate and variance bound in Corollary~\ref{cor:finite-sum} is tight.}
\label{fig:sgd-finite-sum}
\end{figure*}

Figure~\ref{fig:sgd-finite-sum} illustrates the tightness of the convergence rate and variance bound in Corollary~\ref{cor:finite-sum} when minimizing randomly generated least-squares problems with various condition numbers.
The finite-sum least-squares problem consists of 25000 data samples, with 2 features each, partitioned into 50 mini-batches, each with condition number $Q$. At each iteration, one of the 50 mini-batches is sampled to compute a stochastic gradient step.
Dashed lines indicate the theoretical convergence rate and variance bound from Corollary~\ref{cor:finite-sum}. Solid lines indicate the empirical convergence observed in practice. The convergence rate and variance bound in Corollary~\ref{cor:finite-sum} provide a tight characterization of the \SGD convergence observed in practice.

\section{Conclusions}

This paper contributes to a broader understanding of the \NASG method in stochastic settings. Although the method behaves well in the stochastic approximation setting, it may diverge in the finite-sum setting when using the usual step-size and momentum. This emphasizes the important role the bounded variance assumption plays in the stochastic approximation setting, since a similar condition does not necessarily hold in the finite-sum setting. Forsaking acceleration guarantees, we provide conditions under which the \NASG method is guaranteed to converge in the smooth strongly-convex finite-sum setting with constant step-size and momentum, without assuming any growth or interpolation condition.

We believe there is scope to obtain tighter convergence bounds for the \NASG method with constant step-size and momentum in the finite-sum setting.
Convergence guarantees using the joint spectral radius are likely to provide the tightest and most intuitive bounds, but are also difficult to obtain.
To date, Lyapunov-based proof techniques have been the most fruitful in the literature.

We also believe that there is scope to improve the robustness of Nesterov's method to inconsistent mini-batches in the finite-sum setting.
For example, adaptive restarts, which have been show to improve the convergence rate of Nesterov's method~\cite{o2015adaptive} with deterministic gradients, may also be able to mitigate the divergence behaviour identified in this paper.

We also believe that future work understanding the role that negative momentum parameters play in practice may lead to improved optimization of machine learning models.
All convergence guarantees and variance bounds in this paper hold for both positive and negative momentum parameters.
Our variance bounds and theoretical rates support the observation that negative momentum parameters may slow-down convergence, but can also lead to non-trivial variance reduction.
Previous work has found negative momentum to be useful in asynchronous distributed optimization \cite{mitliagkas2016asynchrony} and for stabilizing adversarial training \cite{gidel2018negative}.
Although it is almost certainly not possible (in general) to obtain zero variance solutions by only using negative momentum parameters, for Deep Learning practitioners that already use the \NASG method to train their models, perhaps momentum schedules incorporating negative values towards the end of training can improve performance.

\section*{Acknowledgements}
We thank Leon Bottou, Aaron Defazio, Alexandre Defossez, Tom Goldstein, and Mark Tygert for feedback and conversations about earlier versions of this work.

\bibliographystyle{icml/icml2020}
\bibliography{refs}

\begin{thebibliography}{42}
\providecommand{\natexlab}[1]{#1}
\providecommand{\url}[1]{\texttt{#1}}
\expandafter\ifx\csname urlstyle\endcsname\relax
  \providecommand{\doi}[1]{doi: #1}\else
  \providecommand{\doi}{doi: \begingroup \urlstyle{rm}\Url}\fi

\bibitem[Allen-Zhu(2017)]{allen2017katyusha}
Allen-Zhu, Z.
\newblock Katyusha: The first direct acceleration of stochastic gradient
  methods.
\newblock \emph{The Journal of Machine Learning Research}, 18\penalty0
  (1):\penalty0 8194--8244, 2017.

\bibitem[Allen-Zhu \& Orecchia(2014)Allen-Zhu and Orecchia]{allen2014linear}
Allen-Zhu, Z. and Orecchia, L.
\newblock Linear coupling: An ultimate unification of gradient and mirror
  descent.
\newblock \emph{arXiv preprint arXiv:1407.1537}, 2014.

\bibitem[Aybat et~al.(2019)Aybat, Fallah, G\"{u}rb\"{u}zbalaban, and
  Ozdaglar]{aybat2019robust}
Aybat, N.~S., Fallah, A., G\"{u}rb\"{u}zbalaban, M., and Ozdaglar, A.
\newblock Robust accelerated gradient methods for smooth strongly convex
  functions.
\newblock Nov. 2019.

\bibitem[Bottou et~al.(2018)Bottou, Curtis, and
  Nocedal]{bottou2018optimization}
Bottou, L., Curtis, F.~E., and Nocedal, J.
\newblock Optimization methods for large-scale machine learning.
\newblock \emph{SIAM Review}, 60\penalty0 (2):\penalty0 223--311, 2018.

\bibitem[Bubeck(2015)]{bubeck2015convex}
Bubeck, S.
\newblock Convex optimization: Algorithms and complexity.
\newblock \emph{Foundations and Trends in Machine Learning}, 8\penalty0
  (3-4):\penalty0 231--357, 2015.

\bibitem[Can et~al.(2019)Can, Gurbuzbalaban, and Zhu]{can2019accelerated}
Can, B., Gurbuzbalaban, M., and Zhu, L.
\newblock Accelerated linear convergence of stochastic momentum methods in
  wasserstein distances.
\newblock \emph{arXiv preprint arXiv:1901.07445}, 2019.

\bibitem[Cohen et~al.(2018)Cohen, Diakonikolas, and
  Orecchia]{cohen2018acceleration}
Cohen, M.~B., Diakonikolas, J., and Orecchia, L.
\newblock On acceleration with noise-corrupted gradients.
\newblock In \emph{International Conference on Machine Learning (ICML)}, 2018.

\bibitem[{d'Aspremont}(2008)]{daspremont2008smooth}
{d'Aspremont}, A.
\newblock Smooth optimization with approximate gradient.
\newblock \emph{SIAM Journal on Optimization}, 19\penalty0 (3):\penalty0
  1171--1183, 2008.

\bibitem[Defazio(2019)]{defazio2019curved}
Defazio, A.
\newblock On the curved geometry of accelerated optimization.
\newblock In \emph{Advances in Neural Information Processing Systems}, pp.\
  1764--1773, 2019.

\bibitem[Devolder et~al.(2014)Devolder, Glineur, and
  Nesterov]{devolder2014first}
Devolder, O., Glineur, F., and Nesterov, Y.
\newblock First-order methods of smooth convex optimization with inexact
  oracle.
\newblock \emph{Mathematical Programming}, 146\penalty0 (1--2):\penalty0
  37--75, 2014.

\bibitem[Ghadimi \& Lan(2012)Ghadimi and Lan]{ghadimi2012optimal}
Ghadimi, S. and Lan, G.
\newblock Optimal stochastic approximation algorithms for strongly convex
  stochastic composite optimization {I}: A generic algorithmic framework.
\newblock \emph{{SIAM} Journal of Optimization}, 22\penalty0 (4), 2012.

\bibitem[Ghadimi \& Lan(2013)Ghadimi and Lan]{ghadimi2013optimal}
Ghadimi, S. and Lan, G.
\newblock Optimal stochastic approximation algorithms for strongly convex
  stochastic composite optimization {II}: Shrinking procedures and optimal
  algorithms.
\newblock \emph{{SIAM} Journal of Optimization}, 23\penalty0 (4), 2013.

\bibitem[Gidel et~al.(2018)Gidel, Hemmat, Pezeshki, Lepriol, Huang,
  Lacoste-Julien, and Mitliagkas]{gidel2018negative}
Gidel, G., Hemmat, R.~A., Pezeshki, M., Lepriol, R., Huang, G., Lacoste-Julien,
  S., and Mitliagkas, I.
\newblock Negative momentum for improved game dynamics.
\newblock \emph{arXiv preprint arXiv:1807.04740}, 2018.

\bibitem[Gitman et~al.(2019)Gitman, Lang, Zhang, and
  Xiao]{gitman2019understanding}
Gitman, I., Lang, H., Zhang, P., and Xiao, L.
\newblock Understanding the role of momentum in stochastic gradient methods.
\newblock In \emph{Advances in Neural Information Processing Systems}, pp.\
  9630--9640, 2019.

\bibitem[Gower et~al.(2019)Gower, Loizou, Qian, Sailanbayev, Shulgin, and
  Richtarik]{gower2019sgd}
Gower, R., Loizou, N., Qian, X., Sailanbayev, A., Shulgin, E., and Richtarik,
  P.
\newblock {SGD}: General analysis and improved rates.
\newblock In \emph{International Conference on Machine Learning}, pp.\
  5200--5209, 2019.

\bibitem[Horn \& Johnson(2013)Horn and Johnson]{hornJohnson}
Horn, R.~A. and Johnson, C.~R.
\newblock \emph{Matrix Analysis}.
\newblock Cambridge University Press, 2nd edition, 2013.

\bibitem[Hu \& Lessard(2017)Hu and Lessard]{hu2017dissipativity}
Hu, B. and Lessard, L.
\newblock Dissipativity theory for {Nesterov's} accelerated method.
\newblock In \emph{Proceedings of the 34th International Conference on Machine
  Learning-Volume 70}, pp.\  1549--1557. JMLR. org, 2017.

\bibitem[Kidambi et~al.(2018)Kidambi, Netrapalli, Jain, and
  Kakade]{kidambi2018insufficiency}
Kidambi, R., Netrapalli, P., Jain, P., and Kakade, S.~M.
\newblock On the insufficiency of existing momentum schemes for stochastic
  optimization.
\newblock In \emph{International Conference on Learning Representations}, 2018.

\bibitem[Kulunchakov \& Mairal(2019{\natexlab{a}})Kulunchakov and
  Mairal]{kulunchakov2019estimate}
Kulunchakov, A. and Mairal, J.
\newblock Estimate sequences for variance-reduced stochastic composite
  optimization.
\newblock In \emph{International Conference on Machine Learning},
  2019{\natexlab{a}}.

\bibitem[Kulunchakov \& Mairal(2019{\natexlab{b}})Kulunchakov and
  Mairal]{kulunchakov2019generic}
Kulunchakov, A. and Mairal, J.
\newblock A generic acceleration framework for stochastic composite
  optimization.
\newblock In \emph{Advances in Neural Information Processing Systems},
  2019{\natexlab{b}}.

\bibitem[Laborde \& Oberman(2019)Laborde and Oberman]{laborde2019lyapunov}
Laborde, M. and Oberman, A.
\newblock A {Lyapunov} analysis for accelerated gradient methods: From
  deterministic to stochastic case.
\newblock Sep. 2019.

\bibitem[Lan(2012)]{lan2012optimal}
Lan, G.
\newblock An optimal method for stochastic composite optimization.
\newblock \emph{Mathematical Programming}, 133\penalty0 (1-2):\penalty0
  365--397, 2012.

\bibitem[Lenard \& Minkoff(1984)Lenard and Minkoff]{lenard1984randomly}
Lenard, M.~L. and Minkoff, M.
\newblock Randomly generated test problems for positive definite quadratic
  programming.
\newblock \emph{ACM Transactions on Mathematical Software (TOMS)}, 10\penalty0
  (1):\penalty0 86--96, 1984.

\bibitem[Lessard et~al.(2016)Lessard, Recht, and Packard]{lessard2016analysis}
Lessard, L., Recht, B., and Packard, A.
\newblock Analysis and design of optimization algorithms via integral quadratic
  constraints.
\newblock \emph{SIAM Journal on Optimization}, 26\penalty0 (1):\penalty0
  57--95, 2016.

\bibitem[Liu \& Belkin(2020)Liu and Belkin]{liu2020accelerating}
Liu, C. and Belkin, M.
\newblock Accelerating {SGD} with momentum for over-parameterized learning.
\newblock In \emph{International Conference on Learning Representations}, 2020.

\bibitem[Loizou \& Richt{\'a}rik(2017)Loizou and
  Richt{\'a}rik]{loizou2017momentum}
Loizou, N. and Richt{\'a}rik, P.
\newblock Momentum and stochastic momentum for stochastic gradient, newton,
  proximal point and subspace descent methods.
\newblock \emph{arXiv preprint arXiv:1712.09677}, 2017.

\bibitem[Ma \& Yarats(2019)Ma and Yarats]{ma2018quasihyperbolic}
Ma, J. and Yarats, D.
\newblock Quasi-hyperbolic momentum and adam for deep learning.
\newblock In \emph{International Conference on Learning Representations}, 2019.

\bibitem[Ma et~al.(2018)Ma, Bassily, and Belkin]{ma2018power}
Ma, S., Bassily, R., and Belkin, M.
\newblock The power of interpolation: Understanding the effectiveness of {SGD}
  in modern over-parameterized learning.
\newblock In \emph{International Conference on Machine Learning}, 2018.

\bibitem[Mitliagkas et~al.(2016)Mitliagkas, Zhang, Hadjis, and
  R{\'e}]{mitliagkas2016asynchrony}
Mitliagkas, I., Zhang, C., Hadjis, S., and R{\'e}, C.
\newblock Asynchrony begets momentum, with an application to deep learning.
\newblock In \emph{2016 54th Annual Allerton Conference on Communication,
  Control, and Computing (Allerton)}, pp.\  997--1004. IEEE, 2016.

\bibitem[Nesterov(1983)]{nesterov1983method}
Nesterov, Y.
\newblock A method for solving a convex programming problem with convergence
  rate $\mathcal{O}(1/k^2)$.
\newblock \emph{Soviet Mathematics Doklady}, 27:\penalty0 372--367, 1983.

\bibitem[Nesterov(2004)]{nesterov2004introductory}
Nesterov, Y.
\newblock Introductory lectures on convex optimization: a basic course.
\newblock \emph{Kluwer Academic Publishers}, pp.\  71--81, 2004.

\bibitem[O'Donoghue \& Cand\`{e}s(2015)O'Donoghue and
  Cand\`{e}s]{o2015adaptive}
O'Donoghue, B. and Cand\`{e}s, E.
\newblock Adaptive restart for accelerated gradient schemes.
\newblock \emph{Foundations of computational mathematics}, 15\penalty0
  (3):\penalty0 715--732, 2015.

\bibitem[Pedregosa et~al.(2011)Pedregosa, Varoquaux, Gramfort, Michel, Thirion,
  Grisel, Blondel, Prettenhofer, Weiss, Dubourg, Vanderplas, Passos,
  Cournapeau, Brucher, Perrot, and Duchesnay]{scikit-learn}
Pedregosa, F., Varoquaux, G., Gramfort, A., Michel, V., Thirion, B., Grisel,
  O., Blondel, M., Prettenhofer, P., Weiss, R., Dubourg, V., Vanderplas, J.,
  Passos, A., Cournapeau, D., Brucher, M., Perrot, M., and Duchesnay, E.
\newblock Scikit-learn: Machine learning in {P}ython.
\newblock \emph{Journal of Machine Learning Research}, 12:\penalty0 2825--2830,
  2011.

\bibitem[Polyak(1964)]{polyak1964some}
Polyak, B.~T.
\newblock Some methods of speeding up the convergence of iteration methods.
\newblock \emph{USSR Computational Mathematics and Mathematical Physics},
  4\penalty0 (5):\penalty0 1--17, 1964.

\bibitem[Polyak(1987)]{polyakIntro}
Polyak, B.~T.
\newblock \emph{Introduction to Optimization}.
\newblock Optimization Software Inc., 1987.

\bibitem[Rota \& Strang(1960)Rota and Strang]{rota1960note}
Rota, G.-C. and Strang, W.
\newblock A note on the joint spectral radius.
\newblock 1960.

\bibitem[Schmidt \& {Le Roux}(2013)Schmidt and {Le Roux}]{schmidt2013fast}
Schmidt, M. and {Le Roux}, N.
\newblock Fast convergence of stochastic gradient descent under a strong growth
  condition.
\newblock Aug. 2013.

\bibitem[Su et~al.(2014)Su, Boyd, and Cand\`{e}s]{su2014differential}
Su, W., Boyd, S., and Cand\`{e}s, E.
\newblock A differential equation for modeling nesterov's accelerated gradient
  method: Theory and insights.
\newblock In \emph{Advances in Neural Information Processing Systems}, pp.\
  2510--2518, 2014.

\bibitem[Sutskever et~al.(2013)Sutskever, Martens, Dahl, and
  Hinton]{sutskever2013importance}
Sutskever, I., Martens, J., Dahl, G., and Hinton, G.
\newblock On the importance of initialization and momentum in deep learning.
\newblock In \emph{International conference on machine learning}, pp.\
  1139--1147, 2013.

\bibitem[Vaswani et~al.(2019)Vaswani, Bach, and Schmidt]{vaswani2019fast}
Vaswani, S., Bach, F., and Schmidt, M.
\newblock Fast and faster convergence of {SGD} for over-parameterized models
  (and an accelerated perceptron).
\newblock In \emph{International Conference on Machine Learning}, 2019.

\bibitem[Wiegerinck et~al.(1994)Wiegerinck, Komoda, and
  Heskes]{wiegerinck1994stochastic}
Wiegerinck, W., Komoda, A., and Heskes, T.
\newblock Stochastic dynamics of learning with momentum in neural networks.
\newblock \emph{Journal of Physics A: Mathematical and General}, 27\penalty0
  (13):\penalty0 4425, 1994.

\bibitem[Yang et~al.(2016)Yang, Lin, and Li]{yang2016unified}
Yang, T., Lin, Q., and Li, Z.
\newblock Unified convergence analysis of stochastic momentum methods for
  convex and non-convex optimization.
\newblock \emph{arXiv preprint arXiv:1604.03257}, 2016.

\end{thebibliography}

\appendix
\counterwithin{figure}{section}

\onecolumn
\icmltitle{Supplementary Material for\\ On the Convergence of Nesterov's Accelerated Gradient Method \\ in Stochastic Settings}

\section{Proof of Theorem~\ref{thm:stochapprox-quadratic}}
\label{sec:stochapprox-quadratic-proof}

We begin from \eqref{eq:stochapprox-quadratic-unrolled}. By taking the squared norm on both sides, and recalling that the random vectors $\itr{\noise}{k}$ have zero mean and are mutually independent, we have
\begin{align}
	\mathbb{E} \norm{y_{k+1} - x^\star}^2 &\le \mathbb{E} \norm{ \begin{bmatrix} r_{k+1} \\ v_k \end{bmatrix} }^2 \notag \\
	&= \mathbb{E}_{\itr{\noise}{k},\ldots,\itr{\noise}{1}} \left[ \norm{ A^k \begin{bmatrix} x_1 - x^\star \\ 0 \end{bmatrix} - \alpha \sum_{j=1}^k A^{k - j} \begin{bmatrix} (1 + \beta)I \\ I\end{bmatrix} \zeta_j }^2\right] \notag \\
	&= \mathbb{E}_{\itr{\noise}{k}} \left[ \cdots \mathbb{E}_{\itr{\noise}{1}} \left[ \norm{ A^k \begin{bmatrix} x_1 - x^\star \\ 0 \end{bmatrix} - \alpha \sum_{j=1}^k A^{k - j} \begin{bmatrix} (1 + \beta)I \\ I\end{bmatrix} \zeta_j }^2\right] \cdots \right]\notag \\
	&= \norm{ A^k \begin{bmatrix} x_1 - x^\star \\ 0 \end{bmatrix} }^{2} + \alpha^{2} \sum_{j=1}^k \E_{\itr{\noise}{j}} \norm{ A^{k - j} \begin{bmatrix} (1 + \beta)I \\ I\end{bmatrix} \zeta_j }^2 \notag \\
	&\le \norm{A^k}^2 \norm{x_1 - x^\star}^2 + \alpha^2 ((1 + \beta)^2 + 1) \sigma^2 \sum_{j=1}^k \norm{A^{k-j}}^2. \label{eq:stochapprox-proof-1}
\end{align}
Recall that the spectral radius of a square matrix $A \in \R^{2d \times 2d}$ is defined as $\max_{i=1, \dots, 2d} \abs{\lambda_i(A)}$, where $\lambda_i(A)$ is the $i$th eigenvalue of $A$. The spectral radius satisfies \citep{hornJohnson}
\[
\rho(A)^k \le \norm{A^k} \quad \text{ for all } k,
\]
and (Gelfand's theorem)
\[
\lim_{k \rightarrow \infty} \norm{A^k}^{1/k} = \rho(A).
\]
Hence, for any $\epsilon > 0$, there exists a $K_\epsilon$ such that $\norm{A^k}^{1/k} \le (\rho(A) + \epsilon)$ for all $k \ge K_\epsilon$. Let
\begin{equation}
\label{eq:C-val-thm1}
    C_{\epsilon} = \max_{k < K_\epsilon} \max\left\{ 1, \frac{\norm{A^k}}{(\rho(A) + \epsilon)^k} \right\}.
\end{equation}
Then $\norm{A^k} \le C_\epsilon (\rho(A) + \epsilon)^k$ for all $k$. Moreover, if $\norm{A^k}^{1/k}$ converges monotonically to $\rho(A)$, then $C_\epsilon \le \norm{A} / \rho(A)$.

Now, recall that we have assumed $f(x) = \frac{1}{2} x^\T H x - b^\T x + c$ where $H \in \R^{d \times d}$ is symmetric, and we have also assumed that $f$ is $L$-smooth and $\mu$-strongly convex. Thus all eigenvalues of $H$ satisfy $\mu \le \lambda_i(H) \le L$.

\begin{lemma} \label{lem:spectral-radius-A}
For $A$ as defined in \eqref{eq:stochapprox-quadratic-A}, we have $\rho(A) = \max\{ \rho_{\mu}(\alpha, \beta), \rho_L(\alpha, \beta)\}$ where
\[
\rho_\lambda(\alpha, \beta) = \begin{cases} \tfrac{1}{2} \abs{(1 + \beta)(1 - \alpha \lambda)} + \tfrac{1}{2} \sqrt{\Delta_\lambda} & \text{ if } \Delta_\lambda \ge 0, \\
\sqrt{\beta(1 - \alpha \lambda)}  & \text{ otherwise,} \end{cases}
\]
and $\Delta_\lambda = (1 + \beta)^2 (1 - \alpha \lambda)^2 - 4 \beta (1 - \alpha \lambda)$.
\end{lemma}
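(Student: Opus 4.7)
My plan is to exploit the fact that $H$ is symmetric (so orthogonally diagonalizable) to reduce the spectral radius computation for the $2d\times 2d$ matrix $A$ to that of a family of $2\times 2$ matrices indexed by the eigenvalues of $H$. Write $H = U\Lambda U^\T$ with $U$ orthogonal and $\Lambda = \operatorname{diag}(\lambda_1,\dots,\lambda_d)$ where each $\lambda_i \in [\mu,L]$. Conjugating $A$ by $\operatorname{diag}(U^\T, U^\T)$ replaces each occurrence of $H$ by $\Lambda$, and then the same permutation $P$ used in the proof of Theorem~\ref{thm:counter-example} (which interleaves the two block copies coordinate-wise) reduces the result to a block-diagonal matrix with $2\times 2$ blocks
\[
B(\lambda_i) = \begin{bmatrix} 1 - \alpha(1+\beta)\lambda_i & \beta^2 \\ -\alpha \lambda_i & \beta \end{bmatrix}.
\]
Since spectral radius is preserved by similarity and equals the maximum of the spectral radii of the diagonal blocks, $\rho(A) = \max_i \rho(B(\lambda_i))$.

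Next, I would compute the characteristic polynomial of $B(\lambda)$ directly. Expanding the determinant yields
\[
t^2 - (1+\beta)(1-\alpha\lambda)\,t + \beta(1-\alpha\lambda) = 0,
\]
with discriminant $\Delta_\lambda = (1+\beta)^2(1-\alpha\lambda)^2 - 4\beta(1-\alpha\lambda)$. Two regimes appear. When $\Delta_\lambda \geq 0$, both roots are real; the larger in absolute value is $\tfrac{1}{2}|(1+\beta)(1-\alpha\lambda)| + \tfrac{1}{2}\sqrt{\Delta_\lambda}$, matching the first branch of $\rho_\lambda$. When $\Delta_\lambda < 0$, the roots form a complex conjugate pair, so by Vieta they share common magnitude equal to the square root of the product of the roots, namely $\sqrt{\beta(1-\alpha\lambda)}$, matching the second branch. (A minor check: $\Delta_\lambda < 0$ forces $1-\alpha\lambda > 0$, so the radical is real.)

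The final step is to argue that the maximum of $\rho_\lambda(\alpha,\beta)$ over $\lambda \in [\mu, L]$ is attained at one of the endpoints, so that $\rho(A) = \max\{\rho_\mu(\alpha,\beta), \rho_L(\alpha,\beta)\}$. In the complex regime, $\sqrt{\beta(1-\alpha\lambda)}$ is monotone in $\lambda$. In the real regime, writing $s = 1-\alpha\lambda$, both $|(1+\beta)s|$ and $\sqrt{(1+\beta)^2 s^2 - 4\beta s}$ are convex functions of $s$ (hence of $\lambda$), so their sum is convex and maximized at an endpoint. The only subtlety is at the boundary between regimes, which I would handle by noting that $\rho_\lambda$ is continuous in $\lambda$ (the two branches agree when $\Delta_\lambda = 0$), so the endpoint argument extends through the transition.

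The main obstacle I anticipate is keeping the two regimes and the sign of $1-\alpha\lambda$ straight simultaneously, especially because the lemma does not assume $\alpha L \le 1$; some care is required to verify that the absolute-value form of the real-regime branch is correct on the entire interval $[\mu, L]$ without any implicit sign assumption on $1-\alpha\lambda$, and that the convexity-in-$\lambda$ endpoint argument remains valid across the possible sign change.
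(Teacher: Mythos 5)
Your overall route coincides with the paper's: reduce $\rho(A)$ to the spectral radii of the $2\times 2$ matrices $B(\lambda)$ over the eigenvalues $\lambda$ of $H$ (the paper invokes Lemma~5 of \citet{polyak1964some} on block matrices with commuting blocks rather than the explicit orthogonal-plus-permutation similarity, but the two are equivalent, and the permutation construction is exactly the one used in Appendix~\ref{sec:permutation}), read off the two branches of $\rho_\lambda$ from the characteristic polynomial $\xi^2-(1+\beta)(1-\alpha\lambda)\xi+\beta(1-\alpha\lambda)=0$, and finish with an endpoint argument over $\lambda\in[\mu,L]$. The first two steps are correct; note also that in the complex regime the product of the conjugate roots equals $\beta(1-\alpha\lambda)=\abs{\xi}^2>0$ automatically, so the radical is real without your side claim that $\Delta_\lambda<0$ forces $1-\alpha\lambda>0$ (which in any case holds only for $\beta\ge 0$).

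The gap is in the last step. Writing $s=1-\alpha\lambda$ and $g(s)=(1+\beta)^2s^2-4\beta s$, the function $\sqrt{g(s)}$ is \emph{concave}, not convex, on any interval where $g>0$ and $\beta\neq 0$: a direct computation gives $(\sqrt{g})''=\bigl(2g''g-(g')^2\bigr)/\bigl(4g^{3/2}\bigr)=-16\beta^2/\bigl(4g^{3/2}\bigr)\le 0$. So ``sum of two convex functions'' does not apply; on a real-root component with $s>0$ the branch $\tfrac12\abs{(1+\beta)s}+\tfrac12\sqrt{g(s)}$ is linear plus concave, i.e.\ concave, and a concave function on an interval need not attain its maximum at an endpoint. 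The conclusion is still true, but it must be argued via monotonicity rather than convexity: on each real-root component $\rho_\lambda$ is monotone in $\lambda$ (for $\beta\in[0,1)$, increasing in $s$ where $s>0$ and hence decreasing in $\lambda$ there, and increasing in $\lambda$ where $s<0$), the complex branch $\sqrt{\beta(1-\alpha\lambda)}$ is monotone, and gluing these by continuity at $\Delta_\lambda=0$ shows $\rho_\lambda$ is unimodal, hence quasi-convex, in $\lambda$. This quasi-convexity is precisely what the paper imports from \citet{lessard2016analysis} to place the maximum at $\lambda=\mu$ or $\lambda=L$; you should replace your convexity claim with that argument (or carry out the monotonicity case analysis explicitly, including the sign cases you flag at the end).
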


\begin{proof}
Since $H$ is real and symmetric, it has a real eigenvalue decomposition $H = U \Lambda_H U^\T$, where $U \in \R^{d \times d}$ is an orthogonal matrix and $\Lambda_H$ is the diagonal matrix of eigenvalues of $H$. Observe that $A$ can be viewed as a $2 \times 2$ block matrix with $d \times d$ blocks that all commute with each other, since each block is an affine matrix function of $H$. Thus, by \citet[Lemma~5]{polyak1964some}, $\xi$ is an eigenvalue of $A$ if and only if there is an eigenvalue $\lambda$ of $H$, such that $\xi$ is an eigenvalue of the $2 \times 2$ matrix
\begin{equation}
\label{eq:b-lambda}
    B(\lambda) \defeq \begin{bmatrix}
        1 - \alpha (1 + \beta) \lambda & \beta^2 \\
        - \alpha \lambda & \beta
    \end{bmatrix}.
\end{equation}
The characteristic polynomial of $B(\lambda)$ is 
\begin{align*}
\xi^2 - (1 + \beta)(1 - \alpha \lambda) \xi + \beta (1 - \alpha \lambda) = 0,
\end{align*}
from which it follows that eigenvalues of $B(\lambda)$ are given by $\rho_\lambda(\alpha, \beta)$; see, \eg., \citet[Appendix~A]{lessard2016analysis}. Note that the characteristic polynomial of $B(\lambda)$ is the same as the characteristic polynomial of a different matrix appearing in \citet{lessard2016analysis}, that arises from a different analysis of the \NAG method. Finally, as discussed in \citet{lessard2016analysis}, for any fixed values of $\alpha$ and $\beta$, the function $\rho_\lambda(\alpha, \beta)$ is quasi-convex in $\lambda$, and hence the maximum over all eigenvalues of $A$ is achieved at one of the extremes $\lambda = \mu$ or $\lambda = L$.
\end{proof}

To complete the proof of Theorem~\ref{thm:stochapprox-quadratic}, use Lemma~\ref{lem:spectral-radius-A} with \eqref{eq:stochapprox-proof-1} to obtain that, for any $\epsilon > 0$, there is a positive constant $C_\epsilon$ such that
\begin{align*}
\mathbb{E}[\norm{y_{k+1} - x^\star}^2] &\le C_\epsilon \left( (\rho(A) + \epsilon)^{2k} \norm{x_0 - x^\star}^2 + \alpha^2 ((1 + \beta)^2 + 1) \sigma^2 \sum_{j=1}^k (\rho(A) + \epsilon)^{2(k-j)} \right) \\
&\le C_\epsilon \left( (\rho(A) + \epsilon)^{2k} \norm{x_0 - x^\star}^2 + \frac{\alpha^2 ((1 + \beta)^2 + 1)}{1 - (\rho(A) + \epsilon)^2} \sigma^2 \right).
\end{align*}

\subsection{Estimating the constant $C_\epsilon$}
\label{sub-sec:estimating-cepsilon}
For the theoretical plots in the numerical experiments in Section~\ref{sub-sec:stochastic-approximation-numerical} and in Appendix~\ref{sec:additional_experiments} below, we estimate the constant $C_\epsilon$ by taking $K_\epsilon \approx 2$ in~\eqref{eq:C-val-thm1}.
That is, for arbitrarily small $\epsilon$ and all $k \geq 2$, we approximate $\norm{A^k}^{1/k}$ by $(\radius{A} + \epsilon)$.
Therefore, the summation term in~\eqref{eq:stochapprox-proof-1} is approximated as
\begin{equation}
\label{eq:C-approx}
    \alpha^2((1+\beta)^2 + 1)\left(\frac{1}{1-\rho(\alpha, \beta)^2} + (\norm{A}^2 - \rho(\alpha, \beta)^2)\right),
\end{equation}
where $\norm{A}$ denotes the largest singular value of $A$ in~\eqref{eq:stochapprox-quadratic-A}, and $\rho(\alpha,\beta)$ is the largest eigenvalue of $A$.
The first term in~\eqref{eq:C-approx} corresponds to the geometric limit of the summation term in~\eqref{eq:stochapprox-proof-1} after taking matrix norms and approximating the norms of matrix products by powers of the spectral radius for all products $k \geq 2$.
The difference term in~\eqref{eq:C-approx} is simply used to correct for the case $k = 1$.
Setting $C_\epsilon \frac{\alpha^2((1+\beta)^2 + 1)}{1-\rho(\alpha,\beta)^2}$ equal to~\eqref{eq:C-approx} and solving for $C_\epsilon$ gives us the approximate expression for $C_\epsilon$ used in the theoretical plots in Section~\ref{sub-sec:stochastic-approximation-numerical}.

\section{Proofs of Corollary~\ref{cor:stochapprox-nasg} and Theorem~\ref{thm:stochapprox-sgd}}
\label{sec:proofs-stochapprox-cors}

Taking $\alpha = 1/L$ and $\beta = \frac{\sqrt{\cond}-1}{\sqrt{\cond} + 1}$, we find that $\rho(\alpha, \beta) = \frac{\sqrt{\cond} - 1}{\sqrt{\cond}}$. 
Since $f(x) = \frac{1}{2}x^T Hx - b^T x + c$ is an $L$-smooth $\mu$-strongly convex quadratic, all eigenvalues of $H$ are bounded between $\mu$ and $L$.
Therefore, from~\citet[Lemma~5]{polyak1964some}, we have that $\norm{A^k}_2 \leq \max_{\lambda \in [\mu, L]} \norm{B(\lambda)^k}_2 \leq \max_{\lambda \in [\mu, L]} \sqrt{d} \norm{B(\lambda)^k}_\infty $, where $B(\lambda)$ is as defined in~\eqref{eq:b-lambda}.
The eigenvalues of $B(\lambda)^k$ are maximized at $\lambda = \mu$ for $k > 1$, therefore, for large $k$, $\norm{B(\lambda)^k}_\infty$ is maximized at $\lambda = \mu$.

Note that the Jordan form of $B(\mu)$ is given by $V J V^{-1}$, where
\begin{align*}
V =
    \begin{bmatrix}
        \frac{\sqrt{\cond}(\sqrt{\cond}-1)}{\sqrt{\cond}+1} & \cond \\
        -1 & 0
    \end{bmatrix} \qquad \text{ and } \qquad
J =
    \begin{bmatrix}
        \frac{\sqrt{\cond}-1}{\sqrt{\cond}} & 1 \\
        0 & \frac{\sqrt{\cond}-1}{\sqrt{\cond}}
    \end{bmatrix}.
\end{align*}
Using the Jordan form, we determine that $B(\mu)^{k}$ is
\begin{align*}
B(\mu)^k = 
\begin{bmatrix}
    \left(1 + \frac{k}{\sqrt{\cond}+1}\right) \left(\frac{\sqrt{\cond}-1}{\sqrt{\cond}}\right)^{k} & k \left(\frac{\sqrt{\cond}-1}{\sqrt{\cond}+1}\right)^2  \left(\frac{\sqrt{\cond}-1}{\sqrt{\cond}}\right)^{k - 1} \\[1em]
    -\frac{k}{\cond} \left(\frac{\sqrt{\cond}-1}{\sqrt{\cond}}\right)^{k-1} & \left(1 - \frac{k}{\sqrt{\cond} + 1}\right) \left(\frac{\sqrt{\cond}-1}{\sqrt{\cond}}\right)^{k}
\end{bmatrix}.
\end{align*}
Therefore, we have that
\begin{align}
\label{eq:b-lambda-infty-norm}
    \norm{B(\mu)^k}_\infty
    \leq&
    \left(1 + \frac{k}{\sqrt{\cond}+1}\right) \left(\frac{\sqrt{\cond}-1}{\sqrt{\cond}}\right)^{k} + k 
    \max\left\{
        \frac{1}{\cond},\
        \left(\frac{\sqrt{\cond}-1}{\sqrt{\cond}+1}\right)^2
    \right\}
    \left(\frac{\sqrt{\cond}-1}{\sqrt{\cond}}\right)^{k - 1}.
\end{align}
Therefore for large $k$
\[
    \norm{A^k}^2_2 \leq d \norm{B(\mu)^k}^2_\infty = \left(\frac{\sqrt{\cond}-1}{\sqrt{\cond}} + \itr{\epsilon}{k}\right)^{2k},
\]
where $\itr{\epsilon}{k} \sim (\sqrt[k]{k} - 1)$.
Also observe that 
\begin{align*}
\frac{\alpha^2 ((1 + \beta)^2 + 1)}{1 - \rho(\alpha, \beta)^2} &= \frac{1}{L^2} \frac{\left(\frac{2 \sqrt{\cond}}{\sqrt{\cond} + 1}\right)^2 + 1}{1 - \left( \frac{\sqrt{\cond} - 1}{\sqrt{\cond}} \right)^2} \\
&= \frac{1}{L^2} \frac{5 \cond^2 + 2 \cond^{3/2} + \cond}{(\sqrt{\cond} + 1)^2 (2 \sqrt{\cond} - 1)}.
\end{align*}
Since $f$ is $L$-smooth,
\[
f(y_{k+1}) - f^\star \le \frac{L}{2} \norm{y_{k+1} - x^\star}^2.
\]
Thus, by Theorem~\ref{thm:stochapprox-quadratic} we have
\begin{align*}
\E[f(y_{k+1})] - f^\star &\le \frac{L}{2} \left(\frac{\sqrt{\cond} - 1}{\sqrt{\cond}} + \itr{\epsilon}{k} \right)^{2k} \norm{x_0 - x^\star}^2 + C_\epsilon \frac{5 Q^2 + 2 Q^{3/2} + Q}{2 L (2 \sqrt{Q} - 1) (\sqrt{Q} + 1)^2} \std^2,
\end{align*}
which completes the proof of Corollary~\ref{cor:stochapprox-nasg}.

To prove Theorem~\ref{thm:stochapprox-sgd}, first observe that when $\beta = 0$, the recursion simplifies significantly. Specifically, then $y_{k+1} = x_k$, $v_k = - \alpha g_k$, and we have (using similar notation as in the proof of Theorem~\ref{thm:stochapprox-quadratic})
\begin{align*}
r_{k+1} &= (I - \alpha H_k) r_k - \alpha \zeta_k \\
&= \prod_{j=1}^k (I - \alpha H_j) r_1 - \alpha \zeta_k - \alpha \sum_{j=1}^{k-1} \prod_{l=j+1}^k (I - \alpha H_l) \zeta_j,
\end{align*}
where
\[
H_j = \int_0^1 \nabla f^2(x^\star - t \  r_j) \mathrm{d}t.
\]

Of course, since $f$ is $L$-smooth and $\mu$-strongly convex, all eigenvalues of $H_j$ lie in the interval $[\mu, L]$ for all $j \ge 0$. 

Now, taking the squared norm on both sides, and recalling that the random vectors $\itr{\noise}{k}$ have zero mean and are mutually independent, we have
\begin{align*}
    \E \norm{y_{k+1} - x^\star}^2 &= \E \norm{r_{k+1}}^2 \\
    &= \E_{\itr{\noise}{k},\ldots,\itr{\noise}{1}} \norm{ \prod_{j=1}^k (I - \alpha H_j) r_1 - \alpha \zeta_k - \alpha \sum_{j=1}^{k-1} \prod_{l=j+1}^k (I - \alpha H_l) \zeta_j }^2 \\
    &= \E_{\itr{\noise}{k}} \left[ \cdots \E_{\itr{\noise}{1}}\left[ \norm{ \prod_{j=1}^k (I - \alpha H_j) r_1 - \alpha \zeta_k - \alpha \sum_{j=1}^{k-1} \prod_{l=j+1}^k (I - \alpha H_l) \zeta_j }^2\right] \cdots \right] \\
    &= \left(\prod_{j=1}^k \norm{I - \alpha H_j}^2\right) \norm{x_1 - x^\star}^2 + \alpha^2 \E_{\itr{\noise}{k}}\norm{\itr{\noise}{k}}^2 + \alpha^2 \sum_{j=1}^{k-1} \E_{\itr{\noise}{j}} \norm{\left(\prod_{l=j+1}^{k} I - \alpha H_l \right) \itr{\noise}{j}}^2 \\
    &\le \left(\prod_{j=1}^k \norm{I - \alpha H_j}^2\right) \norm{x_1 - x^\star}^2 + \alpha^2 \sigma^2 + \alpha^2 \sigma^2 \sum_{j=1}^{k-1} \left(\prod_{l=j+1}^{k} \norm{I - \alpha H_l}^2\right).
\end{align*}

Now, since $I - \alpha H_j$ is symmetric, we have $\norm{(I - \alpha H_j)}^2 = \rho(I - \alpha H_j)^{2}$, where $\rho(I - \alpha H_j)$ denotes the spectral radius of $I - \alpha H_j$ (the largest magnitude of an eigenvalue of $I - \alpha H_j$). For $\alpha = \frac{2}{\mu + L}$, and since the eigenvalues of $H_j$ lie in the interval $[\mu, L]$, it is straightforward to show that $\rho(I - \alpha H_j) = \frac{\cond - 1}{\cond + 1}$.

Therefore we have
\begin{align*}
\E\left[ \norm{y_{k+1} - x^\star}^2 \right] &\le \left(\frac{\cond - 1}{\cond + 1}\right)^{2k} \norm{x_0 - x^\star}^2 + \alpha^2 \sigma^2 \sum_{j=1}^k \left(\frac{\cond - 1}{\cond + 1}\right)^{2(k-j)} \\
&\le \left(\frac{\cond - 1}{\cond + 1}\right)^{2k} \norm{x_0 - x^\star}^2 + \frac{\alpha^2 \sigma^2}{1 - (\frac{\cond - 1}{\cond + 1})^2} \\
&= \left(\frac{\cond - 1}{\cond + 1}\right)^{2k} \norm{x_0 - x^\star}^2 + \frac{Q}{2L} \sigma^2,
\end{align*}
which completes the proof of Theorem~\ref{thm:stochapprox-sgd}.

\section{Permutation Matrix Construction}
\label{sec:permutation}

For a vector $x \in \R^d$, let $\diag{x}$ denote a $d \times d$ diagonal matrix with its $i$th diagonal entry equal to $x_i$. 
Let $a,b,c,d \in \R^d$ and suppose $M \in \R^{2d \times 2d}$ is the matrix
\[
M = \begin{bmatrix}
    \diag{a} & \diag{b} \\ \diag{c} & \diag{d}
\end{bmatrix}.
\]
Let $P \in \{0,1\}^{2d \times 2d}$ be the permutation matrix with entries $P_{i,j}$ for $i,j = 1,\dots,2d$ given by
\[
P_{i,j} = \begin{cases} 1 & \text{ if } i \text{ is odd and } j = (i-1)/2 + 1 \\
1 & \text{ if } i \text{ is even and } j = d + \lfloor \frac{i-1}{2} \rfloor + 1 \\
0 & \text{ otherwise.}
\end{cases}
\]
Then one can verify that
\[
P M P^\T = \begin{bmatrix} 
    T_1 & 0 & \cdots & 0 \\
    0 & T_2 & \cdots & 0 \\
    \vdots & \vdots & \ddots & \vdots \\
    0 & 0 & \cdots & T_d
\end{bmatrix}
\]
where, for $j=1,\dots,d$, $T_j$ is the $2 \times 2$ matrix
\[
T_j = \begin{bmatrix}
   a_j & b_j \\
   c_j & d_j
\end{bmatrix}.
\]

\section{Proof of Lemma~\ref{lem:spectral-radius-T_j}}
\label{sec:spectral-radius-T_j-proof}

Recall that $\alpha = 1/L$ and $\beta = \frac{\sqrt{\cond} - 1}{\sqrt{\cond} + 1}$.
For matrices of the form
\[
T_{k} = B(L) B(\mu)^{k_1} B(L) B(\mu)^{k_2} \cdots B(L) B(\mu)^{k_s} B(L),
\]
where
\[
	B(\lambda) =
	\begin{bmatrix}
		1 - \alpha (1+ \beta) \lambda & \beta^2 \\
		- \alpha \lambda & \beta
	\end{bmatrix},
\]
we would like to show that the spectral radius $\rho(T_k)$ is equal to
\[
\rho(T_k) = \left( \frac{\sqrt{\cond} - 1}{\sqrt{\cond}}\right)^k \times k_1 k_2 \cdots k_s.
\]

To see this, first note that the Jordan form of $B(\mu)$ is given by $V J V^{-1}$, where
\begin{align*}
V =
    \begin{bmatrix}
        \frac{\sqrt{\cond}(\sqrt{\cond}-1)}{\sqrt{\cond}+1} & \cond \\
        -1 & 0
    \end{bmatrix} \qquad \text{ and } \qquad
J =
    \begin{bmatrix}
        \frac{\sqrt{\cond}-1}{\sqrt{\cond}} & 1 \\
        0 & \frac{\sqrt{\cond}-1}{\sqrt{\cond}}
    \end{bmatrix}.
\end{align*}
Using the Jordan form, we determine that $B(\mu)^{\itr{k}{\ell}}$ is
\begin{align*}
B(\mu)^{\itr{k}{\ell}} = 
\begin{bmatrix}
    \left(1 + \frac{\itr{k}{\ell}}{\sqrt{\cond}+1}\right) \left(\frac{\sqrt{\cond}-1}{\sqrt{\cond}}\right)^{\itr{k}{\ell}} & \itr{k}{\ell} \left(\frac{\sqrt{\cond}-1}{\sqrt{\cond}+1}\right)^2  \left(\frac{\sqrt{\cond}-1}{\sqrt{\cond}}\right)^{\itr{k}{\ell} - 1} \\[1em]
    -\frac{\itr{k}{\ell}}{\cond} \left(\frac{\sqrt{\cond}-1}{\sqrt{\cond}}\right)^{\itr{k}{\ell}-1} & \left(1 - \frac{\itr{k}{\ell}}{\sqrt{\cond} + 1}\right) \left(\frac{\sqrt{\cond}-1}{\sqrt{\cond}}\right)^{\itr{k}{\ell}}
\end{bmatrix}.
\end{align*}
Through direct matrix multiplication
\begin{align*}
B(L) B(\mu)^{\itr{k}{\ell}} B(L) =
    - \left(\frac{\sqrt{\cond}-1}{\sqrt{\cond}}\right)^{\itr{k}{\ell}+1} \itr{k}{\ell} B(L).
\end{align*}
Therefore,
\begin{align*}
    T_j = (-1)^{s-1} \left(\frac{\sqrt{\cond}-1}{\sqrt{\cond}}\right)^{k-1-\itr{k}{s}} \itr{k}{1}\itr{k}{2}\cdots\itr{k}{s-1}B(L)B(\mu)^{\itr{k}{s}}.
\end{align*}
Finally, the spectral-radius of $B(L)B(\mu)^{\itr{k}{s}}$ is
\begin{align*}
    \radius{B(L)B(\mu)^{\itr{k}{s}}} = \left(\frac{\sqrt{\cond}-1}{\sqrt{\cond}}\right)^{\itr{k}{s} + 1} \itr{k}{s},
\end{align*}
and hence\begin{align*}
    \radius{T_k} = \left(\frac{\sqrt{\cond}-1}{\sqrt{\cond}}\right)^{k} \itr{k}{1}\itr{k}{2}\cdots\itr{k}{s}.
\end{align*}

\section{Proof of Theorem~\ref{th:finite-sum}}
\label{sec:proof-thm-finite-sum}

Since the functions $f_i$ are assumed to be twice continuously differentiable, 
by \eqref{eq:taylor} we can express the mini-batch gradients as
\begin{align} \label{eq:finite-sum-gradient-taylor}
    g_k = \tilde{H}_k r_k + z_k,
\end{align}
where
\[
\tilde{H}_k = \sum_{i=1}^n v_{k,i} \int_0^1 \nabla^2 f_i(x^\star + t r_k) \mathrm{d}t
\]
and
\[
z_k = \sum_{i=1}^n v_{k,i} \nabla f_i(x^\star).
\]

By convexity of norms,
\[
\norm{z_k} \le \sum_{i=1}^n v_{k,i} \norm{ \nabla f_k(x^\star) }.
\]
Hence, taking expectations gives
\begin{align*}
\E_k[\norm{z_k}] &\le \frac{1}{n} \sum_{i=1}^n \norm{\nabla f_i(x^\star) } \\
&= \sigma.
\end{align*}

Using \eqref{eq:finite-sum-gradient-taylor} in \eqref{eq:recursion-with-g} and unrolling, we obtain
\begin{equation} \label{eq:finite-sum-general-unrolled}
\begin{bmatrix}
r_{k+1} \\ v_k
\end{bmatrix}
= A_k \cdots A_1 \begin{bmatrix}
r_1 \\ v_0
\end{bmatrix}
- \alpha
\begin{bmatrix}
(1 + \beta) I \\ I
\end{bmatrix}
z_k - \alpha \sum_{j=1}^{k-1} (A_k \cdots A_{j+1}) \begin{bmatrix}
(1 + \beta) I \\ I
\end{bmatrix} z_j,
\end{equation}
where
\[
A_k = \begin{bmatrix}
I - \alpha (1 + \beta) \tilde{H}_k & \beta^2 I \\
- \alpha \tilde{H}_k & \beta I
\end{bmatrix}.
\]

By submultiplicativity of matrix norms, $\norm{A_k \cdots A_{j+1}} \le \prod_{l = j+1}^k \norm{A_l}$. Thus we turn our attention to bounding the spectral norm of $A_k$.

\begin{lemma}
\[
\norm{A_k} \leq \max_{\lambda \in [\mu, L]} \norm{B(\lambda)} = R(\alpha, \beta).
\]
\end{lemma}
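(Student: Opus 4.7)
The plan is to exploit the block structure of $A_k$ and reduce the spectral norm computation to a supremum over $2 \times 2$ matrices $B(\lambda)$. First, I would argue that $\tilde{H}_k$ is symmetric with all eigenvalues in $[\mu, L]$: each inner integral $\int_0^1 \nabla^2 f_i(x^\star + t r_k)\,\mathrm{d}t$ is a symmetric matrix whose Rayleigh quotient lies in $[\mu, L]$ (since each $f_i$ is $L$-smooth and $\mu$-strongly convex), and $\tilde{H}_k$ is a convex combination of such integrals (because $v_{k,i} \ge 0$ and $\sum_i v_{k,i} = 1$), hence inherits the same spectral bounds. Write the eigendecomposition $\tilde{H}_k = U \Lambda U^\T$ with $U \in \R^{d\times d}$ orthogonal and $\Lambda = \diag{\lambda_1, \ldots, \lambda_d}$, where each $\lambda_i \in [\mu, L]$.

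Next, I would perform the orthogonal conjugation by $\diag{U, U}$, which replaces $\tilde{H}_k$ by $\Lambda$ in each of the four $d \times d$ blocks of $A_k$. The four resulting blocks are then all diagonal, so applying the permutation matrix $P$ from Appendix~\ref{sec:permutation} produces a matrix that is block-diagonal with $d$ blocks of size $2 \times 2$, the $i$th block being exactly $B(\lambda_i)$ as defined in \eqref{eq:b-lambda}. Since $P$ and $\diag{U, U}$ are orthogonal, the conjugation preserves the spectral norm, and the spectral norm of a block-diagonal matrix equals the maximum spectral norm of its blocks. Hence
\[
    \norm{A_k} = \max_{i=1,\ldots,d} \norm{B(\lambda_i)} \le \max_{\lambda \in [\mu, L]} \norm{B(\lambda)}.
\]

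Finally, I would evaluate $\norm{B(\lambda)}$ explicitly. Since $B(\lambda)^\T B(\lambda)$ is $2 \times 2$, its two eigenvalues are the roots of a quadratic whose sum is $\operatorname{tr}(B(\lambda)^\T B(\lambda))$ and whose product is $\det(B(\lambda))^2$. A direct calculation gives $\operatorname{tr}(B(\lambda)^\T B(\lambda)) = C_\lambda(\alpha, \beta)$ and $\det(B(\lambda)) = \beta(1 - \alpha \lambda)$, so the larger root equals $\tfrac{1}{2}\bigl(C_\lambda(\alpha,\beta) + \sqrt{\tilde{\Delta}_\lambda(\alpha, \beta)}\bigr)$, and therefore $\norm{B(\lambda)} = R_\lambda(\alpha, \beta)$. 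Taking the supremum over $\lambda \in [\mu, L]$ yields $R(\alpha, \beta)$.

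I expect no conceptual obstacles; the main step is the block diagonalization via $(U, P)$, which mirrors the construction used in the impossibility proof of Theorem~\ref{thm:counter-example}, but here it applies to a single iteration and requires only that $\tilde{H}_k$ be symmetric (rather than that all Hessians share a common eigenbasis). The only routine calculation is the determinant and trace of $B(\lambda)^\T B(\lambda)$ to identify $R_\lambda(\alpha,\beta)$.
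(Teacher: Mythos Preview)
Your proposal is correct and essentially the same as the paper's proof. The only cosmetic difference is that where the paper invokes \citet[Lemma~5]{polyak1964some} to conclude that $\norm{A_k}$ equals $\norm{B(\lambda)}$ for some eigenvalue $\lambda$ of $\tilde{H}_k$, you spell out the underlying block-diagonalization explicitly via the orthogonal conjugation by $\diag{U,U}$ and the permutation $P$; the subsequent computation of $\norm{B(\lambda)}$ via the trace and determinant of $B(\lambda)^\T B(\lambda)$ is identical to the paper's characteristic-polynomial calculation.
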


\begin{proof}
For all $k \ge 0$, every eigenvalue of $\tilde{H}_k$ lies in the interval $[\mu, L]$, based on the assumption that each function $f_i$ is $L$-smooth and $\mu$-strongly convex. It follows from \citet[Lemma~5]{polyak1964some} that there exists an eigenvalue $\lambda$ of $\tilde{H}_k$ such that $\norm{A_k}$ is equal to the spectral norm of
\[
B(\lambda) = \begin{bmatrix}
1 - \alpha (1 + \beta) \lambda & \beta^2 \\
- \alpha \lambda & \beta
\end{bmatrix}.
\]

We next compute $\norm{B(\lambda)}$, which is equal to the square root of the largest eigenvalue of
\[
B(\lambda)^\T B(\lambda) = \begin{bmatrix}
(1 - \alpha (1 + \beta)\lambda)^2 + \alpha^2 \lambda^2 & \beta^2 (1 - \alpha (1 + \beta)\lambda) - \alpha \beta \lambda \\
\beta^2 (1 - \alpha (1 + \beta)\lambda) - \alpha \beta \lambda & \beta^2 (\beta^2 + 1)
\end{bmatrix}.
\]
The characteristic polynomial of $B(\lambda)^\T B(\lambda)$ is
\[
\xi^2 - C_\lambda(\alpha, \beta) \xi + \beta^2 (1 - \alpha \lambda)^2 = 0,
\]
where
\[
C_\lambda(\alpha, \beta) = (1 - \alpha (1 + \beta)\lambda)^2 + \alpha^2 \lambda^2 + \beta^2 (\beta^2 + 1).
\]
The largest root of the characteristic polynomial is equal to 
\[
R_\lambda(\alpha, \beta)^2 = \frac{1}{2} \left( C_\lambda(\alpha, \beta) + \sqrt{C_\lambda(\alpha, \beta)^2 - 4 \beta^2 (1 - \alpha \lambda)^2} \right)
\]
which is equal to $\norm{B(\lambda)}^2$. Therefore
\[
\norm{A_k} \le \max_{\lambda \in [\mu, L]} R_\lambda(\alpha, \beta).
\]
\end{proof}

Assume that $\alpha$ and $\beta$ have been chosen so that $R(\alpha, \beta) < 1$. Then for all $k$ and $j+1$, $\norm{A_k \cdots A_{j+1}} \le \prod_{l=j+1}^k \norm{A_l} \le R(\alpha, \beta)^{k - j}$.

Taking the norm on both sides of \eqref{eq:finite-sum-general-unrolled} and using the triangle inequality, we have
\begin{align}
\norm{\begin{bmatrix}
r_{k+1} \\ v_k
\end{bmatrix}}
&\le R(\alpha, \beta)^k \norm{\begin{bmatrix} r_1 \\ v_0 \end{bmatrix}} + \alpha \sqrt{(1 + \beta)^2 + 1} \sum_{j=1}^k R(\alpha, \beta)^{k-j} \norm{z_k}.
\end{align}
Taking the expectation gives
\begin{align}
\E_k \norm{y_{k+1} - x^\star} &\le \E_k \norm{\begin{bmatrix} r_{k+1} \\ v_k\end{bmatrix}} \\
&\le  R(\alpha, \beta)^k \norm{x_0 - x^\star} + \frac{\alpha \sqrt{(1 + \beta)^2 + 1}}{1 - R(\alpha, \beta)^2} \sigma.
\end{align}

\section{Proof of Corollaries~\ref{cor:finite-sum} and~\ref{cor:finite-sum-general}}
\label{sec:proof-of-cor-finite-sum}

When $\beta = 0$, we have $y_{k+1} = x_k$ and $v_k = -\alpha g_k$ for all $k$. In this case we have
\[
    r_{k+1} = r_k - \alpha g_k.
\]
Since the objectives $f_i$ are twice continuously differentiable, the mini-batch gradients can again be written as (using the same notation as in the proof of Theorem~\ref{th:finite-sum})
\[
    g_k = \tilde{H}_k r_k + z_k.
\]
Thus, with $A_k = I - \alpha \tilde{H}_k$, we have
\begin{align*}
r_{k+1} &= A_k r_k - \alpha z_k \\
&= A_k \cdots A_1 r_1 - \alpha z_k - \alpha \sum_{j=1}^{k-1} (A_k \cdots A_{j+1}) z_k.
\end{align*}
Since $\tilde{H}_k$ is symmetric, it follows that $A_k$ is also symmetric, and so $\norm{A_k}$ is equal to the largest magnitude of any eigenvalue of $A_k$. Recall that all eigenvalues of $\tilde{H}_k$ lie in the interval $[\mu, L]$.
Therefore, $\norm{A_k} \le \max_{\lambda \in [\mu, L]} \abs{1 - \alpha \lambda} = \max\{ \abs{1-\alpha\mu}, \abs{1-\alpha L}\}$.
Choosing $\alpha < \frac{2}{L}$ and taking the norm and expectation thus yields that
\begin{align}
\label{eq:proof-cor-finite-sum}
\begin{split}
\E_k \norm{x_k - x^\star} &= \E_k \norm{r_{k+1}} \\
&\le \abs{1 - \alpha \tilde\lambda}^k \norm{x_0 - x^\star} + \frac{\alpha}{1 - \abs{1 - \alpha \tilde\lambda}} \sigma,
\end{split}
\end{align}
where $\tilde \lambda \defeq \argmax_{\lambda \in \{\mu, L\}}\abs{1-\alpha \lambda}$.
When $\alpha = \frac{2}{\mu + L}$, we have that $\max_{\lambda \in [\mu, L]} \abs{1 - \alpha \lambda} = \frac{\cond - 1}{\cond + 1}$, and equation~\eqref{eq:proof-cor-finite-sum} simplifies as
\begin{align*}
\E_k \norm{x_k - x^\star} &= \E_k \norm{r_{k+1}} \\
&\le \left(\frac{\cond - 1}{\cond + 1}\right)^k \norm{x_0 - x^\star} + \frac{1}{\mu} \sigma.
\end{align*}

\section{Additional Experiments}
\label{sec:additional_experiments}

\subsection{Least Squares}
\begin{figure*}[!t]
\centering
\subfloat[.5\textwidth][$(\cond = 2)$: Coefficient multiplying $\sigma^2$]{
    \includegraphics[width=.25\textwidth]{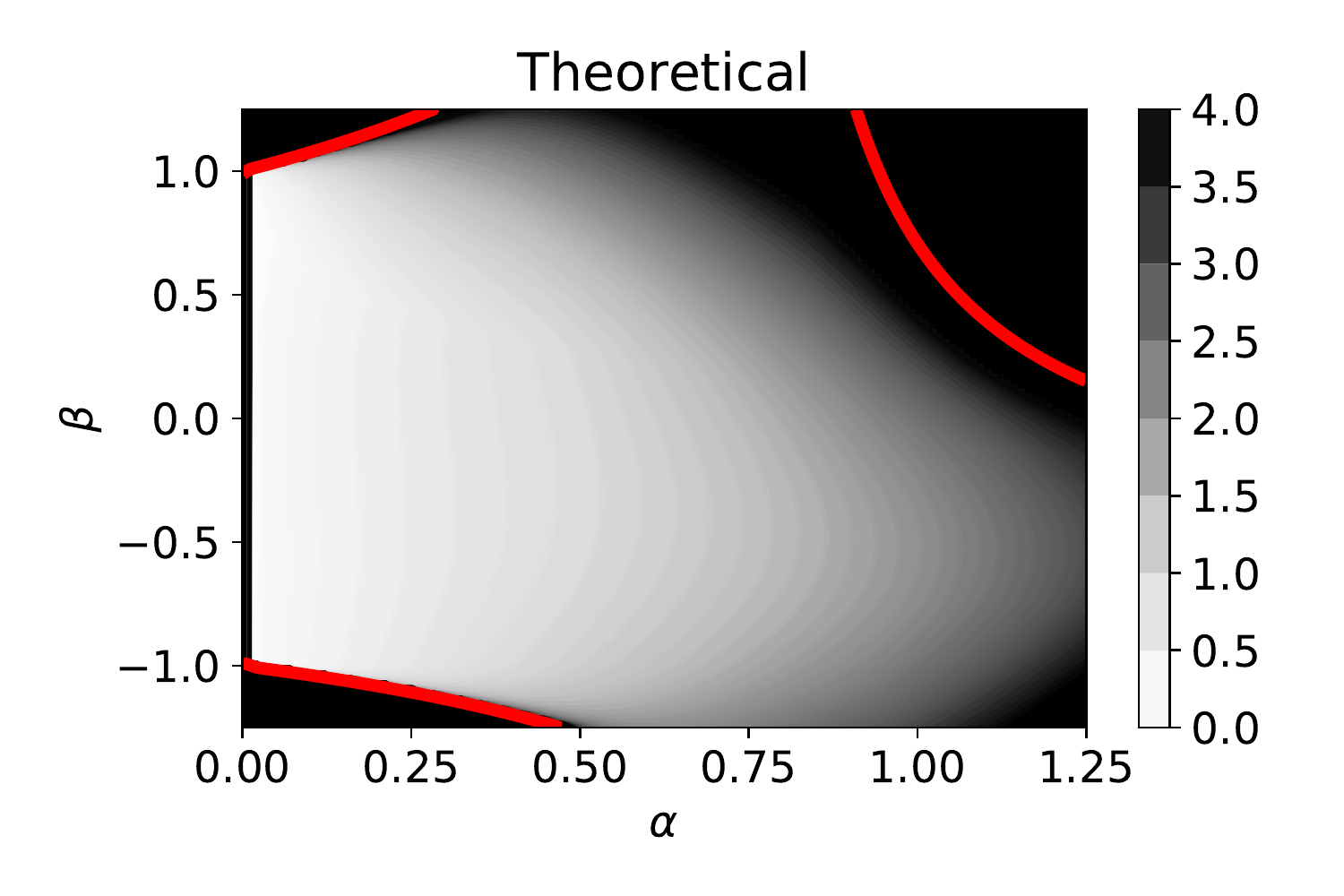}
    \includegraphics[width=.25\textwidth]{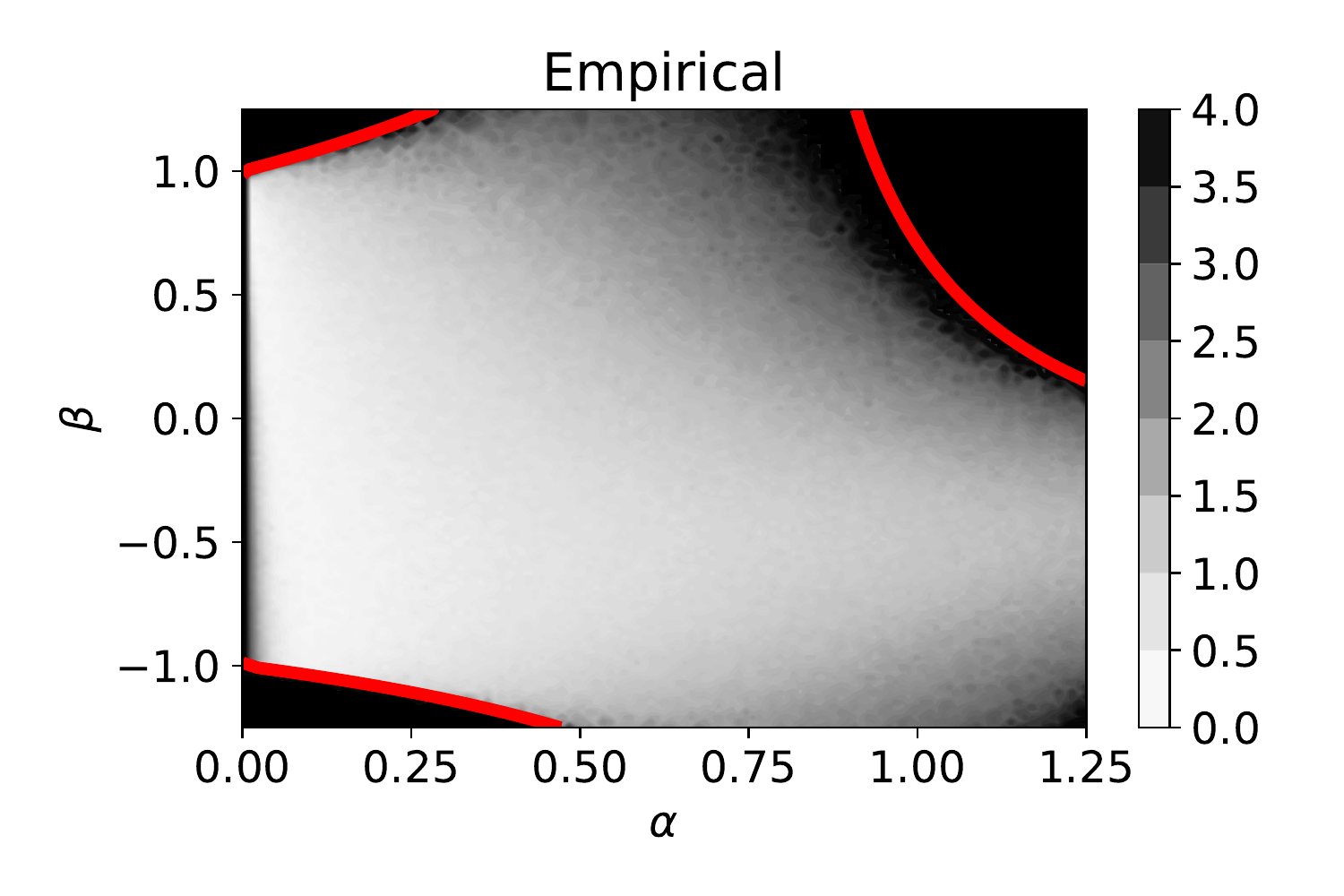}
    \label{sub-fig:least-squares-sa-1}
}
\subfloat[.5\textwidth][$(\cond = 2)$: Convergence rate $\rho(\alpha, \beta)$]{
    \includegraphics[width=.25\textwidth]{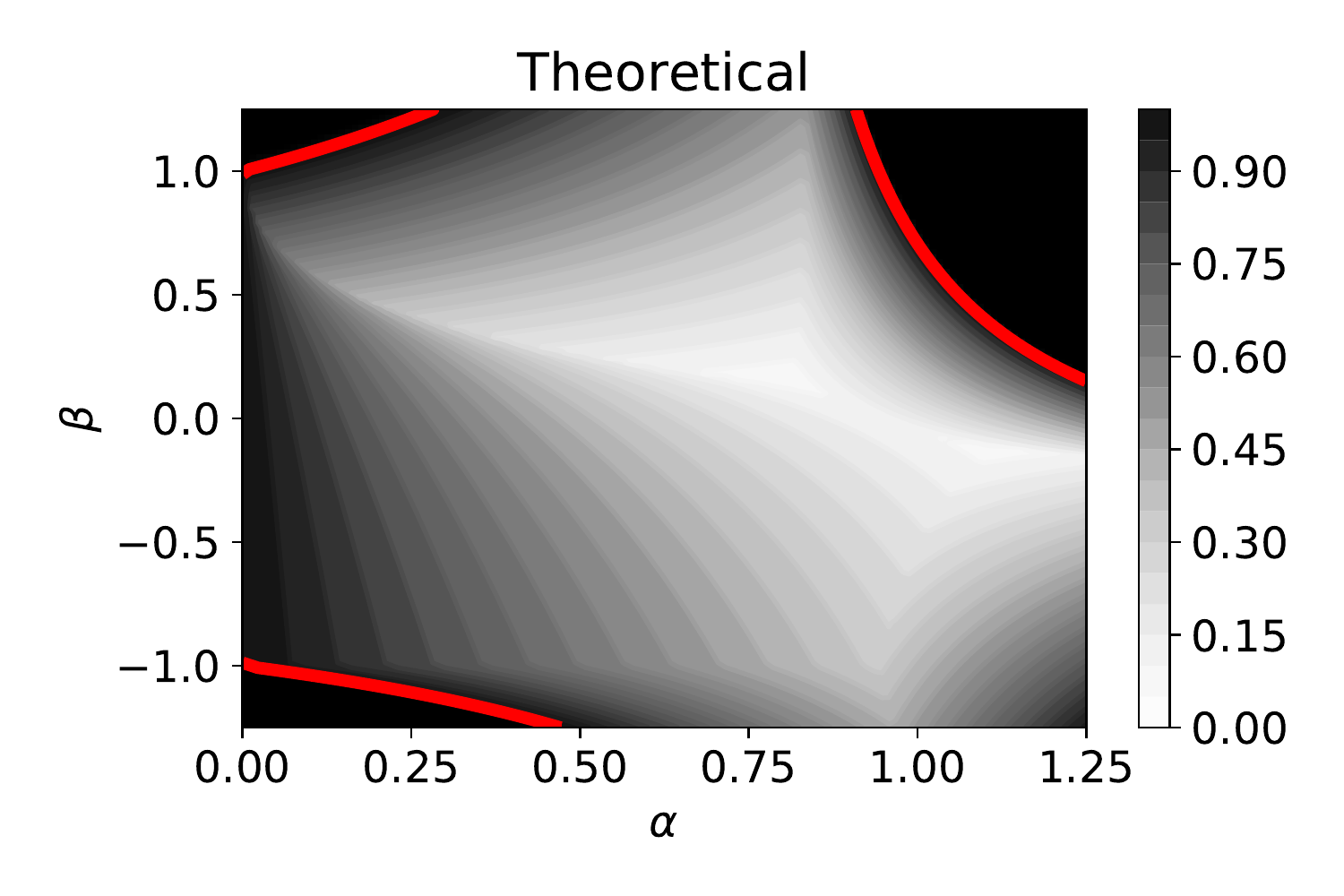}
    \includegraphics[width=.25\textwidth]{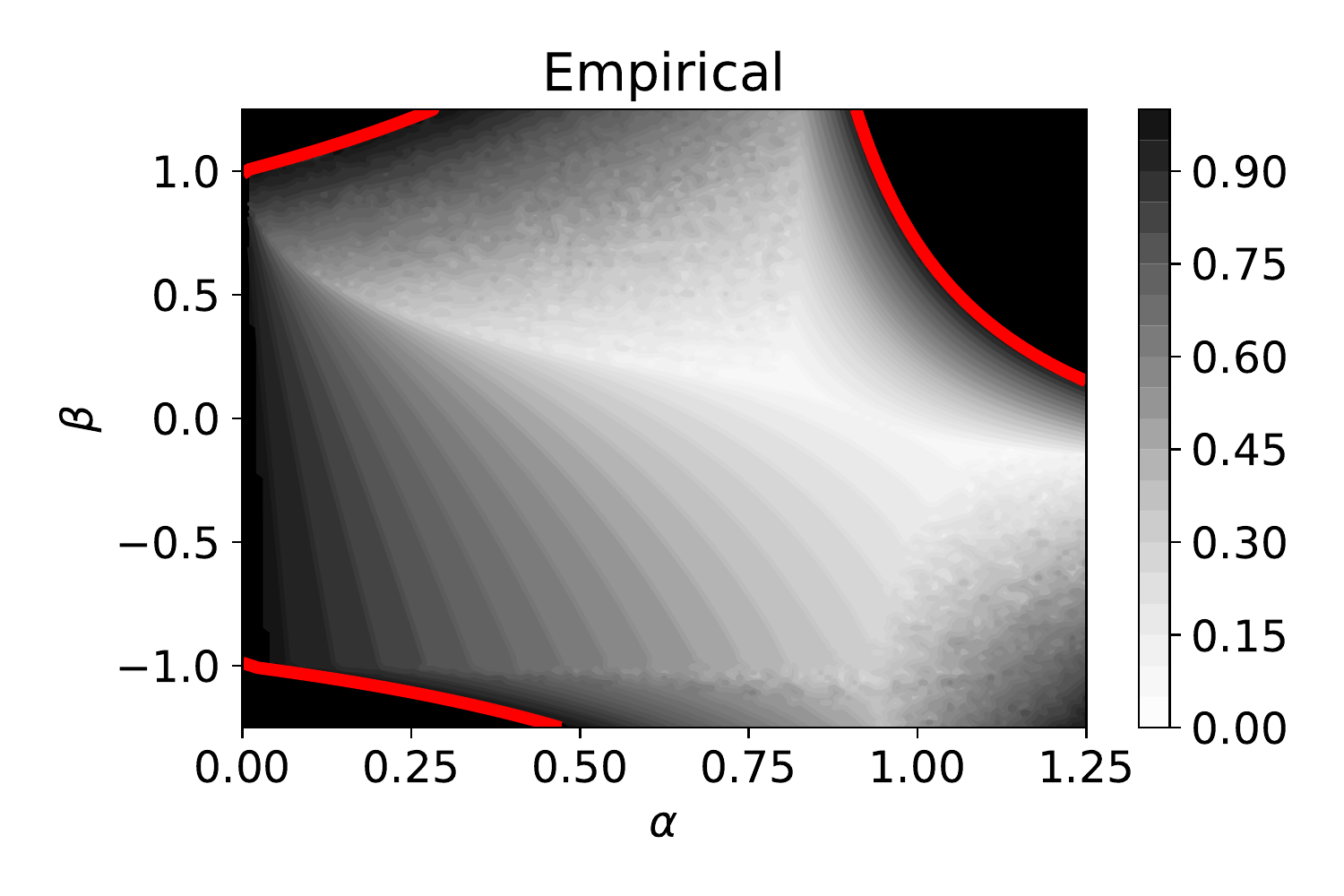}
    \label{sub-fig:least-squares-sa-2}
} \\
\subfloat[.5\textwidth][$(\cond = 8)$: Coefficient multiplying $\sigma^2$]{
    \includegraphics[width=.25\textwidth]{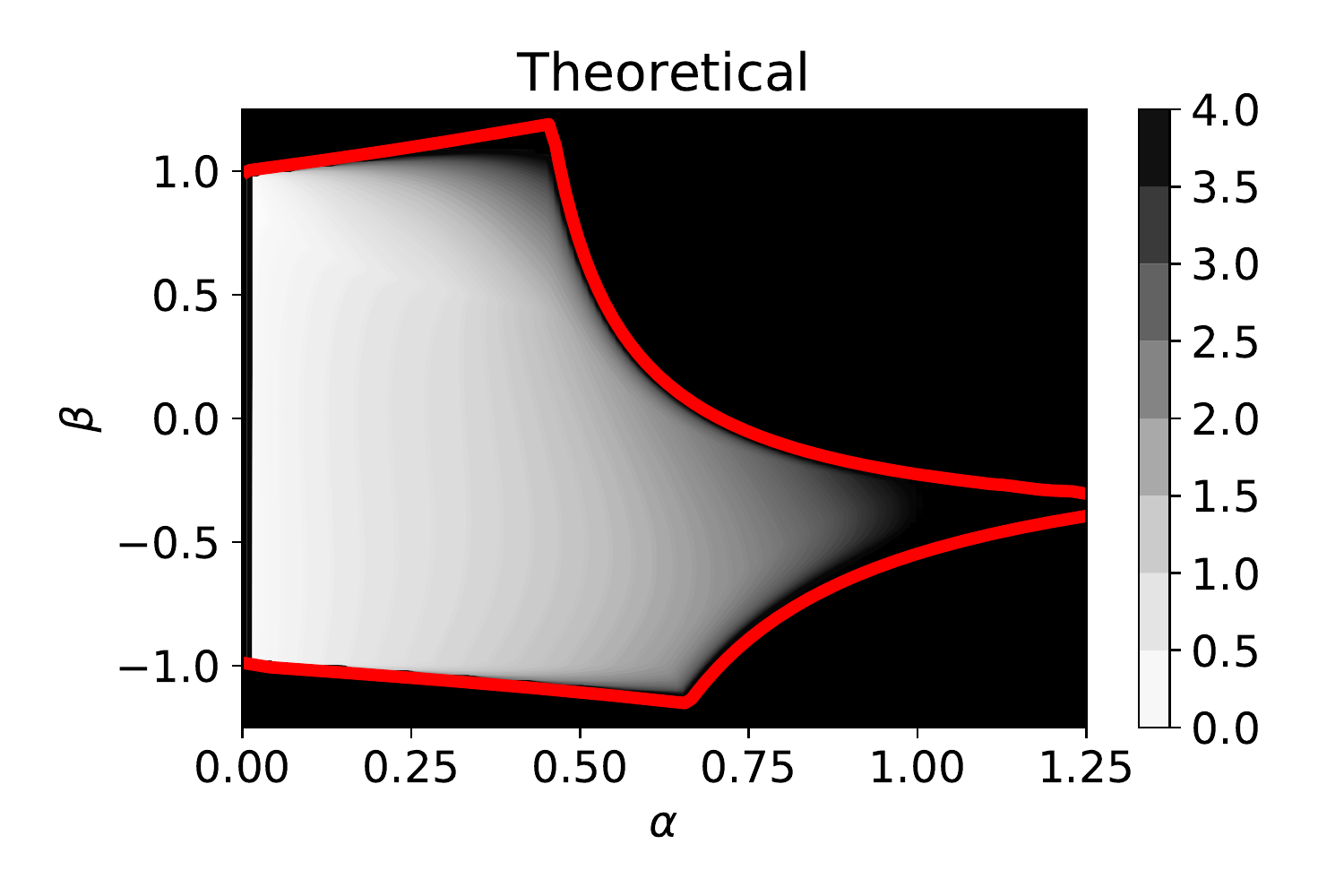}
    \includegraphics[width=.25\textwidth]{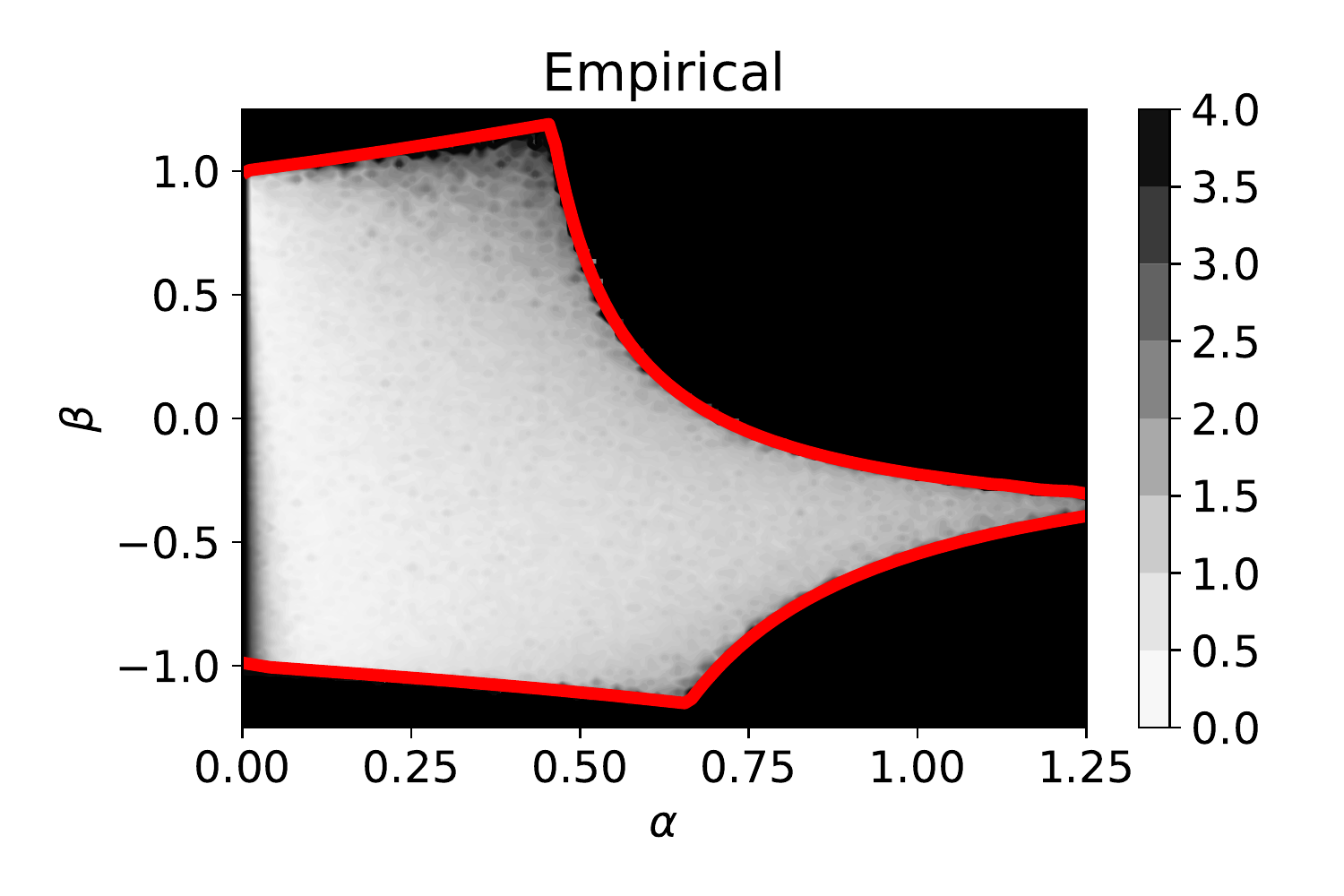}
    \label{sub-fig:least-squares-sa-3}
}
\subfloat[.5\textwidth][$(\cond = 8)$: Convergence rate $\rho(\alpha, \beta)$]{
    \includegraphics[width=.25\textwidth]{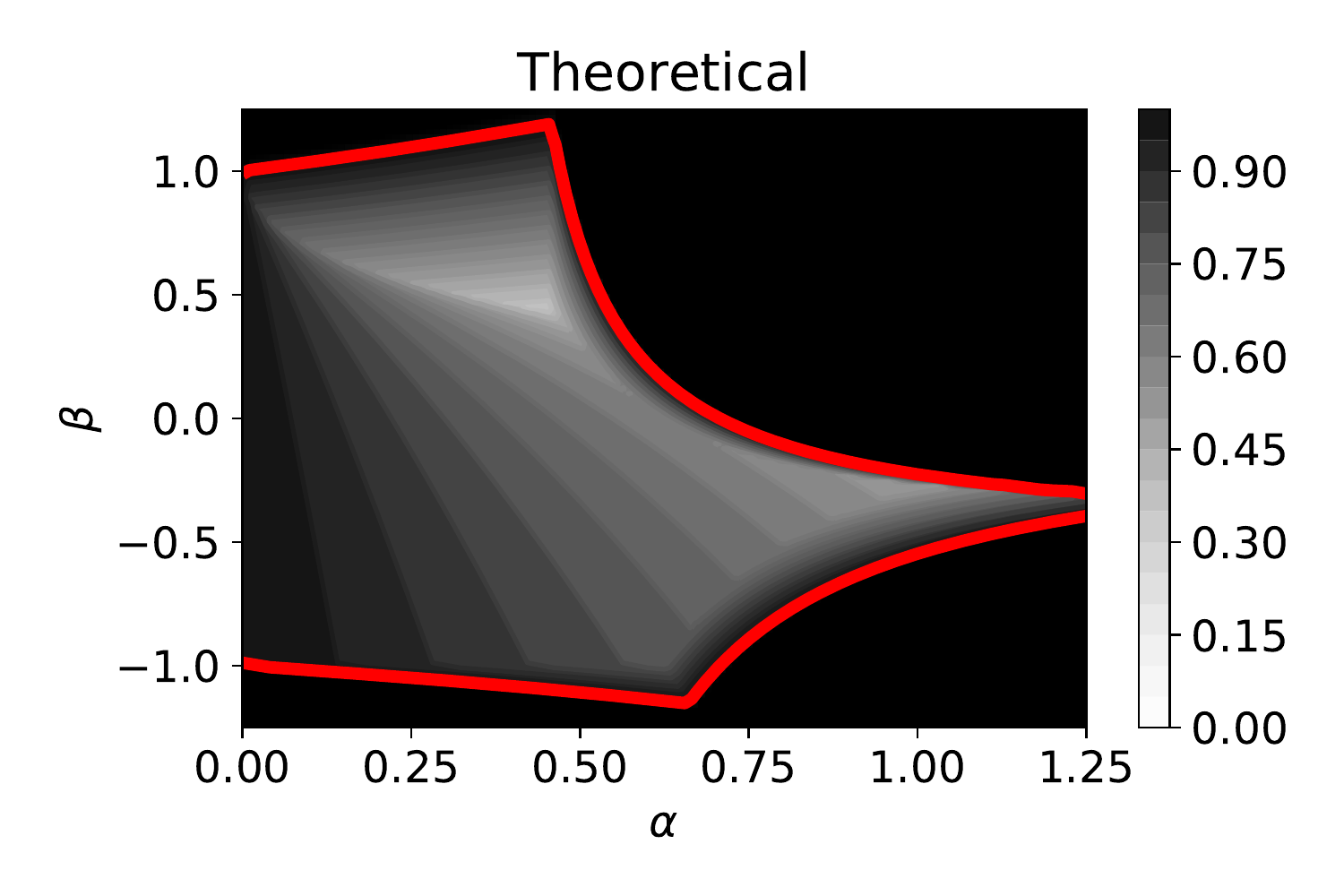}
    \includegraphics[width=.25\textwidth]{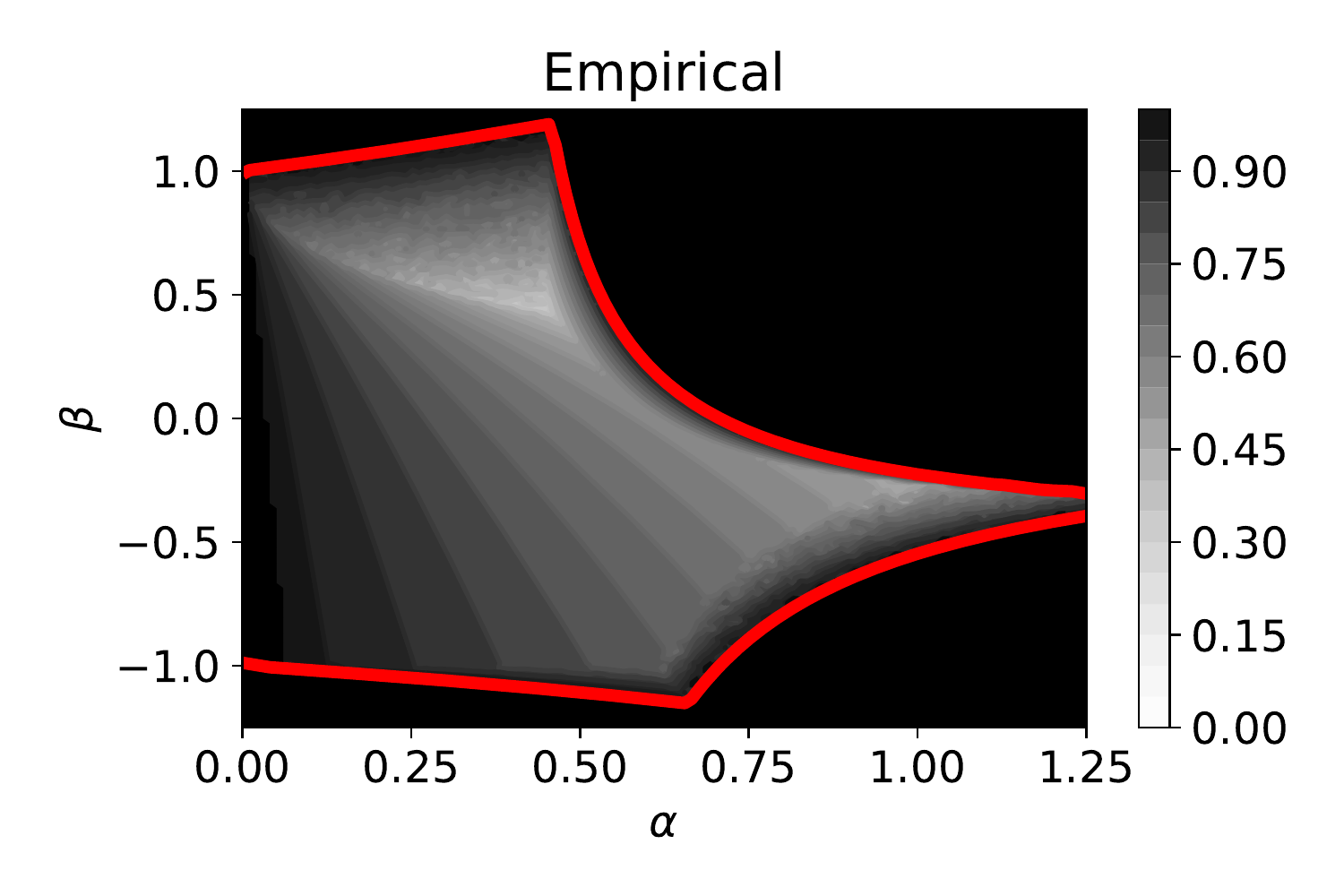}
    \label{sub-fig:least-squares-sa-4}
} \\
\subfloat[.5\textwidth][$(\cond = 32)$: Coefficient multiplying $\sigma^2$]{
    \includegraphics[width=.25\textwidth]{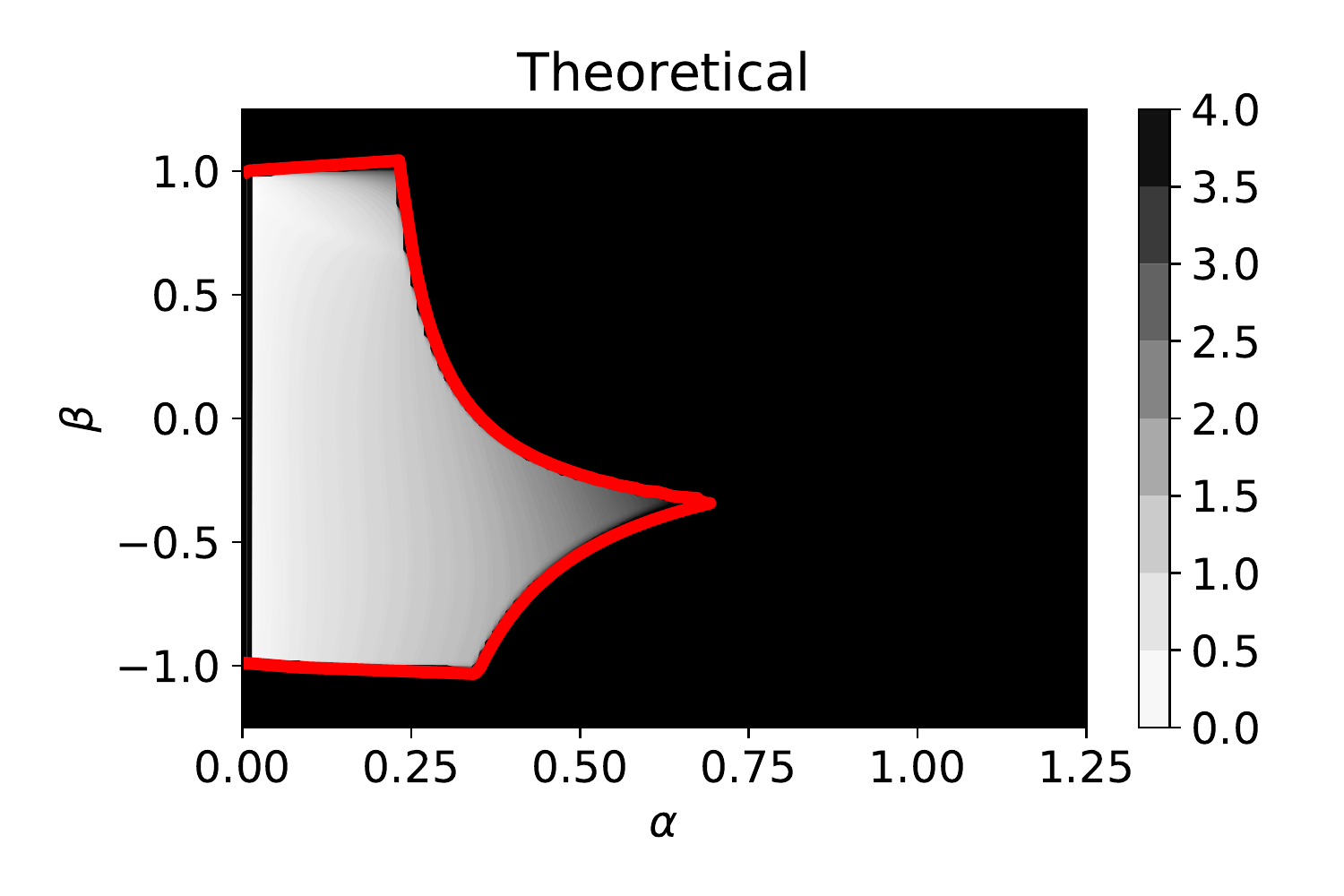}
    \includegraphics[width=.25\textwidth]{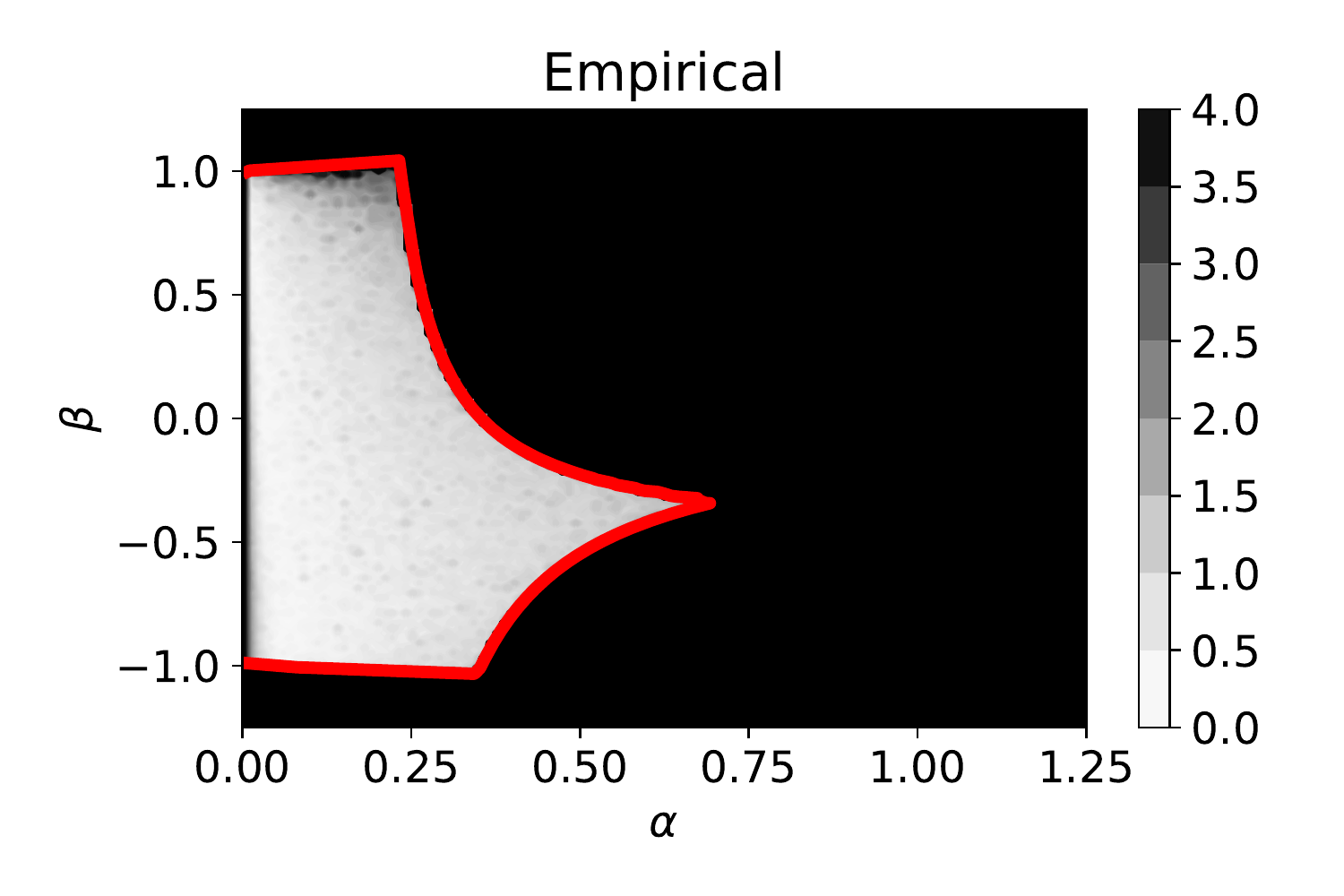}
    \label{sub-fig:least-squares-sa-5}
}
\subfloat[.5\textwidth][$(\cond = 32)$ Convergence rate $\rho(\alpha, \beta)$]{
    \includegraphics[width=.25\textwidth]{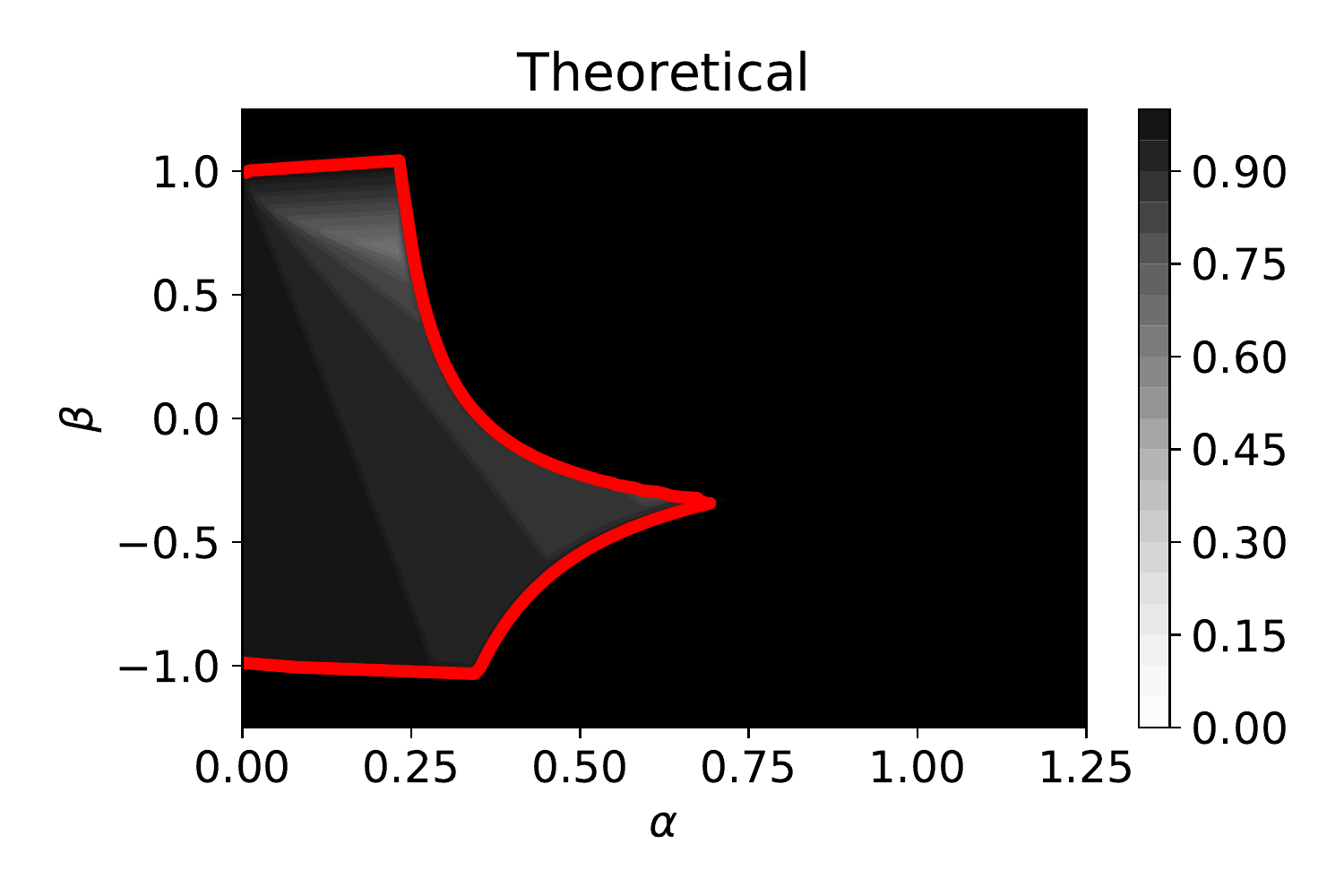}
    \includegraphics[width=.25\textwidth]{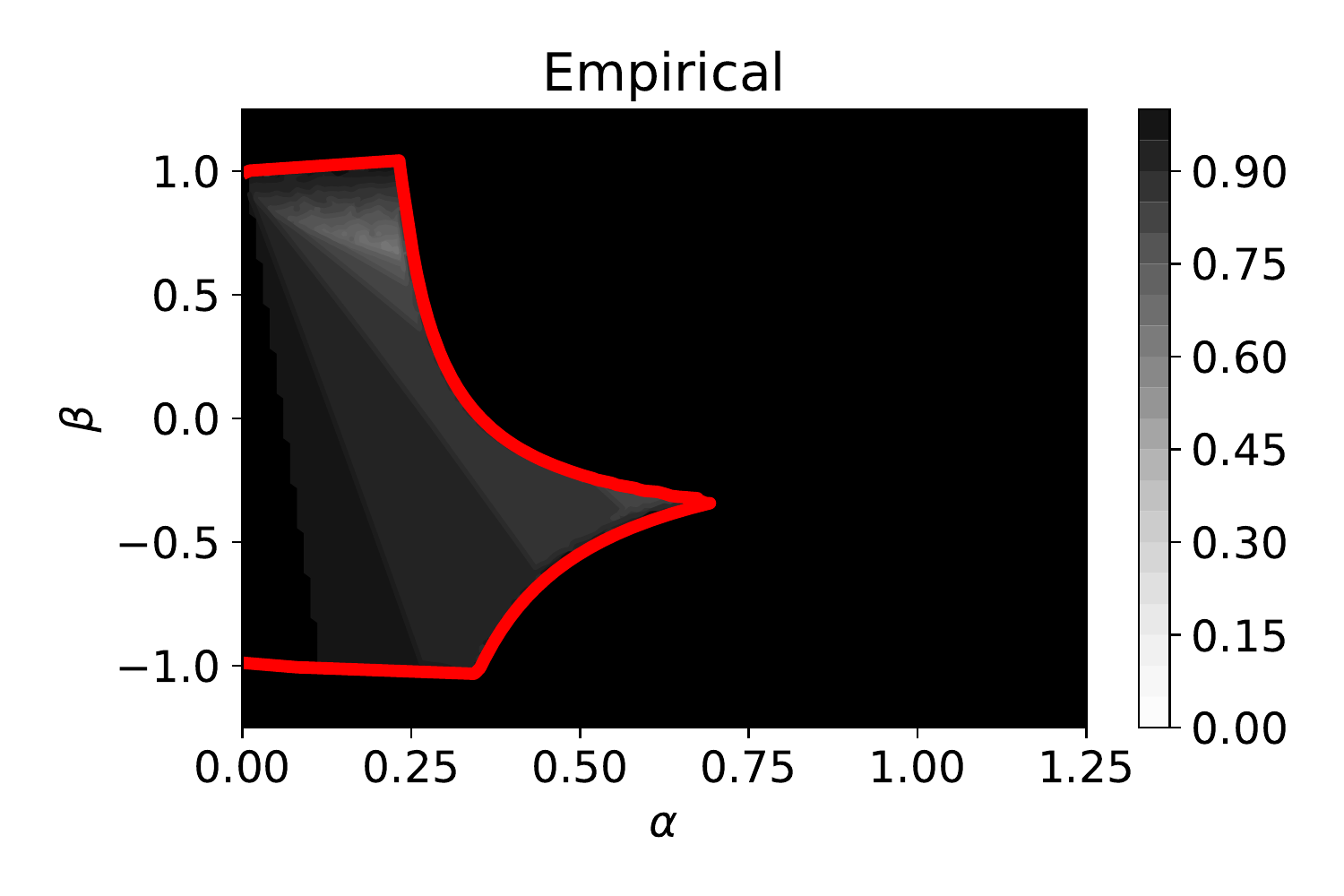}
    \label{sub-fig:least-squares-sa-6}
}
\caption{
    Visualizing the accuracy with which the theory predicts the coefficient of the variance term and the convergence rate for different choices of constant step-size and momentum parameters, and various objective condition numbers $\cond$.
    Plots labeled ``Theoretical'' depict theoretical results from Theorem~\ref{thm:stochapprox-quadratic}.
    Plots labeled ``Empirical'' depict empirical results when using the \NASG method to solve a least-squares regression problem with additive Gaussian noise; each pixel corresponds to an independent run of the \NASG method for a specific choice of constant step-size and momentum parameters.
    In all figures, the area enclosed by the red contour depicts the theoretical stability region from Theorem~\ref{thm:stochapprox-quadratic} for which $\rho(\alpha, \beta) < 1$.
    Fig.~\ref{sub-fig:least-squares-sa-1}/\ref{sub-fig:least-squares-sa-3}/\ref{sub-fig:least-squares-sa-5}:
    Pixel intensities correspond to the coefficient of the variance term in Theorem~\ref{thm:stochapprox-quadratic} ($\lim_{k \to \infty} \frac{1}{\std} \E \norm{\itr{y}{k}-\xstar}_\infty$), which provides a good characterization of the magnitude of the neighbourhood of convergence, even without explicit knowledge of the constant $C_\epsilon$.
    Fig.~\ref{sub-fig:least-squares-sa-2}/\ref{sub-fig:least-squares-sa-4}/\ref{sub-fig:least-squares-sa-6}: Pixel intensities correspond to the theoretical convergence rates in Theorem~\ref{thm:stochapprox-quadratic}, which provides a good characterization of the empirical convergence rates.
    Moreover, the theoretical conditions for convergence in Theorem~\ref{thm:stochapprox-quadratic} depicted by the red-contour are tight.
}
\label{fig:least-squares-stochastic-approximation}
\end{figure*}

To provide additional experiments illustrating the relationship between empirical observations and the theory developed in Section~\ref{sec:stochastic-approximation} for the stochastic approximation setting, we conduct additional experiments on randomly-generated least-squares problems.
We generate the least-squares problem using the approach described in~\citep{lenard1984randomly}.
Visualizations are shown in Figure~\ref{fig:least-squares-stochastic-approximation}.

We run the \NASG method on least-squares regression problems with various condition numbers $\cond$. The objectives $f$ correspond to randomly generated least squares problems, consisting of 500 data samples with 10 features each.
Stochastic gradients are sampled by adding zero-mean Gaussian noise, with standard-deviation $\sigma = 0.25$, to the true gradient.
The left plots in each sub-figure depict theoretical predictions from Theorem~\ref{thm:stochapprox-quadratic}, while the right plots in each sub-figure depict empirical results.
Each pixel corresponds to an independent run of the \NASG method for a specific choice of constant step-size and momentum parameters.
In all figures, the area enclosed by the red contour depicts the theoretical stability region from Theorem~\ref{thm:stochapprox-quadratic} for which $\rho(\alpha, \beta) < 1$.

Figures~\ref{sub-fig:least-squares-sa-1}/\ref{sub-fig:least-squares-sa-3}/\ref{sub-fig:least-squares-sa-5} showcase the coefficient multiplying the variance term, which is taken to be $\frac{\alpha^2((1 + \beta)^2 + 1)}{1 - \rho(\alpha, \beta)^2}$ in theory.
Brighter regions correspond to smaller coefficients, while darker regions correspond to larger coefficients.
All sets of figures (theoretical and empirical) use the same color scale.
We can see that the coefficient of the variance term in Theorem~\ref{thm:stochapprox-quadratic} provides a good characterization of the magnitude of the neighbourhood of convergence.
The constant $C_\epsilon$ is approximated as $1+ (1-\rho(\alpha, \beta)^2)(\varrho(\alpha, \beta)^2 - \rho(\alpha, \beta)^2)$, where $\varrho(\alpha, \beta)$ is defined as the largest singular value of $A$ in~\eqref{eq:stochapprox-quadratic-A}, and $\rho(\alpha,\beta)$ is the largest eigenvalue of $A$.

Figures.~\ref{sub-fig:least-squares-sa-2}/\ref{sub-fig:least-squares-sa-4}/\ref{sub-fig:least-squares-sa-6} showcase the linear convergence rate in theory and in practice.
Brighter regions correspond to faster rates, and darker regions correspond to slower rates.
Again, all figures (theoretical and empirical) use the same color scale.
We can see that the theoretical linear convergence rates in Theorem~\ref{thm:stochapprox-quadratic} provide a good characterization of the empirical convergence rates.
Moreover, the theoretical conditions for convergence in Theorem~\ref{thm:stochapprox-quadratic} depicted by the red-contour appear to be tight.

\subsection{Multinomial Logistic Regression}

Next we conduct experiments on $\ell_2$ regularized multinomial logistic regression problems with additive Gaussian noise, to examine whether the \NASG method still achieves acceleration over \SGD for these problems in the stochastic approximation setting, as is predicted by the theory in Section~\ref{sec:stochastic-approximation}.
These problems are smooth and strongly-convex, but non-quadratic.
Tight estimates of the smoothness constant $L$ and the modulus of strong-convexity $\mu$ cannot be computed definitively since the eigenvalues of the Hessian vary throughout the parameter space.

We randomly generate multi-class classification problems consisting of 5 classes and 100 data samples with 10 features each, only five of which are discriminative.
We create one data cluster per class, and vary the cluster separation and regularization parameter to vary the condition number $Q$.
For reporting purposes, we estimate the condition number $Q$ during training by evaluating the eigenvalues of the Hessian at each iteration.
The smoothness constant $L$ is taken to be the maximum eigenvalue seen during training, and the modulus of strong-convexity $\mu$ is taken to be the minimum eigenvalue seen during training.
We use the \texttt{make\_classification()} function in scikit-learn~\citep{scikit-learn} to generate random classification problem instances.
\begin{figure*}[!t]
\centering
\subfloat[.33\textwidth][$Q=30$]{
    \includegraphics[width=.33\textwidth]{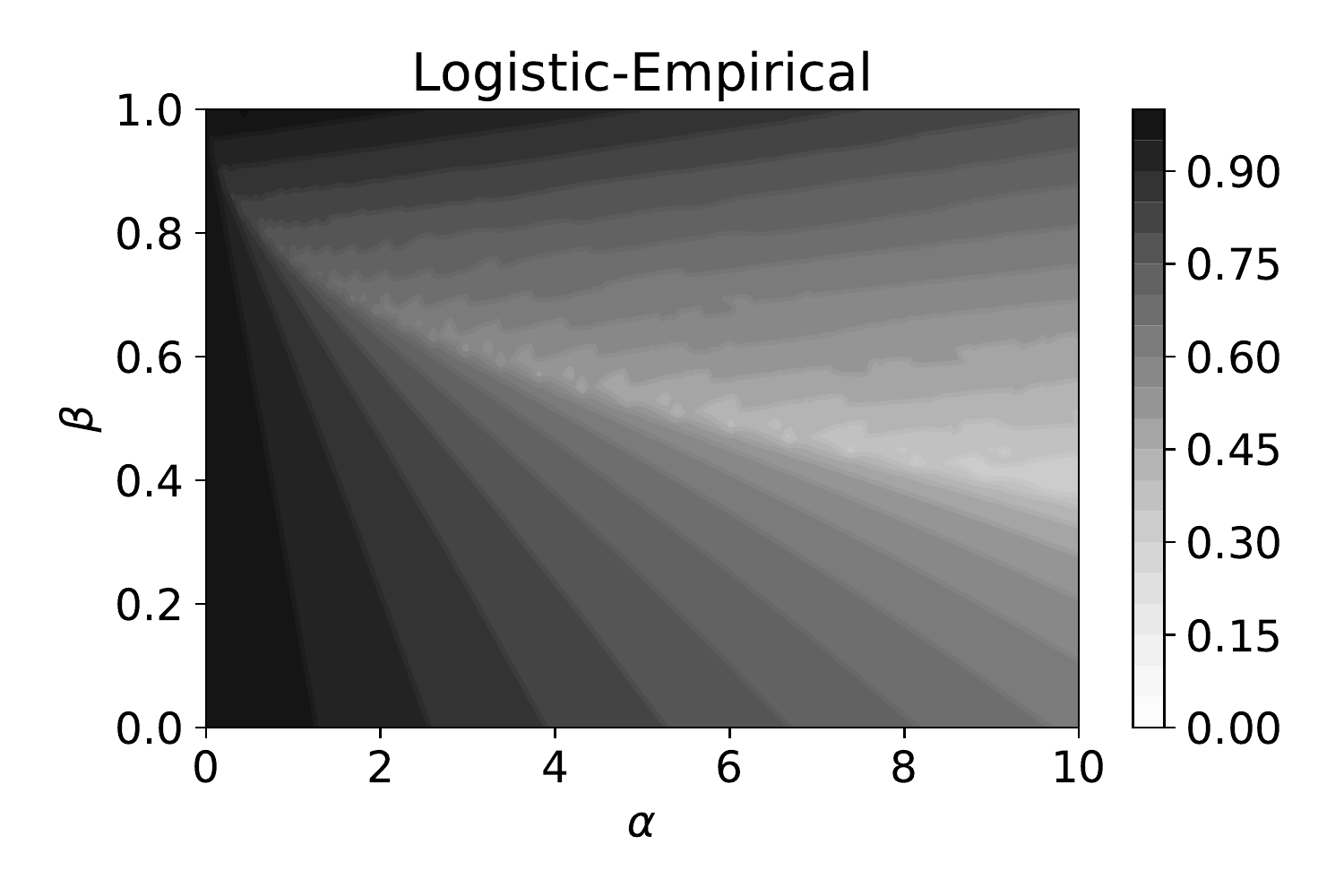}
    \label{sub-fig:logistic-sa-1}}
\subfloat[.33\textwidth][$Q=45$]{
    \includegraphics[width=.33\textwidth]{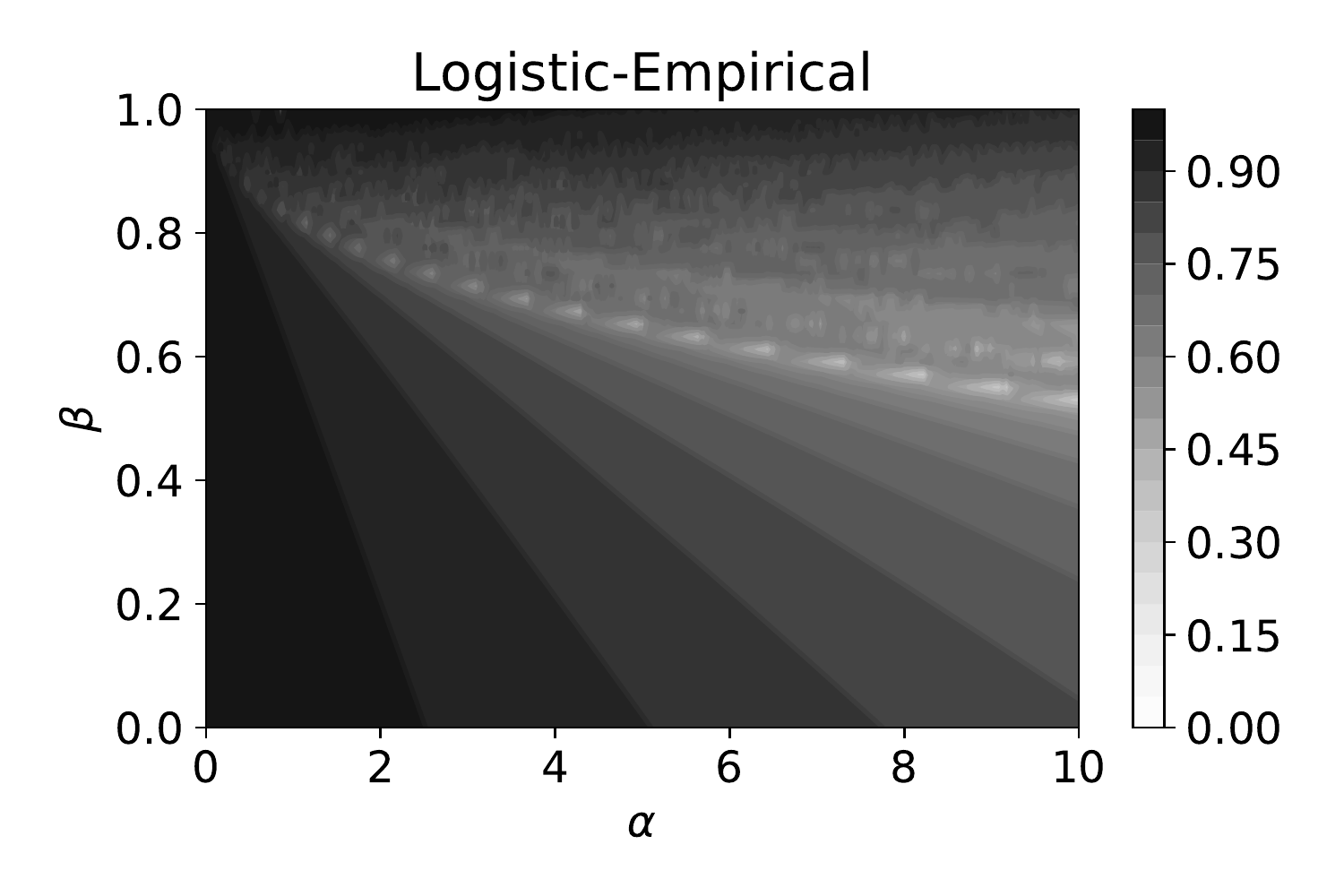}
    \label{sub-fig:logistic-sa-2}}
\subfloat[.33\textwidth][$Q=60$]{
    \includegraphics[width=.33\textwidth]{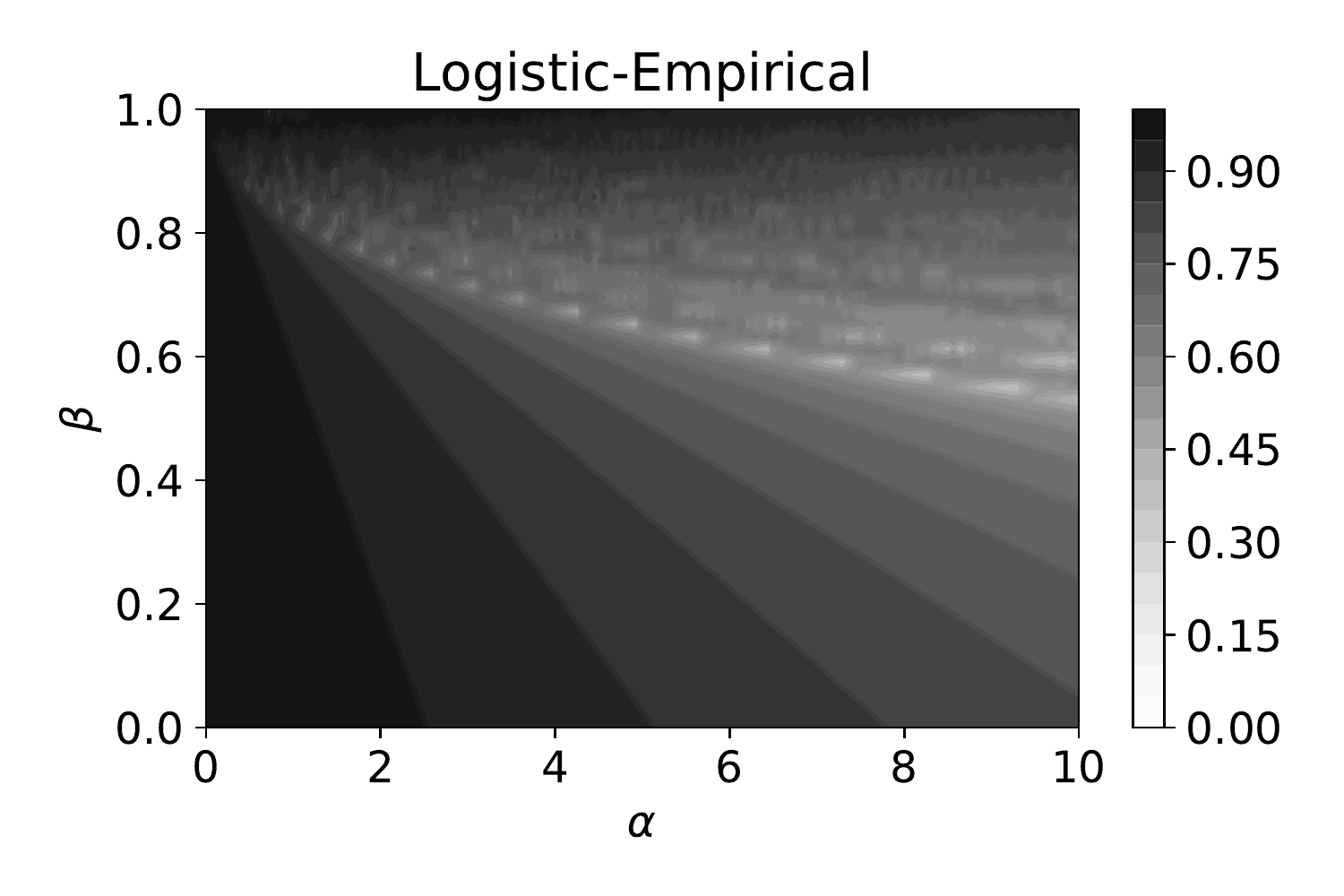}
    \label{sub-fig:logistic-sa-3}}
\caption{
    Visualizing the convergence rate for the \NASG method (momentum $\beta > 0$) and the \SGD method (momentum $\beta = 0$), for various randomly generated $\ell_2$ regularized multinomial logistic-regression problems.
    Multi-class classification problems consist of 5 classes and 100 data samples with 10 features each, only 5 of which are are discriminative.
    We create one data-cluster per class, and vary the cluster separation and regularization parameter to vary the condition number $Q$.
    For reporting purposes, we estimate the condition number $Q$ during training by evaluating the eigenvalues of the Hessian at each iteration.
    The smoothness constant $L$ is taken to be the maximum eigenvalue seen during training, and the modulus of strong-convexity $\mu$ is taken to be the minimum eigenvalue seen during training.
    The faster convergence rates (brighter regions) correspond to $\beta > 0$, indicating that the \NASG method provides acceleration over \SGD in this stochastic approximation setting.
    Moreover, for a given step-size, the contrast between the brighter regions ($\beta > 0$) and darker regions ($\beta = 0$) increases as the condition number grows, supporting theoretical findings that the convergence rate of the \NASG method exhibits a better dependence on the condition number than \SGD. 
}
\label{fig:logistic-stochastic-approximation}
\end{figure*}

Visualizations are provided in Figure~\ref{fig:logistic-stochastic-approximation}. Each pixel corresponds to an independent run of the \NASG method for a specific choice of constant step-size and momentum parameters.
Pixel intensities denote the linear convergence rates observed in practice. Brighter regions correspond to faster rates, and darker regions correspond to slower rates.

The parameter setting $\beta$ equals 0 corresponds to \SGD, and the parameter setting $\beta > 0$ corresponds to the \NASG method.
The faster convergence rates (brighter regions) correspond to $\beta > 0$, indicating that the \NASG method provides acceleration over \SGD in this stochastic approximation setting.
Moreover, for a given step-size, the contrast between the brighter regions ($\beta > 0$) and darker regions ($\beta = 0$) increases as the condition number grows, supporting theoretical findings that the convergence rate of the \NASG method exhibits a better dependence on the condition number than \SGD. 
\end{document}